\def\thetitle{Tractable Diversity: \\ Scalable Multiperspective Ontology Management via Standpoint $\EL$}
\def\theauthors{%
	\mbox{Lucía Gómez Álvarez} and \mbox{Sebastian Rudolph} and \mbox{Hannes Strass}
}
\def\theaffiliation{Computational Logic Group, TU~Dresden, Germany}
\newcommand{\email}[1]{\href{mailto:#1}{\texttt{#1}}}
\def\theemails{%
	\email{\{lucia.gomez\_alvarez},
	\email{sebastian.rudolph},
	\email{hannes.strass\}@tu-dresden.de}%
}
\setlist{leftmargin=*,noitemsep,parsep=0.2ex,topsep=0.3ex}
\newcommand{\shadydiamond}{\mbox{{\rotatebox[origin=c]{45}{\small\ding{113}}}}}
\newtheorem{theorem}{Theorem}
\newtheorem{lemma}[theorem]{Lemma}
\newtheorem{proposition}[theorem]{Proposition}
\newtheorem{numberclaim}{Claim}
\newtheorem{definition}{Definition}
\newtheorem{example}{Example}
\theoremstyle{nonumberplain}
\theoremstyle{nonumberplain}
\newtheorem{proofsketch}{Proof sketch} 
\theoremstyle{nonumberplain}
\newtheorem{claimproof}{Proof of the claim}
\newif\iflong
\newcommand{\FormatArithmeticComplexityClass}[1]{\ensuremath{\textsc{#1}}\xspace}
\newcommand{\PTime}{\FormatArithmeticComplexityClass{PTime}}
\newcommand{\NP}{\FormatArithmeticComplexityClass{NP}}
\newcommand{\ExpTime}{\FormatArithmeticComplexityClass{ExpTime}}
\newcommand{\NExpTime}{\FormatArithmeticComplexityClass{NExpTime}}
\newcommand{\TwoExpTime}{\mbox{\sc{2\ExpTime}}\xspace}
\newcommand{\coNP}{\mbox{\sc{co\NP}}\xspace}
\newcommand{\pred}[1]{\mbox{\small\tt {#1}}}
\newcommand{\spform}[1]{\mathsf{#1}}
\newcommand{\spsub}[2]{\spform{#1}_{#2}}
\newcommand{\mathcom}[3]{ \newcommand{#1}[#2]{\mbox{$#3$}}}
\mathcom{\imp}{0}{\ \rightarrow\ }            
\mathcom{\rimp}{0}{\ \leftarrow\ }            
\mathcom{\con}{0}{\ \wedge\ }                 
\mathcom{\dis}{0}{\ \vee\ }                   
\mathcom{\n}{0}{\neg}                     
\mathcom{\dimp}{0}{\ \leftrightarrow\ }       
\mathcom{\corresponds}{0}{\ \Lleftarrow\! \! \Rrightarrow\ }
\mathcom{\A}{0}{\forall}                  
\mathcom{\E}{0}{\exists}     
\def\Box{\mathop\square}
\def\Diamond{\mathop\lozenge}
\mathcom{\tuple}{1}{\left\langle #1 \right\rangle}
\mathcom{\stuple}{1}{\langle #1 \rangle}
\def\eqdef{\mathrel{\ =_{\mbox{\em \tiny def}}\ }}
\def\sp{\hbox{$\spform{s'}$}\xspace}
\def\st{\hbox{$\spform{s}$}\xspace}
\def\star{\hbox{$*$}\xspace}
\def\pr{\pi}
\def\Precs{\Pi}
\newcommand{\standb}[1]{\mathord{\Box\nolimits_{\spform{#1}}}}
\newcommand{\standd}[1]{\mathord{\Diamond\nolimits_{\spform{#1}}}}
\newcommand{\standdsub}[2]{\mathord{\Diamond\nolimits_{\spsub{#1}{#2}}}}
\def\standv#1{\mathord{\mathop{\odot}\nolimits_{\spform{#1}}}}
\def\standbx#1{\mathord{\Box\nolimits_{\scaleobj{0.8}{\spform{#1}}}}}
\def\standdx#1{\mathord{\Diamond\nolimits_{\scaleobj{0.8}{\spform{#1}}}}}
\def\standbs{\standb{s}}
\def\standbu{\standb{u}}
\def\standbv{\standb{v}}
\def\standbsp{\standb{s'}}
\def\standds{\standd{s}}
\def\standdu{\standd{u}}
\def\standdvo{\standdsub{v}{0}}
\def\standdvi{\standdsub{v}{1}}
\def\standdsp{\standd{s'}}
\def\standvs{\standv{s}}
\def\standvu{\standv{u}}
\def\standvsp{\standv{s'}}
\def\standball{\standb{*}}
\def\standdall{\standd{*}}
\def\f\xspacestandtopre{\hbox{$\sigma\,$}\xspace}
\def\fpretov\xspacealue{\hbox{$\delta\,$}\xspace}
\def\ModSat#1||-#2{#1\models #2}
\def\NotModSat#1||-#2{#1\nvDash #2}
\newcommand{\define}[1]{\emph{#1}}
\newcommand{\N}{\mathbb{N}}
\newcommand{\set}[1]{\left\{#1\right\}}
\newcommand{\guard}{\;\middle\vert\;}
\newcommand{\size}[1]{\left\lVert#1\right\rVert}
\renewcommand{\eqdef}{\mathrel{\,:=\,}}
\newcommand{\iffdef}{\mathrel{\;:\mathrel{\mkern-6mu}{\Longleftrightarrow}\;}}
\newcommand{\pheq}{\mathrel{\phantom{=}}}
\newcommand{\phiff}{\mathrel{\phantom{\iff}}}
\newcommand{\phimplies}{\mathrel{\phantom{\implies}}}
\newcommand{\ebnfeq}{\mathrel{::=}}
\newcommand{\ebnfalt}{\mathrel{\;\mid\;}}
\renewcommand{\land}{\mathrel{\wedge}}
\renewcommand{\S}{\mathcal{S}}
\def\Stands{N_{\mathsf{S}}}
\def\Concepts{N_{\mathsf{C}}}
\def\Roles{N_{\mathsf{R}}}
\def\Individuals{N_{\mathsf{I}}}
\def\sts{\spform{s}} 
\def\stsp{\spform{s'}} 
\def\stu{\spform{u}} 
\def\stv{\spform{v}} 
\def\stvb{\spsub{v}{b}} 
\def\stvo{\spsub{v}{0}} 
\def\stvi{\spsub{v}{1}} 
\def\prvo{\pr'_{\stvo}}
\def\prvi{\pr'_{\stvi}}
\def\prvb{\pr'_{\stvb}}
\def\prvy{\pr'_{\spsub{v}{y}}}
\def\E{\mathcal{E}}
\def\struct{\mathcal{I}}
\def\intf{\cdot^{\struct}}
\def\Dom{\Delta}
\def\de{\delta}
\def\ve{\varepsilon}
\def\ze{\zeta}
\newcommand{\interpret}[2]{#1^{#2}}
\newcommand{\interprets}[1]{\interpret{#1}{\struct}}
\newcommand{\interpretgp}[1]{\interpret{#1}{\gamma(\pr)}}
\newcommand{\interpretgps}[1]{\interpret{#1}{\gamma(\pr_*)}}
\newcommand{\interpretpgp}[1]{\interpret{#1}{\gamma'(\pr)}}
\newcommand{\interpretpgpp}[1]{\interpret{#1}{\gamma'(\pr')}}
\newcommand{\interpretpgppp}[1]{\interpret{#1}{\gamma'(\pr'')}}
\def\dland{\sqcap}
\def\dlsub{\sqsubseteq}
\newcommand{\T}{\mathcal{T}}
\renewcommand{\S}{\mathcal{S}}
\newcommand{\C}{\mathcal{C}}
\newcommand{\CT}{\mathcal{C}_\T}
\newcommand{\CTp}{\mathcal{C}_{\T'}}
\newcommand{\CTpp}{\mathcal{C}_{\T''}}
\renewcommand{\A}{\mathcal{A}}
\newcommand{\K}{\mathcal{K}}
\newcommand{\dlstruct}{\mathfrak{D}}
\def\EL{\ensuremath{\mathcal{E\!L}}\xspace}
\newcommand{\clause}{\mathbf{C}}
\newcommand{\goesto}{\mathbin{\qquad\longrightarrow\qquad}}
\newenvironment{firstexample}[2]{
\ifthenelse{\equal{#2}{}}{\begin{example}}{\begin{example}[#2]}
    \label{#1}
    }{
  \end{example}
}
\newenvironment{cexample}[1]{
  \begin{example}[Continued from \Cref{#1}]
    }{
  \end{example}
}
\newcommand{\SROIQbs}{\ensuremath{\mathcal{SROIQ}b_s}\xspace}
\newcommand{\kb}{\text{\rm{KB}}} 
\newcommand{\KB}{\text{$\mathcal{K}$}\xspace}
\def\kb{\mathcal{K}}
\def\SEL{\mathbb{S}_{\EL}}
\def\SC{\mathsf{ST}_{\kb}}
\def\forC{\mathsf{SF}_{\kb}}
\def\SFCT{\mathsf{SC}_{\T}}
\def\sub{\mathsf{sub}}
\def\objC{\mathsf{IN}_{\kb}}
\def\BCC{\mathsf{BC}_{\kb}}
\def\CC{\mathsf{C}_{\kb}}
\def\int{^{\mathcal{I}}}
\def\qelem{\varepsilon}
\def\qelemg{g}
\newcommand{\cgfont}[1]{\mathscr{#1}}
\def\consisfq{\cgfont{S}^{q}}
\def\consisf{\cgfont{S}}
\def\consisfu{\cgfont{S}}
\newcommand{\consis}[1]{\cgfont{S}(#1)}
\newcommand{\consisu}[1]{\cgfont{S}(#1)}
\newcommand{\consisq}[1]{{\cgfont{S}^{q}}(#1)}
\newcommand{\elabel}[1]{\cgfont{L}(#1)}
\newcommand{\elabelq}[1]{\cgfont{L}^{q}(#1)}
\def\elabelf{\cgfont{L}}
\def\runsf{\Upgamma}
\def\rolesf{\cgfont{R}}
\def\initialcs{S_0^{\kb}}
\newcommand{\stlabel}[2]{\mathsf{st}_{#1}(#2)}
\def\crel{\mu}
\newcommand{\cfunc}[2]{\crel_{#1}(#2)}
\newcommand{\cfuncf}[1]{\crel_{#1}}
\newcommand{\col}{{\hspace{1.5pt}:\hspace{2pt}}}
\title{\thetitle}
\author{
\theauthors
\affiliations
\theaffiliation
\emails
\theemails
}
\begin{document}

\maketitle


\newenvironment{aligna}{\linenomathNonumbers\begin{align*}}{\end{align*}\endlinenomath}

\begin{abstract}
    The tractability of the lightweight description logic
    $\EL$ has allowed for the construction of large and widely used ontologies that support semantic interoperability.
    However, comprehensive domains with a broad user base are often at odds with strong axiomatisations otherwise useful for inferencing, since these are usually context dependent and subject to diverging perspectives.

    In this paper we introduce \emph{Standpoint $\EL$}, a multi-modal extension of $\EL$ that allows for the integrated representation of domain knowledge relative to diverse, possibly conflicting \emph{standpoints} (or contexts), which can be hierarchically organised and put in relation with each other. We establish that \emph{Stand\-point $\EL$} still exhibits $\EL$'s favourable \PTime standard reasoning, whereas introducing additional features like empty standpoints, rigid roles, and no-\\minals makes standard reasoning tasks 
    intractable.
\end{abstract}

\section{Introduction}

In many subfields of artificial intelligence, ontologies are used to provide a formal representation of a shared vocabulary, give meaning to its terms, and describe the relations between them.
To this end, one of the most prominent and successful class of logic-based knowledge representation formalisms are \emph{description logics} 
(DLs) \cite{baader_horrocks_lutz_sattler_2017,Rudolph11}, which provide the formal basis for most recent version of the Web Ontology Language OWL~2 \cite{owl2-overview}.

Among the most widely used families of DLs used today is $\EL$ \cite{Baader05ELenvelope}, which is the formal basis of OWL~2~EL \cite{owl2-profiles}, a popular tractable profile of OWL~2.
One of the main appeals of $\EL$ is that basic reasoning tasks can be performed in polynomial time with respect to the size of the ontology, enabling reasoning-supported creation and maintenance of very large ontologies. An example of this is the healthcare ontology SNOMED CT \cite{donnelly2006snomed}, with worldwide adoption 
and a broad user base comprising clinicians, patients, and researchers.

However, when modelling comprehensive ontologies like SNOMED CT, one is usually facing issues related to context or perspective-dependent knowledge as well as ambiguity of language \cite{schulz2017lexical}. 
For instance, the concept $\pred{Tumour}$ might denote a process or a piece of tissue; $\pred{Allergy}$ may denote an allergic reaction or just an allergic disposition.

In a similar vein, the decentralised nature of the Semantic Web has led to the generation of various ontologies of overlapping knowledge that inevitably reflect different points of view. For instance, an initiative has attempted to integrate the FMA1140 (Foundational Model of Anatomy), SNOMED-CT, and the NCI (National Cancer Institute Thesaurus) into a single combined version called LargeBio and reported ensuing challenges \cite{Osman2021OntologyIssues}. In this context, frameworks supporting the integrated representation of multiple perspectives seem preferable to recording the distinct views in a detached way, but also to entirely merging them at the risk of causing inconsistencies or unintended consequences.

To this end, \citeauthor{gomez2021standpoint} [\citeyear{gomez2021standpoint}] proposed \emph{standpoint logic}, a formalism inspired by the theory of supervaluationism \cite{Fine1975} and rooted in modal logic, which allows for the simultaneous representation of multiple, potentially contradictory, viewpoints in a unified way and the establishment of alignments between them. This is achieved by extending the base language with labelled modal operators, where
propositions  $\standbx{S}\phi$ and $\standdx{S}\phi$ express information relative to the \emph{standpoint} $\mathsf{S}$ and read, respectively: ``according to $\mathsf{S}$, it is \emph{unequivocal/conceivable} that $\phi$''.
Semantically, standpoints are represented by sets of \emph{precisifications},\footnote{Precisifications are analogous to the \emph{worlds} of modal-logic frameworks with possible-worlds semantics.} such that $\standbx{S}\phi$ and $\standdx{S}\phi$ hold if $\phi$ is true in all/some of the precisifications associated with $\mathsf{S}$. Consider the following example.

\begin{firstexample}{example:fol}{Tumour Disambiguation}

    Two derivatives of the SNOMED CT ontology {\small$(\mathsf{SN})$} model tumours differently. According to {\small$\mathsf{TP}$}, a $\pred{Tumour}$ is a process by which abnormal or damaged cells grow and multiply (\ref{formula:TP-main}), yet according to {\small$\mathsf{TT}$}, a $\pred{Tumour}$ is a lump of tissue (\ref{formula:TT-main}).
    \begin{linenomath*}
        \begin{align}
            \standbx{TP}[\pred{Tumour}          & \sqsubseteq\pred{AbnormalGrowthProcess}]\label{formula:TP-main} \\
            \small   \standbx{TT}[\pred{Tumour} & \sqsubseteq\pred{Tissue}]\label{formula:TT-main}
        \end{align}
    \end{linenomath*}
    \noindent
    Both interpretations inherit the axioms of the original SNOMED CT (\ref{formula:SN-TT-TP-preceqs}) and are such that if according to {\small$\mathsf{SN}$} something is arguably both a $\pred{Tumour}$ and a $\pred{Tissue}$, then it (unequivocally) is a $\pred{Tumour}$ according to {\small$\mathsf{TT}$} (\ref{formula:SN-diamond-to-TT-box}). The respective assertion is made for {\small$\mathsf{TP}$} (\ref{formula:SN-diamond-to-TP-box}). But $\pred{Tissue}$ and $\pred{Process}$ are disjoint categories according to  {\small$\mathsf{SN}$} (\ref{formula:SN-tissue-process-disjoint}).

    \vspace{-2ex}
    \begin{linenomath*}
        {\small
            \begin{align}
                (\mathsf{TP}\preceq\mathsf{SN}) \quad                  & \quad (\mathsf{TT}\preceq\mathsf{SN})\label{formula:SN-TT-TP-preceqs}      \\
                \standdx{SN}[\pred{Tumour}\sqcap\pred{PhysicalObject}] & \sqsubseteq\standbx{TT}[\pred{Tumour}]\label{formula:SN-diamond-to-TT-box} \\
                \standdx{SN}[\pred{Tumour}\sqcap\pred{Process}]        & \sqsubseteq\standbx{TP}[\pred{Tumour}]\label{formula:SN-diamond-to-TP-box} \\
                \standbx{SN}[\pred{Tissue}\sqcap\pred{Process}         & \sqsubseteq\bot]\label{formula:SN-tissue-process-disjoint}
            \end{align}}
    \end{linenomath*}%
    \noindent
    While clearly incompatible, both perspectives are semantically close and we can establish relations between them. For instance, we might assert that something is unequivocally the product of a $\pred{Tumour}$ (process) according to {\small$\mathsf{TP}$} if and only if it is arguably a $\pred{Tumour}$ (tissue) according to {\small$\mathsf{TT}$} (\ref{formula:TT-TP-bridge}).
    Or we may want to specify a subsumption between the classes of unequivocal instances of $\pred{Tissue}$ according to {\small$\mathsf{TT}$} and to   \begin{linenomath*}
        {\small$\mathsf{TP}$} (\ref{formula:TT-TP-tissue-bridge}).
        \begin{align}
            \standbx{TP}[\exists\pred{ProductOf.Tumour}] \equiv \standdx{TT}[\pred{Tumour}]\label{formula:TT-TP-bridge} \\
            \standbx{TT}[\pred{Tissue}]\sqsubseteq\standbx{TP}[\pred{Tissue}]\label{formula:TT-TP-tissue-bridge}
        \end{align}
    \end{linenomath*}
    \noindent
    When recording clinical findings, clinicians may use ambiguous language, so an automated knowledge extraction service may obtain the following from text and annotated scans:
    \begin{linenomath*}
        \begin{align}
            \small \standbx{SN}\! & \small \set{ \pred{Patient}(p1),\,\pred{HasPart}(p1,a),\,\pred{Colon}(a) }\label{formula:instance-patient1}     \\
            \small \standdx{SN}\! & \small \set{ \pred{HasPart}(a,b),\,\pred{Tumour}(b),\,\pred{PhysicalObject}(b) }\label{formula:instance-tumour}
        \end{align}
    \end{linenomath*}
\end{firstexample}


The logical statements {(\ref{formula:TP-main})--(\ref{formula:instance-tumour})}, which formalise \Cref{example:fol} by means of a stand\-point-enhanced \EL description logic, are not inconsistent, so all axioms can be jointly represented. Let us now illustrate the use of standpoint logic for reasoning with and across individual perspectives.

\begin{cexample}{example:fol}
    In this case, we can disambiguate the information given by Axiom (\ref{formula:instance-tumour}) using Axiom (\ref{formula:SN-TT-TP-preceqs}) and Axiom (\ref{formula:SN-diamond-to-TT-box}), which entail that according to $\small\mathsf{TT}$, $b$ is unequivocally a tumour, $\standbx{TT}\pred{Tumour}(b)$, and with Axiom (\ref{formula:TT-main}) also a tissue, $\standbx{TT}\pred{Tissue}(b)$.
    Moreover, we can use the ``bridges'' to switch to another perspective. From Axiom (\ref{formula:TT-TP-tissue-bridge}), it is clear that according to $\small\mathsf{TP}$, $b$ is also a tissue, $\standbx{TP}\pred{Tissue}(b)$, and from Axiom (\ref{formula:TT-TP-bridge}) $b$ is the product of a tumour, $\standbx{TP}\exists\pred{ProductOf.Tumour}(b)$. Then Axiom (\ref{formula:TP-main}) yields
    \begin{linenomath*}
        $$\standbx{TP}\exists\pred{ProductOf}.(\pred{Tumour}\sqcap \pred{AbnormalGrowthProcess})(b).$$
    \end{linenomath*}
    The statement $\standbx{SN}[\pred{Tumour}\sqcap\pred{Process}](d)$, in contrast, will trigger an inconsistency thanks to Axiom (\ref{formula:SN-tissue-process-disjoint}), which prevents the evaluation of $\pred{Tumour}$ simultaneously as a $\pred{Tissue}$ and a $\pred{Process}$ and Axiom (\ref{formula:TT-main}), which states that according to some interpretations, a $\pred{Tumour}$ is a $\pred{Tissue}$.
    Finally, a user (e.g.\ a specific clinic, $\small\mathsf{CL}$) may inherit the SNOMED CT $(\small\mathsf{CL}\preceq\mathsf{SN})$ and establish further axioms, e.g,
    \begin{linenomath*}
        \begin{align*}
            \standbx{CL}[\pred{Patient}\sqcap\exists\pred{HasPart.}(\pred{Colon}\sqcap\standdx{SN}\exists\pred{HasPart.}\pred{Tumour})\sqsubseteq & \\
            \exists\pred{AssociatedWith.ColonCancerRisk}],                                                                                        &
        \end{align*}
    \end{linenomath*}
    \noindent to identify patients with cancer risk.
    Here, one can infer with Ax.~(\ref{formula:instance-patient1}) that $\standbx{CL}\exists\pred{AssociatedWith.ColonCancerRisk}(p1)$.
\end{cexample}

The need of handling multiple perspectives in the Semantic Web has led to several (non-modal) logic-based pro\-posals.
The closest regarding goals are multi-viewpoint ontologies \cite{Hemam2011MVP-OWL:Web,Hemam2018Probabilistic}, which model the intuition of viewpoints in a tailored extension of OWL for which no complexity bounds are given.
Similar problems are also addressed in the more extensive work on contextuality (e.g.\ C-OWL and Distributed ontologies \cite{Bouquet2003C-OWL:Ontologies,Borgida2003DistributedSources} and the Contextualised Knowledge Repository (CKR) \cite{SERAFINI201264}).
These frameworks focus on contextual and distributed reasoning and range between different levels of expressivity for modelling the structure of contexts and the bridges between them.
In the context of scalable reasoning, one should highlight the implementations that provide support for OWL2-RL based CKR defeasible reasoning \cite{BOZZATO201872}.

As for modal logics, their suitability to model perspectives and contexts in a natural way is obvious \cite{Klarman2016DescriptionContext,gomez2021standpoint}, they are well-known in the community and their semantics is well-understood. Yet, the interplay between DL constructs and modalities is often not well-behaved and can easily endanger the decidability of reasoning tasks or increase their complexity \cite{baader1995multi,mosurovic1999complexity,WolterMultiDimensionalDescriptionLogics}. Notable examples are $\NExpTime$-completeness of the multi-modal description logic $\mathbf{K}_\mathcal{ALC}$ \cite{LutzSWZ02} 
and $\TwoExpTime$-completeness of $\mathcal{ALC}_{\mathcal{ALC}}$ \cite{Klarman2016DescriptionContext}, a modal contextual logic framework in the style proposed by McCarthy \cite{mccarthy1997context}.

In this work, we focus on the framework of \emph{standpoint logics} \cite{gomez2021standpoint}, 
which are modal logics, too, but come with a simplified Kripke semantics. 
Recently, \citeauthor{gomez-alvarez22howtoagree}~[\citeyear{gomez-alvarez22howtoagree}] introduced \emph{First-Order} Standpoint Logic (FOSL) and showed favourable complexity results for its \emph{sentential} fragments,\footnote{This includes the sentential standpoint variant of the expressive DL $\SROIQbs$, a logical basis of OWL~2~DL \cite{owl2-semantics}.} which disallow modal operators being applied to formulas with free variables.
In particular, adding sentential standpoints does not increase the complexity for fragments that are \emph{$\NP$-hard}.
Yet, a fine-grained terminological alignment between different perspectives requires concepts preceded by modal operators, as in Axiom~(\ref{formula:TT-TP-bridge}), leading to non-sentential fragments of FOSL.

Our paper is structured as follows.
After introducing the syntax and semantics of Standpoint $\EL$ ($\SEL$) and a suitable normal form (\Cref{sec:syntax-semantics}), we establish our main result: satisfiability checking in $\SEL$ is $\PTime$.
We show this by providing a worst-case optimal tableau-based algorithm (\Cref{sec:tableau}) that takes inspiration in the \emph{quasi-model} based methods \cite{Wolter98SatisfiabilityDescriptionLogicsModalOperators} as used for $K_\mathcal{ALC}$ \cite{LutzSWZ02}, but differs in its specifics.
Our approach builds a \emph{quasi-model} from a graph of \emph{(quasi) domain elements}, which are annotated with various constraints, to then reconstruct the worlds or, in our case, precisifications.
We also show that introducing additional features such as empty standpoints, rigid roles, and nominals make standard reasoning tasks intractable (\Cref{sec:intractable-extensions}).
In \Cref{sec:conclusion}, we conclude the paper with a discussion of future work, including efficient approaches for reasoner implementations.
Altogether, this paper provides a clear pathway for making scalable multiperspective ontology management possible.

An extended version of the paper with proofs of all results is available as a technical appendix.

\section{Syntax, Semantics, and Normalisation}\label{sec:syntax-semantics}

We now introduce syntax and semantics of Standpoint $\EL$ (referred to as $\SEL$) and propose a normal form that is useful for subsequent algorithmic considerations.

\subsubsection{Syntax}

A \emph{Standpoint DL vocabulary} is a traditional DL vocabulary consisting of
sets $\Concepts$ of \define{concept names},
$\Roles$ of \define{role names}, and
$\Individuals$ of \define{individual names}, extended it by an additional set $\Stands$ of \define{standpoint names} with \mbox{$\star\in\Stands$}.
A \emph{standpoint operator} is of the form  $\standds$ (``diamond'') or $\standbs$ (``box'') with \mbox{$\sts\in\Stands$}; we use $\standvs$ to refer to either.
A \define{concept term} is defined via
\begin{linenomath*}    
\[
    C \ebnfeq \top \ebnfalt \bot \ebnfalt A \ebnfalt C_1\dland C_2 \ebnfalt \exists R.C \ebnfalt \standvs[C]
\]
\end{linenomath*}    
where \mbox{$A\in\Concepts$} and \mbox{$R\in\Roles$}.
A \define{general concept inclusion (GCI)} is of the form
\mbox{$\standvs[C \dlsub D]$},
where $C$ and $D$ are concept terms.\footnote{The square brackets $[\ldots]$ indicate the scope of the modality, as the same modalities may be used inside concept terms.}
%
A \define{concept assertion} is of of the form $\standvs[C(a)]$ while
a \define{role assertion} is of the form $\standvs[R(a,b)]$,
where \mbox{$a,b\in\Individuals$}, $C$ is a concept term, and \mbox{$R\in\Roles$}.
A \define{sharpening statement} is of the form
\mbox{$\sts \preceq \sts'$}
where \mbox{$\sts,\sts'\in\Stands$}.

A \define{$\SEL$ knowledge base} is a tuple \mbox{$\K=\tuple{\S,\T,\A}$},
where
$\T$ is a set of GCIs, called \emph{TBox};
$\A$ is a set of (concept or role) assertions, called \emph{ABox}; and
$\S$ is a set of sharpening statements, called \emph{SBox}.
We refer to arbitrary statements from $\K$ as \emph{axioms}.
Since the axiom types in $\S$, $\T$, and $\A$ are syntactically well-distinguished%
, we sometimes identify $\K$ as \mbox{$\S \cup \T \cup \A$}.
Note that all axioms except sharpening statements are preceded by modal operators (\emph{``modalised''} for short).
In case the preceding operator happens to be $\standball$, we may omit it.
%

\subsubsection{Semantics}
The semantics of standpoint $\EL$ is defined via standpoint structures.
Given a Standpoint DL vocabulary $\tuple{\Concepts,\Roles,\Individuals,\Stands}$, a \define{description logic standpoint structure} is a tuple
\mbox{$\dlstruct = \tuple{\Dom, \Precs, \sigma, \gamma}$} where:
\begin{itemize}
    \item $\Dom$ is a non-empty set, the \define{domain} of $\dlstruct$;
    \item $\Precs$ is a set, called the \define{precisifications} of $\dlstruct$;
    \item $\sigma$ is a function mapping each standpoint symbol to a non-empty subset of $\Precs$;\footnote{As shown in \Cref{sec:intractable-extensions}, allowing for ``empty standpoints'' immediately incurs  intractability, even for an otherwise empty vocabulary.}
    \item $\gamma$ is a function mapping each precisification from $\Precs$ to an ``ordinary'' DL interpretation \mbox{$\struct=\stuple{\Dom,\intf}$} over the domain $\Dom$, where the interpretation function $\intf$ maps\/:
          \begin{itemize}
              \item each concept name \mbox{$A\in\Concepts$} to a set \mbox{$\interprets{A}\subseteq\Dom$},
              \item each role name \mbox{$R{\,\in\,}\Roles$} to a binary relation \mbox{$\interprets{R}{\,\subseteq\,}\Dom{\times}\Dom$},
              \item each individual name \mbox{$a\in\Individuals$} to an element \mbox{$\interprets{a}\in\Dom$},
          \end{itemize}
          and we require \mbox{$a^{\gamma(\pi)} = a^{\gamma(\pi')}$} for all \mbox{$\pi,\pi' \in \Precs$} and \mbox{$a\in\Individuals$}.
\end{itemize}
Note that by this definition, individual names (also referred to as constants) are interpreted rigidly, i.e., each individual name $a$ is assigned the same \mbox{$a^{\gamma(\pi)} \in \Delta$} across all precisifications \mbox{$\pi \in \Precs$}.
We will refer to this uniform $a^{\gamma(\pi)}$ by $a^{\dlstruct}$.
\pagebreak

For each $\pr\in\Precs$, the interpretation mapping $\struct=\gamma(\pr)$ is extended to concept terms via structural induction as follows:
\begin{linenomath*}    
\begin{align*}
    \interprets{\top}            & \eqdef \Dom       \qquad\qquad\quad      \interprets{(\standds C)} \eqdef \textstyle\bigcup_{\pr'\in\sigma(\sts)}C^{\gamma(\pr')} \\[-0.7ex]
    \interprets{\bot}            & \eqdef \emptyset\ \qquad\qquad\quad      \interprets{(\standbs C)} \eqdef \textstyle\bigcap_{\pr'\in\sigma(\sts)}C^{\gamma(\pr')} \\[-0.7ex]
    \interprets{(C_1\dland C_2)} & \eqdef \interprets{C_1} \cap \interprets{C_2}                                                                                     \\[-0.2ex]
    \interprets{(\exists R.C)}   & \eqdef \set{ \delta\in\Dom \guard \tuple{\delta,\varepsilon}\in\interprets{R} \text{ for some } \varepsilon\in\interprets{C} }
\end{align*}
\end{linenomath*}    
We observe that modalised concepts $\standvs C$ are interpreted uniformly across all precisifications \mbox{$\pi \in \Precs$}, which allows us to denote their extensions with $(\standvs C)^{\dlstruct}$.

A DL standpoint structure $\dlstruct$ \define{satisfies a sharpening statement} \mbox{$\sts\preceq\stsp$}, written as \mbox{$\dlstruct\models\sts\preceq\stsp$}, iff \mbox{$\sigma(\sts)\subseteq\sigma(\stsp)$}.
For the other axiom types, satisfaction by $\dlstruct$ is defined as follows:
\begin{linenomath*}    
\begin{align*}
    \dlstruct & \models \standbs[C{\,\dlsub\,} D] & \iffdef & \interpretgp{C}\subseteq\interpretgp{D} \text{ for each } \pr\in\sigma(\sts)                                                              \\[-0.5ex]
    \dlstruct & \models \standds[C{\,\dlsub\,} D] & \iffdef & \interpretgp{C}\subseteq\interpretgp{D} \text{ for some } \pr\in\sigma(\sts)                                                              \\[-0.5ex]
    \dlstruct & \models\standbs[C(a)]             & \iffdef & a^{\dlstruct}\in\textstyle\bigcap_{\pr\in\sigma(\sts)}\interpretgp{C} \ \textcolor{lightgray}{\left( =  (\standbs C)^{\dlstruct} \right)} \\[-0.5ex]
    \dlstruct & \models\standds[C(a)]             & \iffdef & a^{\dlstruct}\in\textstyle\bigcup_{\pr\in\sigma(\sts)}\interpretgp{C} \ \textcolor{lightgray}{\left( =  (\standds C)^{\dlstruct} \right)} \\[-0.5ex]
    \dlstruct & \models\standbs[R(a,\!b)]         & \iffdef & \stuple{a^{\dlstruct}\!\!,b^{\dlstruct}} \in \textstyle\bigcap_{\pr\in\sigma(\sts)}\interpretgp{R}                                        \\[-0.5ex]
    \dlstruct & \models\standds[R(a,\!b)]         & \iffdef & \stuple{a^{\dlstruct}\!\!,b^{\dlstruct}} \in \textstyle\bigcup_{\pr\in\sigma(\sts)}\interpretgp{R}
\end{align*}
\end{linenomath*}    

As usual,
$\dlstruct$ is a \define{model of $\S$} iff it satisfies every sharpening statement in $\S$;
it is a \define{model of $\T$} iff it satisfies every GCI \mbox{$\tau\in\T$};
it is a \define{model of $\A$} iff it satisfies every assertion \mbox{$\alpha\in\A$};
it is a \define{model of \mbox{$\K=\tuple{\S, \T,\A}$}} (written \mbox{$\dlstruct\models\K$}) iff it is a model of $\S$ and a model of $\T$ and a model of $\A$.

Our investigations regarding reasoning in $\SEL$ will focus on standpoint versions of the well-known standard reasoning tasks, and we will make use of variations of established techniques to (directly or indirectly) reduce all of them to the first.
\begin{description}
     \item[Knowledge base satisfiability:] \textsl{Given a knowledge base $\K$, is there a DL standpoint structure $\dlstruct$ such that $\dlstruct\models\K$?}
     \item[Axiom entailment:]
          \textsl{Given $\K$ and some SBox, TBox, or ABox axiom $\phi$, does $\K\models\phi$ hold, that is, is it the case that for every model $\dlstruct$ of $\K$ we have $\dlstruct\models\phi$?}
          \\
          To show that axiom entailment can be polynomially reduced to knowledge base unsatisfiability, we exhibit for every axiom type $\phi$ a knowledge base $\kb_{\neg \phi}$ such that $\kb \models \phi$ coincides with unsatisfiability of $\kb \cup \kb_{\neg \phi}$:
    \begin{linenomath*}    
          \begin{align*}
               \kb_{\st \preceq \sp}           & \eqdef \{\standds[\top \sqsubseteq \tilde A],\  \standbsp[\tilde A \sqsubseteq \bot ]\}                                               \\
               \kb_{\standvs[C \sqsubseteq D]} & \eqdef \{ \tilde A \sqsubseteq C,\ \tilde A \sqcap D \sqsubseteq \bot, \standvs^{\!\!d}[\top \sqsubseteq \exists \tilde R.\tilde A]\} \\
               \kb_{\standvs[C(a)]}            & \eqdef \{ \tilde A \sqcap C \sqsubseteq \bot,\ \standvs^{\!\!d}[\tilde A(a)]  \}                                                      \\
               \kb_{\standvs[R(a,b)]}          & \eqdef \{ \tilde B(b),\ \tilde A \sqcap \exists R.\tilde B \sqsubseteq \bot, \standvs^{\!\!d}[\tilde A(a)] \}
          \end{align*}
    \end{linenomath*}    
          Thereby, \mbox{$\standbs^{\!\!d} \eqdef \standds$} and \mbox{$\standds^{\!\!d} \eqdef \standbs$}, and \mbox{$\tilde A, \tilde B$} denote fresh concept names and $\tilde R$ a fresh role name.
     \item[Concept satisfiability (w.r.t.\ $\kb$):]
          \textsl{Given $\K$ and a mo\-da\-lised concept term $C$, is there a model $\dlstruct$ of $\K$ with \mbox{$C^\dlstruct \neq \emptyset$}?}
          \\
          This task can be solved by checking the axiom entailment \mbox{$\kb \models \standball[ C \sqsubseteq \bot ]$}.
          If the entailment holds, then $C$ is unsatisfiable w.r.t.\ $\kb$, otherwise it is satisfiable.
     \item[Instance retrieval:]
          \textsl{Given $\K$ and a mo\-da\-lised concept term $C$, obtain all \mbox{$a\in \Individuals$} with \mbox{$a^\dlstruct \in C^\dlstruct$} for every model $\dlstruct$ of $\K$.}
          \\
          This task can be solved by checking, for all individuals $a$, if the entailment \mbox{$\kb \models \standball[ C(a) ]$} holds and returning all such $a$.

\end{description}
%
%
\begin{longonly}
     For boxed GCIs, the latter problem can be reduced to the former as follows:
     \begin{align*}
           & \mathrel{\phantom{\iff}} \tuple{\S,\T,\A}\not\models\standbs[C\dlsub D]                                                                                                                                    \\
           & \iff \text{for some } \dlstruct\models\tuple{\S,\T,\A} \text{ we have } \dlstruct\not\models\standbs[C\dlsub D]                                                                                            \\
           & \iff \text{for some } \dlstruct\models\tuple{\S,\T,\A} \text{ there is a } \pr\in\sigma(\sts) \text{ such that } \interpretgp{C}\not\subseteq\interpretgp{D}                                               \\
           & \iff \text{for some } \dlstruct\models\tuple{\S,\T,\A} \text{ there is a } \pr\in\sigma(\sts) \text{ and a } \delta\in\Dom \text{ with } \delta\in\interpretgp{C} \text{ but } \delta\notin\interpretgp{D} \\
           & \iff \tuple{\C\cup\set{\standds[C_a\dland D\dlsub \bot]},\A\cup\set{a:C, a:C_a}} \text{ is satisfiable}
     \end{align*}
     where $a\in\Individuals$ and $C_a\in\Concepts$ are fresh symbols not occurring in $\K$.

     Dually, for “diamonded” GCIs we can use the following:
     \begin{align*}
           & \mathrel{\phantom{\iff}} \tuple{\S,\T,\A}\not\models\standds[C\dlsub D]                                                                                                       \\
           & \iff \text{for some } \dlstruct\models\tuple{\S,\T,\A} \text{ we have } \dlstruct\not\models\standds[C\dlsub D]                                                               \\
           & \iff \text{for some } \dlstruct\models\tuple{\S,\T,\A} \text{ we find that } \pr\in\sigma(\sts) \text{ implies } \interpretgp{C}\not\subseteq\interpretgp{D}                  \\
           & \iff \text{for some } \dlstruct\models\tuple{\S,\T,\A} \text{ we find that for each } \pr\in\sigma(\sts) \text{ there is a } \delta\in\interpretgp{C}\setminus\interpretgp{D} \\
           & \iff \tuple{\C\cup\set{\top\dlsub \exists R.C', \standbs[C'\dlsub C], \standbs[C'\dland D\dlsub \bot]},\A} \text{ is satisfiable}
     \end{align*}
     where $C'\in\Concepts$ and $R\in\Roles$ are fresh symbols not occurring in $\K$.
\end{longonly}




\subsubsection{Normalisation}
\begin{shortonly}

    Before we can describe a $\PTime$ algorithm for checking satisfiability of $\SEL$ knowledge bases, we need to introduce an appropriate normal form.
    %

    \begin{definition}[Normal Form of $\SEL$ Knowledge Bases]
        \label{def:normal-form}
        A TBox $\T$ is in \define{normal form} iff, for all its GCIs
        \(
        \standbs [ C \sqsubseteq  D ],
        \)\linebreak
        $C$ is of the form $A$, $\exists R.A$ or \mbox{$A \dland A'$} with \mbox{$A,\!A'{\,\in\,}\Concepts {\,\cup\,} \{\top\}$} and
        $D$ is of the form $B$,  $\exists R.B$, $\standdsp B$ or $\standbsp B$ with $B{\,\in\,}\Concepts{\,\cup\,}\{\bot\}$,
        where \mbox{$R\in \Roles$}, and \mbox{$\st,\sp\in\Stands$}.
        \\
        An ABox $\A$ is in \define{normal form} iff all assertions have the form $\standbs[A(a)]$ or $\standbs[R(a,b)]$ for $a,\!b{\,\in\,}\Individuals$, $A{\,\in\,}\Concepts$, and $R{\,\in\,}\Roles$.
        \\
        $\K=\tuple{\S,\T,\A}$ is in \define{normal form} whenever $\T$ and $\A$ are.
    \end{definition}

    \renewcommand{\goesto}{\rightarrow}
    For a given $\SEL$ knowledge base \mbox{$\K=\tuple{\S,\T,\A}$}, we can compute its normal form by exhaustively applying the following transformation rules (where ``rule application'' means that the axiom on the left-hand side is replaced with the set of axioms on the right-hand side):

    \vspace{-2ex}\label{norm:rules}
    \begin{linenomath*}
        {\small
            \begin{align}
                \standds [ C(a) ]                      & \goesto \set{ \stv\preceq\sts, \standbv [ C(a) ] } \label{norm:diamond-concept-assertion}                            \\
                \standds [ R(a,b) ]                    & \goesto \set{ \stv\preceq\sts, \standbv [ R(a,b) ] } \label{norm:diamond-role-assertion}                             \\
                \standds [ C \dlsub D ]                & \goesto \set{ \stv\preceq\sts, \standbv [ C\dlsub D ] } \label{norm:diamond-axiom}                                   \\
                \standbs [ \bar C(a) ]                 & \goesto \set{ \standbs [ A(a) ], \standbs [ A \dlsub \bar C ]} \label{norm:complex-concept-assertion}                \\
                \standbs [ B \dlsub \exists R.\bar C ] & \goesto \set{ \standbs [ B \dlsub \exists R. A ], \standbs [ A \dlsub \bar C ]} \label{norm:existential-right}       \\
                \standbs [ B \dlsub C \dland D ]       & \goesto \set{ \standbs [ B\dlsub A], \standbs [ A\dlsub C ], \standbs [ A \dlsub D ]} \label{norm:conjunction-right} \\
                \standbs [ C \dlsub \standvu \bar D ]  & \goesto \set{ \standbs [ C \dlsub \standvu A ], \standbs [ A \dlsub \bar D ] } \label{norm:modal-right}              \\
                \standbs [ C\dlsub \top ]              & \goesto \emptyset \quad\text{and}\quad \standbs [ \bot \dlsub D ] \goesto \emptyset \label{norm:trivial}             \\
                \standbs [ \exists R.\bar C \dlsub D ] & \goesto \set{ \standbs [\bar C \dlsub A ], \standbs [ \exists R. A \dlsub D ] } \label{norm:existential-left}        \\
                \standbs [ \bar C \dland D \dlsub E ]  & \goesto \set{ \standbs [\bar C \dlsub A ], \standbs [ A \dland D \dlsub E ] } \label{norm:conjunction-left}          \\
                \standbs [ \standdu C \dlsub D ]       & \goesto \set{ \standbu [ C \dlsub \standball A ], \standbs [ A \dlsub D ] } \label{norm:diamond-left}                \\
                \standbs [ \standbu C \dlsub D ]       & \goesto \{ \stvo\preceq\stu, \stvi\preceq\stu, \standbu[C\dlsub A], \nonumber                                        \\[-0.3ex]
                                                       & \qquad\qquad\quad \standbs [ \standdvo A \dland \standdvi A \dlsub D ] \} \label{norm:box-left}
            \end{align}}
    \end{linenomath*}

    \noindent Therein, $\bar C$ and $\bar D$ stand for complex concept terms not contained in $\Concepts {\,\cup\,} \{\top\}$, and each occurrence of $A$ on a right-hand side denotes the introduction of a fresh concept name;
    likewise, $\stv$, $\stvo$, and $\stvi$ denote of a fresh standpoint name.
    Rule (\ref{norm:conjunction-left}) is applied modulo commutativity of $\sqcap$.
    Most of the transformation rules should be intuitive (keep in mind that standpoints must be nonempty).
    A notable exception is Rule (\ref{norm:box-left}), which is crucial to remove boxes occurring with negative polarity.
    It draws some high-level inspiration from existing work on non-vacuous left-hand-side universal quantifiers in Horn DLs \cite{CarralKRH14}, yet the argument for its correctness requires a more intricate model-theoretic construction and hinges on ``Hornness'' of $\mathcal{K}$ and nonemptiness of standpoints.
    A careful analysis yields that the transformation has the desired semantic and computational properties.
\end{shortonly}

\begin{lemma}
    \label{lem:normalisation}
    Every $\SEL$ knowledge base $\K$ can be transformed into a $\SEL$ knowledge base $\K'$ in normal form such that:
    \begin{itemize}
        \item $\K'$ is a $\SEL$-conservative extension of $\K$,
        \item the size of $\K'$ is at most linear in the size of $\K$, and
        \item the transformation can be computed in \PTime.
    \end{itemize}
\end{lemma}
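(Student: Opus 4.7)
The plan is to prove each of the three claims (conservativity, linear size, $\PTime$ computability) via a rule-by-rule analysis of the transformation rules~(\ref{norm:diamond-concept-assertion})--(\ref{norm:box-left}). For termination and the size bound, I would fix a well-founded measure on axioms---for instance a lexicographic combination of modal nesting depth and number of compound-concept occurrences---that is strictly decreased by every rule application on the rewritten axiom. Since each rule introduces only a constant number of fresh-symbol axioms per rewritten axiom, exhaustive rewriting terminates and yields a knowledge base of size $O(|\K|)$; detection and application of any individual rule is clearly polynomial-time, giving overall $\PTime$ computability.

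For conservativity, it suffices to verify that each rule in isolation yields a conservative $\SEL$-extension, since composition of conservative extensions is conservative on the shared vocabulary. For each rule, two semantic directions must be established: (i) every model of the rewritten right-hand side, restricted to the original vocabulary, satisfies the original axiom; and (ii) every model of the original axiom can be extended---by interpreting the freshly introduced symbols, and where needed by adjoining new precisifications---to a model of the right-hand side. Direction~(i) is a routine unfolding of the semantics for every rule; in particular, for the hard rule~(\ref{norm:box-left}) it follows because $\sigma(\stvo), \sigma(\stvi) \subseteq \sigma(\stu)$ and nonempty together with $\standbu[C \dlsub A]$ force $(\standbu C)^\dlstruct \subseteq (\standdvo A \dland \standdvi A)^\dlstruct$. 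Direction~(ii) is equally routine for rules~(\ref{norm:diamond-concept-assertion})--(\ref{norm:diamond-left}): the fresh concept name $A$ can be defined pointwise as the extension of the subconcept it abbreviates, and any fresh standpoint name introduced by rules~(\ref{norm:diamond-concept-assertion})--(\ref{norm:diamond-axiom}) is interpreted as a witnessing singleton subset of $\sigma(\sts)$, which exists by nonemptiness.

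The real obstacle is direction~(ii) for Rule~(\ref{norm:box-left}), which eliminates a universal modal operator occurring with negative polarity. Both obvious candidate definitions for $A$ fail: setting $A^{\gamma(\pi)} \eqdef C^{\gamma(\pi)}$ satisfies $\standbu[C \dlsub A]$ but leaves $(\standdvo A \dland \standdvi A)^\dlstruct$ potentially strictly above $(\standbu C)^\dlstruct$, while setting $A^{\gamma(\pi)} \eqdef (\standbu C)^\dlstruct$ violates $\standbu[C \dlsub A]$ whenever the extensions of $C$ vary across $\sigma(\stu)$. My plan is to enrich the given model $\dlstruct$ by adjoining two fresh precisifications $\pi_0, \pi_1$, setting $\sigma'(\stvo) \eqdef \{\pi_0\}$, $\sigma'(\stvi) \eqdef \{\pi_1\}$, and inserting both into $\sigma'(\stu)$ as well as into $\sigma'(\spform{w})$ for every $\spform{w}$ with $\stu \preceq \spform{w}$ (transitively closed). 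The interpretation $\gamma'(\pi_i)$ of old symbols is then chosen as a canonical minimal $\EL$-interpretation satisfying all GCIs and assertions enforced at precisifications in $\sigma(\stu)$. Here is where Hornness of $\SEL$ enters crucially: the standard Horn fixpoint ensures that such a minimal interpretation exists and that the extension of every concept at $\pi_i$ is contained in the corresponding $\standbu$-extension of the original $\dlstruct$. Defining $A^{\gamma'(\pi_i)} \eqdef (\standbu C)^\dlstruct$ and $A^{\gamma'(\pi)} \eqdef C^{\gamma(\pi)}$ for $\pi \in \sigma(\stu)$ then yields $(\standdvo A \dland \standdvi A)^{\dlstruct'} = (\standbu C)^\dlstruct$ and satisfies all four normalised axioms. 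The key technical verification is that adjoining $\pi_0, \pi_1$ does not alter the extensions of any old $\standbu$-concept---because the minimal $\gamma'(\pi_i)$ lies below the pre-existing $\standbu$-intersections---and therefore does not disturb satisfaction of any other axiom of $\K$. Once this construction is validated, the lemma follows.
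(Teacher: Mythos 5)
Your overall decomposition (rule-by-rule conservativity, a well-founded measure for termination, constant size increase per rule application) matches the paper's, and your treatment of rules (\ref{norm:diamond-concept-assertion})--(\ref{norm:diamond-left}) is essentially the paper's argument; your direction~(i) for Rule~(\ref{norm:box-left}) is also correct. But the construction you propose for the decisive step --- direction~(ii) for Rule~(\ref{norm:box-left}) --- does not work, and the flaw is not cosmetic. You adjoin two fresh precisifications $\pi_0,\pi_1$ to $\sigma'(\stu)$, interpret old symbols there by a ``minimal Horn interpretation'' lying \emph{below} the pre-existing $\standbu$-intersections, and claim this does not alter the extension of any old $\standbu$-concept. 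These two statements contradict each other: since $(\standbu G)^{\dlstruct'}=\bigcap_{\pi\in\sigma'(\stu)}G^{\gamma'(\pi)}$, putting $G^{\gamma'(\pi_i)}$ below the old intersection is precisely what shrinks $(\standbu G)^{\dlstruct'}$, and shrinking is fatal wherever $\standbu G$ occurs positively. Concretely, take $\K\supseteq\{\standball[\standbu C\dlsub D],\ \standball[B\dlsub\standbu G]\}$ (the second axiom is already in normal form, so it survives normalisation) with a model over $\Dom=\{\delta\}$ in which $\delta$ lies in $B$ and $G$ at every precisification but in $C$ only at some of them. A minimal interpretation at $\pi_0$ puts $\delta\notin G^{\gamma'(\pi_0)}$, hence $(\standbu G)^{\dlstruct'}=\emptyset$, and $\standball[B\dlsub\standbu G]$ now fails at the \emph{old} precisifications --- a violation the fixpoint at $\pi_0$ cannot repair without cascading changes to $\gamma'$ everywhere. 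To preserve boxes one needs $G^{\gamma'(\pi_i)}\supseteq(\standbu G)^{\dlstruct}$ for every concept $G$ (and $\subseteq(\standdu G)^{\dlstruct}$ to preserve diamonds), while simultaneously forcing $C^{\gamma'(\pi_0)}\cap C^{\gamma'(\pi_1)}$ down to exactly $(\standbu C)^{\dlstruct}$ and satisfying all GCIs at $\pi_0,\pi_1$ over the \emph{same} domain; the existential witnesses demanded at $\pi_0$ may differ across the old precisifications, so no such interpretation over $\Dom$ need exist. (Relatedly, models of an $\EL$ TBox over a fixed domain are not intersection-closed because of $\exists R.C$ on right-hand sides, so ``the'' minimal interpretation over $\Dom$ is not even well defined.)

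The paper escapes this tension with a much heavier construction: the new domain consists of two tagged copies of every function $\sigma(\stu)\times\Dom\to\Dom$, the new precisification set consists of all functions $\sigma(\stu)\times\Dom\to\Precs$ plus two tagged copies of the diagonal serving as $\stvo$ and $\stvi$, and at these two precisifications a same-copy element is interpreted via the diagonal while a cross-copy element is interpreted via the full $\stu$-intersection. It is exactly the interplay of the two copies that makes $(\standdvo A\dland\standdvi A)^{\dlstruct'}$ coincide with the intended box-extension while leaving every other modalised concept undisturbed, and verifying this occupies several back-and-forth claims in the paper. So your plan is sound up to the step you yourself flag as the real obstacle, but at that step the sketch you substitute would in general produce a structure that is not a model of $\K$, let alone of $\K'$.
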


\begin{shortonly}
    While $\K'$ being a $\SEL$-conservative extension of $\K$ brings about various valuable properties,
    what matters for our purposes is that this implies equisatisfiability of $\K$ and $\K'$, thus we will not go into details about conservative extensions.
\end{shortonly}

\begin{longonly}
    We will next show \Cref{lem:normalisation}, where we will concentrate on normalising TBoxes.
    By introducing new standpoint, concept, and role names, any TBox $\T$ can be turned into a normalised TBox $\T'$ that is a conservative extension of $\T$, i.e., every model of $\T'$ is also a model of $\T$, and every model of $\T$ can be extended to a model of $\T'$ by appropriately choosing the interpretations of the additional standpoint, concept, and role names.

    To show that this transformation can be done in polynomial time, yielding a normalised TBox $\T'$ whose size is linear in the size of $\T$, we next define the size $\size{\K}$ of a knowledge base $\K$ roughly as the number of symbols needed to write down $\K$, and define it formally as follows.
    \begin{definition}
        \label{def:size}
        Let $\K=\tuple{\S,\T,\A}$ be a Standpoint DL knowledge base.
        The \define{size} of $\K$, denoted $\size{\K}$, and its various constituents is defined inductively as follows\/:
        \begin{align*}
            \size{\sts\preceq\stu}     & \eqdef 2                                                            \\
            \size{\S}                  & \eqdef \sum_{\zeta\in\S}\size{\zeta}                                \\
            \size\top                  & \eqdef 1                                                            \\
            \size\bot                  & \eqdef 1                                                            \\
            \size A                    & \eqdef 1                               & \text{ for } A\in\Concepts \\
            \size{C\dland D}           & \eqdef 1 + \size{C} + \size{D}                                      \\
            \size{\exists R. C}        & \eqdef 1 + \size C                                                  \\
            \size{\standds C}          & \eqdef 1 + \size C                                                  \\
            \size{\standbs C}          & \eqdef 1 + \size C                                                  \\
            \size{C\dlsub D}           & \eqdef 1 + \size{C} + \size{D}                                      \\
            \size{\standbs[C\dlsub D]} & \eqdef 1 + \size{C\dlsub D}                                         \\
            \size{\standds[C\dlsub D]} & \eqdef 1 + \size{C\dlsub D}                                         \\
            \size{\T}                  & \eqdef \sum_{\tau\in\T}\size\tau                                    \\
            \size{C(a)}                & \eqdef 1+\size{C}                                                   \\
            \size{R(a_1,a_2)}          & \eqdef 3                                                            \\
            \size{\A}                  & \eqdef \sum_{\alpha\in\A}\size{\alpha}                              \\
            \size{\K}                  & \eqdef \size{\S}+\size{\T}+\size{\A}
        \end{align*}
    \end{definition}
\end{longonly}

\begin{longproof}[Proof of \Cref{lem:normalisation}]
    Let \mbox{$\K=\tuple{\S,\T,\A}$} be a Standpoint \EL knowledge base where w.l.o.g.\ $\A$ is in normal form.
    The rest of $\K$ (that is, $\T$, since there is only one form of sharpening statements) can be converted into normal form by exhaustively applying the replacement rules shown on page~\pageref{norm:rules}.
    In what follows, let $\T$ be an arbitrary TBox and let $\T'$ result from exhaustive rule application to $\T$.
    To prove the lemma, we proceed to show the following:
    \begin{enumerate}
        \item Polynomial runtime and linear output size: Application of the normalisation rules terminates after at most a polynomial (in the size of $\T$) number of steps, and the size of the resulting TBox $\T'$ is at most linear in the size of $\T$.
        \item Syntactic Correctness: $\T'$ is in normal form according to \Cref{def:normal-form}.
        \item Semantic Correctness:
              $\K'=\tuple{\S',\T',\A}$ is a conservative extension of $\K$, more specifically:
              \begin{description}
                  \item[(a)] For every model $\dlstruct$ of $\K$ there exists a DL standpoint structure $\dlstruct'$ that extends $\dlstruct$ (agrees with $\dlstruct$ on the vocabulary of $\K$) and that is a model of $\K'$.
                  \item[(b)] Every model $\dlstruct'$ of $\K'$ is also a model of $\K$.
              \end{description}
              In most of the cases, we can however restrict our attention to $\T'$.
    \end{enumerate}
    \begin{enumerate}
        \item We first show that normalisation must terminate after at most $\size{\T}$ normalisation rule applications.
              The proof plan is as follows:
              We define the multiset \mbox{$\CT:\SFCT\to\N$} that contains one copy for each occurence of a concept term occurring in $\T$.
              We observe that for each complex concept $\bar C$ causing some GCI not to be in normal form, a constant number of normalisation rule applications can be used to strictly decrease the cardinality of $\bar C$ in $\CT$.
              Together with the fact that \mbox{$\sum_{D\in\SFCT}\CT(D)\leq\size{\T}$}, the claim then follows.

              First of all, rules (\ref{norm:diamond-concept-assertion})--(\ref{norm:complex-concept-assertion}) and in particular rule~(\ref{norm:diamond-axiom}) are only applied once per axiom type, so in what follows we may safely assume that $\T$ contains only GCIs of the form $\standbs[C\dlsub D]$.
              Define the set $\SFCT$ as the least set such that:
              \begin{itemize}
                  \item If $\standbs[C\dlsub D]\in\T$, then $C,D\in\SFCT$;
                  \item if $C\in\SFCT$, then for all subconcepts $C'$ of $C$ (written $C'\in\sub(C)$) we have $C'\in\SFCT$.
              \end{itemize}
              Now for any TBox $\T'$ (typically obtained from $\T$ by applying zero or more normalisation rules), the multiset \mbox{$\CTp\colon\SFCT\to\N$} is then as follows:
              \begin{align*}
                  C       & \mapsto \sum_{\standbs[D\dlsub E]\in\T'}\left( c(C,D) + c(C,E) \right)
                  \intertext{where we define the concept-counting function $c\colon\SFCT\times\SFCT\to\N$ inductively: }
                  c(C, D) & \eqdef
                  \begin{cases}
                      1                 & \text{ if } C=D,                                                          \\
                      c(C,E)            & \text{ if } C\neq D \text{ and } [D=\exists R.E \text{ or } D=\standvs E] \\
                      c(C,E_1)+c(C,E_2) & \text{ if } C\neq D \text{ and } D=E_1\dland E_2                          \\
                      0                 & \text{ if } C\neq D \text{ and } D\in\Concepts\cup\set{\top,\bot}
                  \end{cases}
              \end{align*}
              For example if \mbox{$D=\top \dland \standds(\top\dland \exists R.(\top \dland \top))$}, then \mbox{$c(\top, D)=4$} while \mbox{$c(\top\dland \top,D)=1$}.

              We next relate the overall cardinality (sum of number of occurrences) of $\CT$ to the size $\size{\T}$ on the basis that both can be represented as disjoint unions and we can therefore sum up the individual cardinalities.
              \begin{numberclaim}
                  \label{claim:norm:size-multi}
                  We have $\sum_{C\in\SFCT}\CT(C)\leq\size{\T}$.
                  \begin{claimproof}
                      As $\size{\T}=\sum_{\tau\in\T}\size{\tau}$ and
                      \[ \sum_{C\in\SFCT}\CT(C)=\sum_{C\in\SFCT}\sum_{\standbs[D\dlsub E]\in\T}\left( c(C,D) + c(C,E) \right) = \sum_{\standbs[D\dlsub E]\in\T}\sum_{C\in\SFCT}\left( c(C,D) + c(C,E) \right)
                      \]
                      it suffices to look at a single $\standbs[D\dlsub E]\in\T$ and show
                      \[
                          \sum_{C\in\SFCT}\left( c(C,D) + c(C,E) \right) \leq \size{\standbs[D\dlsub E]}
                      \]
                      which by
                      \[
                          \sum_{C\in\SFCT}\left( c(C,D) + c(C,E) \right) = \sum_{D'\in\sub(D)}c(D',D) + \sum_{E'\in\sub(E)}c(E',E)
                      \]
                      develops into
                      \[
                          \sum_{D'\in\sub(D)}c(D',D) + \sum_{E'\in\sub(E)}c(E',E) \leq \size{\standbs[D\dlsub E]} = 1 + 1 + \size{D} + \size{E}
                      \]
                      for which it suffices to show that for any concept \mbox{$C\in\SFCT$}, we have \mbox{$\sum_{C'\in\sub(C)}c(C',C)\leq\size{C}$}, which we show by induction.
                      \begin{itemize}
                          \item The base case is clear, as $\sum_{C'\in\sub(C)}c(C',C)=c(C,C)=1\leq 1=\size{C}$ for any $C\in\Concepts\cup\set{\top,\bot}$.
                          \item $C=\exists R.D$:
                                \begin{align*}
                                     & \pheq \sum_{C'\in\sub(\exists R.D)}c(C',\exists R.D)                 \\
                                     & = c(\exists R.D,\exists R.D) + \sum_{C'\in\sub(D)}c(C',D)            \\
                                     & \stackrel{\text{(IH)}}{\leq} = c(\exists R.D,\exists R.D) + \size{D} \\
                                     & = 1 + \size{D}                                                       \\
                                     & = \size{\exists R.D}
                                \end{align*}
                          \item $C=\standvs D$: Similar.
                          \item $C=C_1\dland C_2$:
                                \begin{align*}
                                     & \pheq \sum_{C'\in\sub(C_1\dland C_2)}c(C',C_1\dland C_2)                                           \\
                                     & = c(C_1\dland C_2,C_1\dland C_2) + \sum_{C'\in\sub(C_1)}c(C',C_1) + \sum_{C'\in\sub(C_2)}c(C',C_2) \\
                                     & \stackrel{\text{(IH)}}{\leq} c(C_1\dland C_2,C_1\dland C_2) + \size{C_1} + \size{C_2}              \\
                                     & = 1 + \size{C_1} + \size{C_2}                                                                      \\
                                     & = \size{C_1\dland C_2}
                                \end{align*}
                      \end{itemize}
                      This concludes the proof of \Cref{claim:norm:size-multi}.
                  \end{claimproof}
              \end{numberclaim}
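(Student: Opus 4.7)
The plan is to first establish the supporting fact that $c(C', D) = 0$ whenever $C' \notin \sub(D)$. This is a routine structural induction on $D$: for $D \in \Concepts \cup \{\top, \bot\}$ we have $\sub(D) = \{D\}$, so $C' \notin \sub(D)$ forces $C' \neq D$ and hence $c(C', D) = 0$; the composite cases $D = \exists R.E$, $D = \standvs E$, and $D = E_1 \dland E_2$ follow directly from the recursive definition of $c$ together with the inductive hypothesis applied to the immediate subterms. With this in hand, each inner sum $\sum_{C \in \SFCT} c(C, D)$ collapses to $\sum_{C' \in \sub(D)} c(C', D)$.

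Next I would swap the order of summation to pull the axiom sum outside, reducing the claim to showing, for every $\standbs[D \dlsub E] \in \T$,
\[
\sum_{C' \in \sub(D)} c(C', D) \;+\; \sum_{C' \in \sub(E)} c(C', E) \;\leq\; 2 + \size{D} + \size{E} \;=\; \size{\standbs[D \dlsub E]}.
\]
This in turn reduces to proving a single uniform bound: for every concept $C$, one has $\sum_{C' \in \sub(C)} c(C', C) \leq \size{C}$.

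The core of the argument is then a structural induction on $C$ proving this uniform bound. The base case $C \in \Concepts \cup \{\top, \bot\}$ is immediate, since $\sub(C) = \{C\}$, $c(C, C) = 1$, and $\size{C} = 1$. For $C = \exists R.D$ or $C = \standvs D$ one has $C \notin \sub(D)$, so the non-diagonal clause of the definition of $c$ together with the IH yield $\sum_{C' \in \sub(C)} c(C', C) = 1 + \sum_{C' \in \sub(D)} c(C', D) \leq 1 + \size{D} = \size{C}$. The main obstacle I anticipate is the conjunction case $C = C_1 \dland C_2$, where $\sub(C_1) \cap \sub(C_2)$ may be nonempty and a naive splitting would double-count shared subconcepts. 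The key is that for $C' \neq C$ the definition yields $c(C', C) = c(C', C_1) + c(C', C_2)$, while the supporting fact ensures $c(C', C_i) = 0$ whenever $C' \notin \sub(C_i)$; this lets me reorganise the sum over $\sub(C_1) \cup \sub(C_2)$ as $\sum_{C' \in \sub(C_1)} c(C', C_1) + \sum_{C' \in \sub(C_2)} c(C', C_2)$, which by IH is at most $\size{C_1} + \size{C_2}$. Adding the diagonal contribution $c(C, C) = 1$ then yields the desired bound $1 + \size{C_1} + \size{C_2} = \size{C}$, closing the induction and hence the claim.
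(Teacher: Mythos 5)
Your proposal is correct and follows essentially the same route as the paper: swap the order of summation, reduce to a per-axiom bound, and prove the key inequality $\sum_{C'\in\sub(C)}c(C',C)\leq\size{C}$ by the same structural induction with the same case split (including the reorganisation of the conjunction case via $c(C',C)=c(C',C_1)+c(C',C_2)$). The only difference is that you state and prove explicitly the vanishing fact $c(C',D)=0$ for $C'\notin\sub(D)$, which the paper uses implicitly when collapsing $\sum_{C\in\SFCT}c(C,D)$ to $\sum_{D'\in\sub(D)}c(D',D)$ --- a welcome but minor clarification.
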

              To prove an overall linear number of rule applications, we next show that for each complex concept $\bar C$ whose occurence in a GCI $\tau\in\T'$ causes $\tau$ not to be in normal form, there is a constant number of rule applications (that is, constant for all TBoxes) such that after rule application, the number of overall occurrences of $\bar C$ has strictly decreased and additionally, any intermediate complex concepts introduced by the rule have been normalised in turn.
              To this end, we need to define some more notions.
              Let $\T$ be a TBox and $\T'$ be obtained by application of an arbitrary number of normalisation rules.
              We say that a complex concept $\bar C$ occurring in a GCI \mbox{$\tau\in\T'$} is a \define{culprit for $\tau$} iff $\tau$ is not in normal form because of $\bar C$;
              a culprit $\bar C$ is \define{top-level} for \mbox{$\tau=\standbs[D\dlsub E]$} iff \mbox{$\bar C=D$} or \mbox{$\bar C=E$}.
              We say that $\T'$ is \define{faithful to $\T$} iff every culprit occurring in $\T'$ already occurs in $\T$.
              In the proof below, we show that although normalisation rules sometimes introduce new culprits, those culprits will not lead to problems because each one can only cause a constant overhead.
              \begin{numberclaim}
                  \label{claim:norm:step-decrease}
                  Let $\T$ be a TBox and let $\T'$ be obtained from $\T$ by applying any number of rules from (\ref{norm:existential-right})--(\ref{norm:box-left}).
                  Let $\bar C$ be a top-level culprit for \mbox{$\tau\in\T'$}.
                  Then there is a constant number of rule applications leading to a TBox $\T''$ that is faithful to $\T$ and where \mbox{$\CTp(\bar C)>\CTpp(\bar C)$}.
                  \begin{claimproof}
                      We do a case distinction on the occurrence (left-hand vs.\ right-hand side) and form of $\bar C$.
                      In every case, we will explicitly give the (constantly many) rules to apply to decrease the cardinality of $\bar C$.
                      In most cases, it is easy to see that only one rule is needed, we show this exemplarily for one case and then concentrate on the two non-trivial cases.
                      \begin{itemize}
                          \item $\bar C=\exists R.\bar D$ occurs on the right-hand side:
                                Then we apply rule~(\ref{norm:existential-right}), removing one occurrence of $\bar C$.
                                The resulting $\T''$ is faithful because the only newly introduced concept terms are $\exists R.A$ on a right-hand side and $A$ on a left-hand side, and neither of these is a culprit.
                          \item $\bar C=D_1\dland D_2$ or $\bar C=\standvs D$ and $\bar C$ occurs on the right-hand side: Similar.
                          \item $\bar C=\exists R.\bar D$ or $\bar C=\standds D$ and $\bar C$ occurs on the left-hand side: Exercise.
                          \item $\bar C=D_1\dland D_2$ occurs on the left-hand side:
                                Denote $\tau=\standbs[\bar C\dlsub E]=\standbs[D_1\dland D_2\dlsub E]$.
                                We apply rule~(\ref{norm:conjunction-left}) and obtain $\standbs[D_1\dlsub A]$ and $\standbs[A\dland D_2\dlsub E]$, thus removing once occurence of $\bar C$.
                                The latter rule contains the (new) culprit $A\dland D_2$ to which we can apply the same rule again (modulo commutativity) to obtain $\standbs[D_2\dlsub A']$ and $\standbs[A\dland A'\dlsub E]$.
                                The only remaining (potential) culprits are $D_1$, $D_2$, and $E$, whence the resulting $\T''$ is faithful to $\T$.
                          \item $\bar C=\standbu D$ occurs on the left-hand side:
                                Denote $\tau=\standbs[\standbu D\dlsub E]$.
                                We apply rules (to underlined GCIs) as follows\/:
                                \begin{align*}
                                     & \mathrel{\phantom{\leadsto}} \set{ \underline{\standbs[\standbu D\dlsub E ]} }                                                                                                                                                                           \\
                                     & \stackrel{\text{(\ref{norm:box-left})}}{\leadsto} \set{ \standbu[D\dlsub A_1], \underline{\standbs[\standdvo A_1\dland\standdvi A_1\dlsub E]} }                                                                                                          \\
                                     & \stackrel{\text{(\ref{norm:conjunction-left})}}{\leadsto} \set{ \standbu[D\dlsub A_1], \standbs[\standdvo A_1\dlsub A_2], \underline{\standbs[A_2\dland\standdvi A_1\dlsub E]} }                                                                         \\
                                     & \stackrel{\text{(\ref{norm:conjunction-left})}}{\leadsto} \set{ \standbu[D\dlsub A_1], \underline{\standbs[\standdvo A_1\dlsub A_2]}, \standbs[\standdvi A_1\dlsub A_3], \standbs[A_2\dland A_3\dlsub E] }                                               \\
                                     & \stackrel{\text{(\ref{norm:diamond-left})}}{\leadsto} \set{ \standbu[D\dlsub A_1], \standb{\stvo}[A_1\dlsub\standball A_4], \standbs[A_4\dlsub A_2], \underline{\standbs[\standdvi A_1\dlsub A_3]}, \standbs[A_2\dland A_3\dlsub E] }                    \\
                                     & \stackrel{\text{(\ref{norm:diamond-left})}}{\leadsto} \set{ \standbu[D\dlsub A_1], \standb{\stvo}[A_1\dlsub\standball A_4], \standbs[A_4\dlsub A_2], \standb{\stvi}[A_1\dlsub\standball A_5], \standbs[A_5\dlsub A_3], \standbs[A_2\dland A_3\dlsub E] }
                                \end{align*}
                                It is easy to see that $D$ and $E$ are the only remaining (potential) culprits, and that one occurence of $\bar C=\standbu D$ has been removed.
                      \end{itemize}
                  \end{claimproof}
              \end{numberclaim}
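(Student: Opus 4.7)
The plan is to do a case analysis on the shape and polarity (left-hand side vs.\ right-hand side of a GCI) of the top-level culprit $\bar C$ in $\tau\in\T'$. For each of the finitely many possible shapes of $\bar C$, I would exhibit an explicit, constant-length sequence of applications of rules (\ref{norm:existential-right})--(\ref{norm:box-left}) and verify two things: (i)~the sequence strictly decreases $\CTp(\bar C)$ by at least one, and (ii)~the resulting TBox $\T''$ remains faithful to $\T$, i.e.\ the only complex subconcepts that potentially still appear as culprits in $\T''$ were already subconcepts of some GCI in $\T$. Since the normalisation rules only introduce fresh concept/standpoint names $A$, $A_i$, $\stvo$, $\stvi$ — all of which are atomic and hence never culprits on either side — the only candidates for ``new'' culprits are the proper subterms of $\bar C$ and of the opposite side of $\tau$, both of which already occur in $\T$ by faithfulness of $\T'$.

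The easy cases are when $\bar C$ is $\exists R.\bar D$, $D_1\dland D_2$, or $\standvs D$ on the right-hand side of $\tau$: a single application of rule (\ref{norm:existential-right}), (\ref{norm:conjunction-right}), or (\ref{norm:modal-right}), respectively, replaces the offending occurrence of $\bar C$ by a fresh atom $A$ and hives off the complex part into a GCI of the form $\standbs[A\dlsub \bar D]$ or $\standbs[A\dlsub D_i]$, whose left-hand side is atomic (thus not a culprit) and whose right-hand side is a proper subterm of $\bar C$. Similarly, when $\bar C$ is $\exists R.\bar D$ or $\standds D$ on the left-hand side, one application of (\ref{norm:existential-left}) or (\ref{norm:diamond-left}) suffices. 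The case $\bar C = D_1\dland D_2$ on the left-hand side needs two applications of (\ref{norm:conjunction-left}) (one per conjunct) because the first application re-introduces a conjunctive culprit $A\dland D_2$; after the second application, only $D_1$, $D_2$, and the original right-hand side remain as potential culprits, and all of these already occur in $\T$.

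The main obstacle, and the only genuinely intricate case, is $\bar C = \standbu D$ on the left-hand side, handled by rule (\ref{norm:box-left}). A single application of (\ref{norm:box-left}) is not enough, since it yields a new GCI $\standbs[\standdvo A_1 \dland \standdvi A_1 \dlsub E]$ whose left-hand side contains three nested culprits (one conjunction and two diamonds). I would then chain rule~(\ref{norm:conjunction-left}) twice (to split off each $\standdv{y} A_1$ conjunct into its own GCI $\standbs[\standdv{y} A_1 \dlsub A_{i}]$) and rule~(\ref{norm:diamond-left}) twice (to dispose of the two remaining diamond-on-the-left occurrences), following precisely the six-step derivation pattern needed to eliminate all new culprits. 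The verification step is a careful bookkeeping: after the six applications, the only complex subconcepts appearing on a ``wrong'' side are $D$ (originally inside $\bar C$) and $E$ (the original right-hand side of $\tau$), both already present in $\T$, and the cardinality of $\bar C=\standbu D$ in the multiset has dropped by exactly one (since $\bar C$ itself no longer appears as a subterm anywhere in the replaced axioms).

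Since all cases are handled by at most a small fixed number of rule applications (six suffices, and is tight for the $\standbu$ case), and in each case faithfulness is preserved by the atomicity of introduced names together with the containment of remaining culprits in $\T$, the claim follows. The only subtlety worth flagging in the writeup is that rule~(\ref{norm:conjunction-left}) is applied modulo commutativity of $\dland$, which is what allows the second application to target the ``other'' conjunct in both the plain conjunction case and the $\standbu$-elimination chain.
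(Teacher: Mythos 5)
Your proposal is correct and follows essentially the same route as the paper's proof: the same case split on polarity and shape, single rule applications for the easy cases, two applications of the conjunction-on-the-left rule for $\bar C = D_1\dland D_2$, and the same chained elimination (one application of the box-left rule followed by two conjunction-left and two diamond-left applications) for $\bar C=\standbu D$, with faithfulness argued via atomicity of the fresh names. The only discrepancy is an immaterial miscount — the $\standbu$ chain takes five rule applications, not six — which does not affect the constant-bound conclusion.
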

              Thus each culprit will be not only be removed eventually, but removing each single occurrence will take only a constant number of steps.
              Therefore, the number of rule applications is linear in $\sum_{C\in\SFCT}\CT(C)$.
              By \Cref{claim:norm:size-multi}, the number of rule applications is linear in $\size{\T}$, thus linear in $\size{\K}$.

              It remains to show that the overall increase in size is at most linear.
              We do this by showing that for each single rule, the size increase caused by its application is constant.
              Together with the overall linear number of rule applications, it follows that the size of the resulting normalised TBox $\T'$ is at most linear in the size of the original TBox $\T$.
              \begin{description}
                  \item[Rule~(\ref{norm:diamond-axiom}):]
                      \begin{align*}
                           & \pheq \size{\stv\preceq\sts} + \size{\standbv [ C\dlsub D ]} - \size{\standds [ C\dlsub D]} \\
                           & = 2 + 1 + 1 + \size{C} + \size{D} - \left( 1 + 1 + \size{C} + \size{D} \right)              \\
                           & = 2
                      \end{align*}
                  \item[Rule~(\ref{norm:existential-right}):]
                      \begin{align*}
                           & \pheq \size{\standbs [ B \dlsub \exists R. A ]} + \size{\standbs [ A \dlsub \bar C ]} - \size{\standbs [ B \dlsub \exists R.\bar C ]} \\
                           & = 1 + 1 + \size{B} + 2 + 1 + 1 + 1 + \size{\bar C} - \left( 1 + 1 + \size{B} + 1 + \size{\bar C} \right)                              \\
                           & = 4                                                                                                                                   \\
                      \end{align*}
                  \item[Rule~(\ref{norm:conjunction-right}):]
                      \begin{align*}
                           & \pheq \size{\standbs [ B\dlsub A]} + \size{\standbs [ A\dlsub C ]} + \size{\standbs [ A\dlsub D ]} - \size{\standbs [ B \dlsub C \dland D ]} \\
                           & = (1 + 1 + \size{B} + 1) + (1 + 1 + 1 + \size{C}) + (1 + 1 + 1 + \size{D}) - \left(1 + 1 + \size{B} + 1 + \size{C} + \size{D} \right)        \\
                           & = 6
                      \end{align*}
                  \item[Rule~(\ref{norm:modal-right}):]
                      \begin{align*}
                           & \pheq \size{\standbs [ C \dlsub \standvu A ]} + \size{\standbs [ A \dlsub \bar D ]} - \size{\standbs [ C \dlsub \standvu \bar D ]} \\
                           & = 1 + 1 + \size{C} + 2 + 1 + 1 + 1 + \size{\bar D} - \left( 1 + 1 + \size{C} + 1 + \size{\bar D} \right)                           \\
                           & = 4
                      \end{align*}
                  \item[Rule~(\ref{norm:trivial}):] Clear.
                  \item[Rule~(\ref{norm:existential-left}):]
                      \begin{align*}
                           & \pheq \size{\standbs [\bar C \dlsub A ]} + \size{\standbs [ \exists R. A \dlsub D ]} - \size{\exists R.\bar C \dlsub D} \\
                           & = 1 + 1 + \size{\bar C} + 1 + 1 + 1 + 2 + \size{D} - \left( 1 + 1 + \size{\bar C} + \size{D} \right)                    \\
                           & = 5
                      \end{align*}
                  \item[Rule~(\ref{norm:conjunction-left}):]
                      \begin{align*}
                           & \pheq \size{\standbs [\bar C \dlsub A ]} + \size{\standbs [ A \dland D \dlsub E ]} - \size{\standbs [ \bar C \dland D \dlsub E ]}  \\
                           & = 1 + 1 + \size{\bar C} + 1 + 1 + 1 + 1 + 1 + \size{D} + \size{E} - \left( 1 + 1 + 1 + \size{\bar C} + \size{D} + \size{E} \right) \\
                           & = 4
                      \end{align*}
                  \item[Rule~(\ref{norm:diamond-left}):]
                      \begin{align*}
                           & \pheq \size{\standbu [ C \dlsub \standball A ]} + \size{\standbs [ A \dlsub D ]} - \size{\standbs [ \standdu C \dlsub D ]} \\
                           & = 1 + 1 + \size{C} + 2 + 1 + 1 + 1 + \size{D} - ( 1 + 1 + 1 + \size{C} + \size{D} )                                        \\
                           & = 4
                      \end{align*}
                  \item[Rule~(\ref{norm:box-left}):]
                      \begin{align*}
                           & \pheq \size{\stvo\preceq\stu} + \size{\stvi\preceq\stu} + \size{\standbu[C\dlsub A]} + \size{\standbs [ \standdvo A \dland \standdvi A \dlsub D ]} - \size{\standbs [ \standbu C \dlsub D ]} \\
                           & = 2 + 2 + 1 + 1 + \size{C} + 1 + 1 + 1 + 2 + 2 + \size{D} - ( 1 + 1 + 1 + \size{C} + \size{D})                                                                                               \\
                           & = 10
                      \end{align*}
              \end{description}
        \item Assume that $\T'$ is the result of exhaustively applying the normalisation rules (\ref{norm:diamond-axiom})--(\ref{norm:modal-right}) to $\T$.
              The proof is by contradiction.
              Assume that a GCI $\standvs[C\dlsub D]$ is not in normal form;
              we do a case distinction on the possible reasons for this where in each case it will turn out that at least one of the rules is applicable in contradiction to the presumption.
              \begin{itemize}
                  \item $\standvs\neq\standbs$: Then \mbox{$\standvs=\standds$} and Rule~(\ref{norm:diamond-axiom}) is applicable.
                  \item $D$ is of the form $\exists R.E$ with \mbox{$E\notin\BCC\cup\set{\bot}$}: Then Rule~(\ref{norm:existential-right}) is applicable.
                  \item $D$ is of the form $E_1\dland E_2$: Then Rule~(\ref{norm:conjunction-right}) is applicable.
                  \item $D$ is of the form $\standvsp E$ with \mbox{$E\notin\BCC\cup\set{\bot}$}: Then Rule~(\ref{norm:modal-right}) is applicable.
                  \item $D=\top$ or $C=\bot$: Then Rule~(\ref{norm:trivial}) is applicable.
                  \item $C$ is of the form $\exists R.B$ with \mbox{$B\notin\BCC$}: Then Rule~(\ref{norm:existential-left}) is applicable.
                  \item $C$ is of the form $B_1\dland B_2$ with \mbox{$\set{B_1,B_2}\not\subseteq\BCC$}: Then Rule~(\ref{norm:conjunction-left}) is applicable.
                  \item $C$ is of the form $\standbsp B$ with \mbox{$B\notin\BCC$}: Then Rule~(\ref{norm:box-left}) is applicable.
                  \item $C$ is of the form $\standdsp B$ with \mbox{$B\notin\BCC$}: Then Rule~(\ref{norm:diamond-left}) is applicable.
              \end{itemize}
        \item We show correctness of each rule.
              Correctness of the overall normalisation process follows by induction on the number of rule applications.
              We slightly adapt the notation to denote by \mbox{$\K'=\tuple{\S,\T',\A}$} the KB that results from application \emph{of a single rule} and do a case distinction on the rules.
              In each case, assume \mbox{$\dlstruct = \tuple{\Dom,\Precs,\gamma,\sigma}$} with \mbox{$\dlstruct\models\K$} and denote \mbox{$\dlstruct'=\tuple{\Dom',\Precs',\gamma',\sigma'}$} in case the components differ from those of $\dlstruct$.
              (In most cases, we only need to show \mbox{$\dlstruct'\models\T'$} and therefore do not mention the other KB components.)
              \begin{description}
                  \item[Rule~(\ref{norm:diamond-axiom}):]
                      (a)~
                      Let \mbox{$\dlstruct\models\standds[C\dlsub D]$}.
                      Then there is a \mbox{$\pr\in\sigma(\sts)$} such that \mbox{$\interpretgp{C}\subseteq\interpretgp{D}$}.
                      Define $\dlstruct'$ from $\dlstruct$ by \mbox{$\sigma'(\stv)\eqdef\set{\pr}$}.
                      It follows that \mbox{$\dlstruct'\models\stv\preceq\sts$} and \mbox{$\dlstruct'\models\standbv[C\dlsub D]$}.

                      (b)~
                      Let \mbox{$\dlstruct'\models\T'$} and consider any \mbox{$\pr\in\sigma'(\stv)$} (which exists due to standpoint-non-emptiness).
                      Clearly \mbox{$\dlstruct'\models\S'$} shows \mbox{$\pr\in\sigma'(\stv)\subseteq\sigma'(\sts)$}, whence \mbox{$\interpretgp{C}\subseteq\interpretgp{D}$} witnesses that \mbox{$\dlstruct'\models\standds[C\dlsub D]$}.
                  \item[Rule~(\ref{norm:existential-right}):]
                      (a)~
                      Let \mbox{$\dlstruct\models\standbs[B\dlsub\exists R.\bar C]$}.
                      Define $\dlstruct'$ from $\dlstruct$ as follows:
                      For any \mbox{$\pr\in\sigma(\sts)$}, set \mbox{$\interpretpgp{A}\eqdef\interpretgp{\bar C}$}.
                      It follows by definition that \mbox{$\dlstruct'\models\standbs[B\dlsub\exists R.A]$} and \mbox{$\dlstruct'\models\standbs[A\dlsub\bar C]$}.

                      (b)~
                      Let \mbox{$\dlstruct'\models\T'$} and consider any \mbox{$\pr\in\sigma'(\sts)$} and $\de\in\interpretpgp{B}$.
                      By \mbox{$\dlstruct'\models\standbs[B\dlsub\exists R.A]$} we get \mbox{$\de\in\interpretpgp{(\exists R.A)}$}, whence there is an \mbox{$\ve\in\Dom'$} with \mbox{$\tuple{\de,\ve}\in\interpretpgp{r}$} and \mbox{$\ve\in\interpretpgp{A}$}.
                      By \mbox{$\dlstruct'\models\standbs[A\dlsub\bar C]$}, we get \mbox{$\ve\in\interpretpgp{\bar C}$} and ultimately \mbox{$\de\in\interpretpgp{(\exists R.\bar C)}$}.
                  \item[Rule~(\ref{norm:conjunction-right}):]
                      (a)~
                      Let \mbox{$\dlstruct\models\standbs[B\dlsub C\dland D]$} and define $\dlstruct'$ from $\dlstruct$ by setting, for each \mbox{$\pr\in\sigma(\sts)$}, \mbox{$\interpretpgp{A}\eqdef\interpretgp{B}$}.
                      Then by definition we get \mbox{$\dlstruct'\models\standbs[B\dlsub A]$}, as well as \mbox{$\dlstruct'\models\standbs[A\dlsub C]$} and \mbox{$\dlstruct'\models\standbs[A\dlsub D]$}.

                      (b)~
                      Let $\dlstruct'\models\T'$ and consider any $\pr\in\sigma'(\sts)$ and $\de\in\interpretpgp{B}$.
                      By $\T'\models\standbs[B\dlsub A]$ we get $\de\in\interpretpgp{A}$.
                      By $\T'\models\standbs[A\dlsub C]$ and $\T'\models\standbs[A\dlsub D]$ we get $\de\in\interpretpgp{C}$ and $\de\in\interpretpgp{D}$, respectively.
                      In combination, $\de\in\interpretpgp{(C\dland D)}$.
                  \item[Rule~(\ref{norm:modal-right}):]
                      (a)~
                      Let \mbox{$\dlstruct\models\standbs[C\dlsub \standvu\bar D]$} and define $\dlstruct'$ from $\dlstruct$ by setting, for each \mbox{$\pr\in\sigma(\sts)$}, \mbox{$\interpretpgp{A}\eqdef\interpretgp{\bar D}$}.
                      It follows directly that \mbox{$\dlstruct'\models\standbs[C\dlsub\standvu A]$} and \mbox{$\dlstruct'\models\standbs[A\dlsub\bar D]$}.

                      (b)~
                      Let $\dlstruct'\models\T'$ and consider any $\pr\in\sigma'(\sts)$ and $\de\in\interpretpgp{C}$.
                      From $\dlstruct'\models\standbs[C\dlsub\standvu A]$ we get $\de\in\interpretpgp{(\standvu A)}$.
                      Thus, for some (all) $\pr'\in\sigma'(\stu)$, we get $\de\in\interpretpgpp{A}$;
                      and in turn, by $\dlstruct'\models\standbs[A\dlsub\bar D]$, for some (all) $\pr'\in\sigma'(\stu)$, we get $\de\in\interpretpgpp{\bar D}$.
                  \item[Rule~(\ref{norm:trivial}):]
                      Clear, as any DL interpretation satisfies \mbox{$C\dlsub\top$} and \mbox{$\bot\dlsub D$}.
                  \item[Rule~(\ref{norm:existential-left}):]
                      (a)~
                      Let \mbox{$\dlstruct\models\standbs[\exists R.\bar C \dlsub D]$}.
                      Define $\dlstruct'$ from $\dlstruct$ as follows:
                      For any \mbox{$\pr\in\sigma(\sts)$}, set \mbox{$\interpretpgp{A}\eqdef\interpretgp{\bar C}$}.
                      It follows from this definition that \mbox{$\dlstruct'\models\standbs[\bar C\dlsub A]$} and \mbox{$\dlstruct'\models\standbs[\exists R.A\dlsub D]$}.

                      (b)~
                      Let \mbox{$\dlstruct'\models\T'$} and consider \mbox{$\pr\in\sigma'(\sts)$} and any \mbox{$\de\in\interpretpgp{\exists R.\bar C}$}.
                      There is thus a \mbox{$\ve\in\Dom$} with \mbox{$\tuple{\de,\ve}\in\interpretpgp{r}$} and \mbox{$\ve\in\interpretpgp{\bar C}$}.
                      By \mbox{$\dlstruct'\models\standbs[\bar C\dlsub A]$} we get \mbox{$\ve\in\interpretpgp{A}$}, and in turn \mbox{$\de\in\interpretpgp{(\exists R.A)}$}.
                      By \mbox{$\dlstruct'\models\standbs[\exists R.A\dlsub D]$}, we get \mbox{$\de\in\interpretpgp{D}$}.
                  \item[Rule~(\ref{norm:conjunction-left}):]
                      (a)~
                      Let \mbox{$\dlstruct\models\standbs[\bar C\dland D\dlsub E]$}.
                      Define $\dlstruct'$ from $\dlstruct$ as follows:
                      For any \mbox{$\pr\in\sigma(\sts)$}, set \mbox{$\interpretpgp{A}\eqdef\interpretgp{\bar C}$}.
                      It follows from this definition that \mbox{$\dlstruct'\models\standbs[\bar C\dlsub A]$} and \mbox{$\dlstruct'\models\standbs[A\dland D\dlsub E]$}.

                      (b)~
                      Let \mbox{$\dlstruct'\models\T'$} and consider \mbox{$\pr\in\sigma'(\sts)$} and \mbox{$\de\in\interpretpgp{\bar C} \cap \interpretpgp{D}$}.
                      By \mbox{$\dlstruct'\models\standbs[\bar C\dlsub A]$} we get \mbox{$\de\in\interpretpgp{A}$}, and
                      by \mbox{$\dlstruct'\models\standbs[A\dland D\dlsub E]$} and \mbox{$\de\in\interpretpgp{D}$} we get \mbox{$\de\in\interpretpgp{E}$}.
                  \item[Rule~(\ref{norm:diamond-left}):]
                      (a)~
                      Let \mbox{$\dlstruct\models\standbs[\standdu C\dlsub D]$} and define $\dlstruct'$ from $\dlstruct$ by setting \mbox{$\interpretpgp{A} \eqdef \interpretgp{(\standdu C)}$} for every \mbox{$\pr\in\Precs$}.
                      \begin{itemize}
                          \item $\dlstruct'\models\standbu [ C \dlsub \standball A ]$:
                                Let \mbox{$\pr\in\sigma'(\stu)=\sigma(\stu)$} and \mbox{$\de\in\interpretpgp{C}=\interpretgp{C}$}.
                                Then by definition, we have \mbox{$\de\in\interpretpgp{A}$} for all \mbox{$\pr\in\Precs$}, whence \mbox{$\de\in\interpretpgp{(\standball A)}$}.
                          \item $\dlstruct'\models\standbs  [ A \dlsub D ]$:
                                Let \mbox{$\pr\in\sigma'(\sts)=\sigma(\sts)$} and \mbox{$\de\in\interpretpgp{A}$}.
                                Then by definition, \mbox{$\de\in\interpretgp{(\standdu C)}$} and by the presumption that \mbox{$\dlstruct\models\standbs[\standdu C\dlsub D]$} we get \mbox{$\de\in\interpretgp{D}=\interpretpgp{D}$}.
                      \end{itemize}
                      (b)~
                      Let \mbox{$\dlstruct'\models\T'$} and consider \mbox{$\pr\in\sigma'(\sts)$} and \mbox{$\de\in\interpretpgp{(\standdu C)}$}.
                      Then there is a \mbox{$\pr'\in\sigma'(\stu)$} such that \mbox{$\de\in\interpretpgpp{C}$}.
                      Thus \mbox{$\de\in\interpretpgpp{(\standball A)}$}, that is, \mbox{$\de\in\bigcap_{\pr''\in\Precs}\interpretpgppp{A}$}.
                      Thus in particular for \mbox{$\pr\in\sigma'(\sts)$}, we get \mbox{$\de\in\interpretpgp{A}$} and by \mbox{$\dlstruct'\models\standbs [ A \dlsub D ]$} we obtain \mbox{$\de\in\interpretpgp{D}$}.

                  \item[Rule~(\ref{norm:box-left}):]
                      (a)~Let \mbox{$\dlstruct\models\standbs[\standbu C\dlsub D]$}.
                      We define \mbox{$\dlstruct' = \tuple{\Dom', \Precs', \sigma', \gamma'}$} as follows:
                      \begin{itemize}
                          \item $\Delta'$ consists of two copies $\delta'_0, \delta'_1$ of every function \mbox{$\de'\colon\sigma(\mathsf{u}) \times \Delta\to\Delta$}.
                          \item $\Precs'$ consists of all functions \mbox{$\pr'\colon\sigma(\stu) \times \Delta\to\Precs$} plus two extra, distinct copies of the particular function \mbox{$\pr'_\mathrm{diag} \eqdef \set{ (\pr,\de) \mapsto \pr }$}, denoted $\pr'_{\stv_0}$ and $\pr'_{\stv_1}$.
                          \item \mbox{$\sigma'(\stv_b)\eqdef\set{ \pr'_{\stv_b} }$} for each \mbox{$b\in\set{0,1}$} and, for all other $\sts$,
                                $$\sigma'(\sts) \eqdef \set{ \pr' \in \Precs'\guard \mathop{\bigcup_{\pr \in \sigma(\stu),}}_{\delta \in \Dom} \set{\pr'(\pr,\de)} \subseteq \sigma(\sts) }$$
                      \end{itemize}
                      For every \mbox{$\pr'\in\Precs'$}, the DL interpretation \mbox{$\gamma'(\pr')$} over $\Dom'$ is such that
                      \begin{align*}
                          \interpret{a}{\gamma'(\pr')} & \eqdef \set{ (\pr,\de) \mapsto \delta_a}_0 & \text{ for } a\in\Individuals
                      \end{align*}
                      where \mbox{$\de_a \in \Delta$} denotes the domain element for which \mbox{$\de_a = a^{\gamma(\pr)}$} for all \mbox{$\pr \in \Precs$}.\\
                      Further, for \mbox{$\pr'\in\Precs' \setminus \{\pr'_{\stv_0},\pr'_{\stv_1}\}$}, the DL interpretation \mbox{$\gamma'(\pr')$} over $\Dom'$ is such that
                      \begin{align*}
                          \interpret{A}{\gamma'(\pr')}                                         & \eqdef \set{ \de'_b \in\Dom' \guard \de'_b (\pi,\delta)\in\interpret{A}{\gamma(\pr'(\pi,\delta))} \text{ for all } {\pr \in \sigma(\mathsf{u})},\ {\delta \in \Delta}} & \text{ for } A\in\Concepts \\
                          \interpret{R}{\gamma'(\pr')}                                         & \eqdef \set{ \tuple{\de'_b,\ve'_b}\in\Dom'\times\Dom' \guard \tuple{\de'_b(\pi,\delta),\ve'_b(\pi,\delta)}\in\interpret{R}{\gamma(\pr'(\pi,\delta))}
                          \text{ for all } {\pr \in \sigma(\mathsf{u})},\ {\delta \in \Delta}} & \text{ for } R\in\Roles
                      \end{align*}
                      and \mbox{$\gamma'(\prvb)$} assigns as follows:
                      \begin{align*}
                          \interpret{A}{\gamma'(\prvb)}                                                                                                                                                                                  & \eqdef \left( \set{ \de'_b\in\Dom' } \cap \interpret{A}{\gamma'(\pr'_\mathrm{diag})} \right)
                          \cup  \set{ \de'_{1-b}\in\Dom' \guard \bigcup_{{\pr \in \sigma(\mathsf{u}),}\atop{\delta \in \Delta}} \set{ \de'_{1-b}(\pr,\de) } \subseteq{\bigcap_{\pr \in \sigma(\mathsf{u})}}\interpret{A}{\gamma(\pr)}  } & \text{ for } A\in\Concepts                                                                                             \\
                          \interpret{R}{\gamma'(\prvb)}                                                                                                                                                                                  & \eqdef \left( \set{ \tuple{\de'_b,\ve'_b}\in\Dom'\times\Dom' } \cap \interpret{R}{\gamma'(\pr'_\mathrm{diag})} \right) \\
                                                                                                                                                                                                                                         & \quad \cup  \set{ \tuple{\de'_{1-b},\ve'_{1-b}}\in\Dom'\times\Dom' \guard
                          \bigcup_{{\pr \in \sigma(\mathsf{u}),}\atop{\delta \in \Delta}} \set{ \tuple{\de'_{1-b}(\pr,\de), \ve'_{1-b}(\pr,\de)}} \subseteq{\bigcap_{\pr \in \sigma(\mathsf{u})}}\interpret{R}{\gamma(\pr)}  }                                                                                                                                    \\
                                                                                                                                                                                                                                         & \quad \cup  \set{ \tuple{\de'_{1-b},\ve'_b}\in\Dom'\times\Dom' \guard
                          \tuple{\zeta'_b,\ve'_b}\in \interpret{R}{\gamma'(\pr'_\mathrm{diag})} \text{ for some }  \zeta'_b \approx \de'_{1-b}
                          }
                                                                                                                                                                                                                                         & \text{ for } R\in\Roles
                      \end{align*}
                      where \mbox{$\zeta'_b \approx \de'_{1-b} $} holds iff for every \mbox{$\pr \in \sigma(\mathsf{u})$} we have
                      $
                          \textstyle\bigcup_{{\delta \in \Delta}} \{\zeta'_b(\pi,\delta)\}
                          =
                          \textstyle\bigcup_{{\pr_* \in \sigma(\mathsf{u}),}\atop{\delta_* \in \Delta}} \{\de'_{1-b}(\pi_*,\delta_*) \}
                      $.
                      Finally for the fresh concept name $A$ introduced by the rule, define \mbox{$\interpretpgpp{A} \eqdef \interpretpgpp{C}$} for all \mbox{$\pr'\in\sigma'(\stu)$}.

                      We now show that \mbox{$\dlstruct'\models\K'$}.
                      For this, we start out with some useful observations.
                      \begin{numberclaim}
                          \label{claim:norm:back-and-forth}
                          For every concept $E$, \mbox{$\de'\in\Dom'$}, and \mbox{$\pr'\in\Precs'\setminus\set{\prvo,\prvi}$}\/:
                          \[
                              \de'\in\interpretpgpp{E}
                              \iff
                              \left(
                              \forall\pr\in\sigma(\stu):\forall\de\in\Dom:\de'(\pr,\de)\in\interpret{E}{\gamma(\pr'(\pr,\de))}
                              \right)
                          \]
                          \begin{claimproof}
                              We use structural induction on $E$.
                              \begin{itemize}
                                  \item $E=A\in\BCC$: By definition.
                                  \item $E=E_1\dland E_2$:
                                        \begin{align*}
                                             & \phiff \de'\in\interpretpgpp{(E_1\dland E_2)}                                                                                                                                            \\
                                             & \iff \de'\in\interpretpgpp{E_1}\cap\interpretpgpp{E_2}                                                                                                                                   \\
                                             & \iff \de'\in\interpretpgpp{E_1} \text{ and } \de'\in\interpretpgpp{E_2}                                                                                                                  \\
                                             & \stackrel{\text{(IH)}}{\iff} \left[\forall\pr\in\sigma(\stu):\forall\de\in\Dom:\de'(\pr,\de)\in\interpret{E_1}{\gamma(\pr'(\pr,\de))}\right]
                                            \text{ and } \left[\forall\pr\in\sigma(\stu):\forall\de\in\Dom:\de'(\pr,\de)\in\interpret{E_2}{\gamma(\pr'(\pr,\de))}\right]                                                                \\
                                             & \iff \forall\pr\in\sigma(\stu):\forall\de\in\Dom:\left[\de'(\pr,\de)\in\interpret{E_1}{\gamma(\pr'(\pr,\de))} \text{ and } \de'(\pr,\de)\in\interpret{E_2}{\gamma(\pr'(\pr,\de))}\right] \\
                                             & \iff \forall\pr\in\sigma(\stu):\forall\de\in\Dom:\de'(\pr,\de)\in\interpret{E_1}{\gamma(\pr'(\pr,\de))} \cap \interpret{E_2}{\gamma(\pr'(\pr,\de))}                                      \\
                                             & \iff \forall\pr\in\sigma(\stu):\forall\de\in\Dom:\de'(\pr,\de)\in\interpret{(E_1\dland E_2)}{\gamma(\pr'(\pr,\de))}
                                        \end{align*}
                                  \item $E=\exists R.B$:
                                        \begin{align*}
                                             & \phiff \de'\in\interpretpgpp{\exists R.B}                                                                                                                                                                                                                                                    \\
                                             & \iff \exists\ve'\in\Dom': \left[ \tuple{\de',\ve'}\in\interpretpgpp{R} \text{ and } \ve'\in\interpretpgpp{B}\right]                                                                                                                                                                          \\
                                             & \stackrel{\text{(IH)}}{\iff} \exists\ve'\in\Dom': \left[ \tuple{\de',\ve'}\in\interpretpgpp{R} \text{ and } \forall\pr\in\sigma(\stu):\forall\de\in\Dom:\ve'(\pr,\de)\in\interpret{B}{\gamma(\pr'(\pr,\de))} \right]                                                                         \\
                                             & \iff \exists\ve'\in\Dom': \left[ \left[ \forall\pr\in\sigma(\stu):\forall\de\in\Dom:\tuple{\de'(\pr,\de),\ve'(\pr,\de)}\in\interpret{R}{\gamma(\pr'(\pr,\de))} \right] \text{ and } \forall\pr\in\sigma(\stu):\forall\de\in\Dom:\ve'(\pr,\de)\in\interpret{B}{\gamma(\pr'(\pr,\de))} \right] \\
                                             & \iff \exists\ve'\in\Dom': \left[ \forall\pr\in\sigma(\stu):\forall\de\in\Dom: \left[ \tuple{\de'(\pr,\de),\ve'(\pr,\de)}\in\interpret{R}{\gamma(\pr'(\pr,\de))} \text{ and } \ve'(\pr,\de)\in\interpret{B}{\gamma(\pr'(\pr,\de))} \right] \right]                                            \\
                                             & \stackrel{\ddagger}{\iff} \forall\pr\in\sigma(\stu):\forall\de\in\Dom:\exists\ve'\in\Dom': \left[ \tuple{\de'(\pr,\de),\ve'(\pr,\de)}\in\interpret{R}{\gamma(\pr'(\pr,\de))} \text{ and } \ve'(\pr,\de)\in\interpret{B}{\gamma(\pr'(\pr,\de))} \right]                                       \\
                                             & \stackrel{\dagger}{\iff} \forall\pr\in\sigma(\stu):\forall\de\in\Dom:\exists\ve\in\Dom: \left[ \tuple{\de'(\pr,\de),\ve}\in\interpret{R}{\gamma(\pr'(\pr,\de))} \text{ and } \ve\in\interpret{B}{\gamma(\pr'(\pr,\de))} \right]                                                              \\
                                             & \iff \forall\pr\in\sigma(\stu):\forall\de\in\Dom:\de'(\pr,\de)\in\interpret{(\exists R.B)}{\gamma(\pr'(\pr,\de))}
                                        \end{align*}
                                        Note that for the ``$\impliedby$'' direction, there is always an \mbox{$\ve'\in\Dom'$} such that for all \mbox{$\pr\in\sigma(\stu)$} and \mbox{$\de\in\Dom$} we find \mbox{$\ve'(\pr,\de)=\ve$} for the appropriate \mbox{$\ve\in\Dom$} (step $\dagger$), and thus this $\ve'$ is in this sense independent of concrete \mbox{$\pr\in\sigma(\stu)$} and \mbox{$\de\in\Dom$} (step $\ddagger$).
                                  \item $E=\standdsp B$:
                                        \begin{align*}
                                             & \phiff \de'\in\interpretpgpp{(\standdsp B)}                                                                                                                   \\
                                             & \iff \exists\pr''\in\sigma'(\stsp):\de'\in\interpretpgppp{B}                                                                                                  \\
                                             & \stackrel{\text{(IH)}}{\iff} \exists\pr''\in\sigma'(\stsp):\forall\pr\in\sigma(\stu):\forall\de\in\Dom: \de'(\pr,\de)\in\interpret{B}{\gamma(\pr''(\pr,\de))} \\
                                             & \iff \forall\pr\in\sigma(\stu):\forall\de\in\Dom:\exists\pr''\in\sigma'(\stsp):\de'(\pr,\de)\in\interpret{B}{\gamma'(\pr''(\pr,\de))}                         \\
                                             & \iff \forall\pr\in\sigma(\stu):\forall\de\in\Dom:\exists\pr_\stsp\in\sigma(\stsp):\de'(\pr,\de)\in\interpret{B}{\gamma(\pr_\stsp)}                            \\
                                             & \iff \forall\pr\in\sigma(\stu):\forall\de\in\Dom:\de'(\pr,\de)\in\interpret{(\standdsp B)}{\gamma(\pr'(\pr,\de))}
                                        \end{align*}
                                        Again, for ``$\impliedby$'', we find a \mbox{$\pr''\in\sigma(\stsp)$} for which always \mbox{$\pr''(\pr,\de)=\pr_\stsp$} as desired.
                                  \item $E=\standbsp B$:
                                        \begin{align*}
                                             & \phiff \de'\in\interpretpgpp{(\standbsp B)}                                                                                                                   \\
                                             & \iff \forall\pr''\in\sigma'(\stsp):\de'\in\interpretpgpp{B}                                                                                                   \\
                                             & \stackrel{\text{(IH)}}{\iff} \forall\pr''\in\sigma'(\stsp):\forall\pr\in\sigma(\stu):\forall\de\in\Dom: \de'(\pr,\de)\in\interpret{B}{\gamma(\pr''(\pr,\de))} \\
                                             & \iff \forall\pr\in\sigma(\stu):\forall\de\in\Dom:\forall\pr''\in\sigma'(\stsp):\de'(\pr,\de)\in\interpret{B}{\gamma'(\pr''(\pr,\de))}                         \\
                                             & \stackrel{\dagger}{\iff} \forall\pr\in\sigma(\stu):\forall\de\in\Dom:\forall\pr_\stsp\in\sigma(\stsp):\de'(\pr,\de)\in\interpret{B}{\gamma(\pr_\stsp)}        \\
                                             & \iff \forall\pr\in\sigma(\stu):\forall\de\in\Dom:\de'(\pr,\de)\in\interpret{(\standbsp B)}{\gamma(\pr'(\pr,\de))}
                                        \end{align*}
                                        The equivalence marked $\dagger$ holds because for every \mbox{$\pr_\stsp\in\sigma(\stsp)$} we can find a \mbox{$\pr''\in\sigma'(\stsp)$} such that \mbox{$\pr''(\pr,\de)=\pr_\stsp$} for all \mbox{$\pr\in\sigma(\stu)$} and \mbox{$\de\in\Dom$} (e.g.\ the constant function \mbox{$\set{(\pr,\de)\mapsto \pr_\stsp}$});
                                        conversely, for every \mbox{$\pr''\in\sigma'(\stsp)$}, \mbox{$\pr\in\sigma(\stu)$}, and \mbox{$\de\in\Dom$}, we have \mbox{$\pr''(\pr,\de)\in\sigma(\stsp)$} by definition.
                              \end{itemize}
                          \end{claimproof}
                      \end{numberclaim}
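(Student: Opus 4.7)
The plan is to proceed by structural induction on $E$, exploiting the fact that by construction $\Dom'$ and $\Precs' \setminus \set{\prvo, \prvi}$ are essentially rich function spaces: they contain every function from $\sigma(\stu) \times \Dom$ into (tagged copies of) $\Dom$ respectively $\Precs$. The base cases $E \in \Concepts \cup \set{\top, \bot}$ are immediate, since the interpretation of concept names on non-distinguished $\pr' \in \Precs' \setminus \set{\prvo, \prvi}$ was \emph{defined} pointwise to satisfy exactly this back-and-forth condition. The conjunction case $E = E_1 \dland E_2$ reduces to applying the two induction hypotheses on each conjunct and then interchanging a $\forall\forall$-block with $\land$.

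The main technical effort lies in the existential case $E = \exists R.B$ and the diamond case $E = \standdsp B$, specifically their ``$\Leftarrow$''-directions. In each, the hypothesis supplies a pointwise family of witnesses (a $\ve_{(\pr,\de)} \in \Dom$ or a $\pr_{(\pr,\de)} \in \sigma(\stsp)$ for every $(\pr,\de) \in \sigma(\stu) \times \Dom$), and I must assemble these into a \emph{single} element of $\Dom'$ or $\Precs'$ that witnesses the claim uniformly. Here I plan to invoke a mild form of choice to define $\ve'(\pr, \de) \eqdef \ve_{(\pr, \de)}$ and analogously $\pr''(\pr, \de) \eqdef \pr_{(\pr, \de)}$; the construction of $\Dom'$ and $\Precs'$ as full function spaces guarantees that both assembled functions reside in the intended sets, and automatically places $\pr''$ in $\sigma'(\stsp)$ because its image lies in $\sigma(\stsp)$. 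Membership of $\tuple{\de',\ve'}$ in $\interpret{R}{\gamma'(\pr')}$ (respectively satisfaction of the diamond at $\pr''$) and $\ve' \in \interpret{B}{\gamma'(\pr')}$ then reduce, via the pointwise definition of role and concept interpretation together with the induction hypothesis on $B$, to exactly the pointwise data we were given. The ``$\Rightarrow$''-directions are conversely a matter of reading off the pointwise values of the chosen witness and applying the induction hypothesis coordinatewise.

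The box case $E = \standbsp B$ goes in the opposite direction: the quantifier $\forall \pr'' \in \sigma'(\stsp)$ must be shown equivalent to the pointwise $\forall \pr_\stsp \in \sigma(\stsp)$ at each $(\pr, \de)$. The ``$\Rightarrow$''-half is the delicate one and relies on a constant-function trick: for every $\pr_\stsp \in \sigma(\stsp)$, the constant map $\set{(\pr, \de) \mapsto \pr_\stsp}$ already lies in $\sigma'(\stsp)$, so instantiating the left-hand universal at this constant function forces the required condition at each $(\pr, \de)$. The ``$\Leftarrow$''-half is immediate from $\pr''(\pr, \de) \in \sigma(\stsp)$, which holds by the defining condition of $\sigma'(\stsp)$.

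I expect the main obstacle to be precisely the Skolem-style collecting step in the existential and diamond cases: turning a pointwise $\forall\exists$ into a single ``global'' $\exists$ really does require building a witness from coordinatewise data, and this is exactly why $\Dom'$ and $\Precs'$ were defined as rich function spaces in the first place. A secondary subtlety, easy to miss, is confirming that the assembled witness $\pr''$ never accidentally equals one of the two excluded precisifications $\prvo$ or $\prvi$ (for which the induction hypothesis does not apply); for the generic case $\stsp \neq \stvb$ this is automatic, but the case $\stsp = \stvb$ needs to be ruled out by inspecting the construction (the constant functions used above are distinct from the two tagged diagonal copies), and this bookkeeping should be performed carefully in the final write-up.
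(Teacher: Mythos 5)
Your proposal matches the paper's proof essentially step for step: structural induction on $E$, with the base and conjunction cases handled by definition and quantifier interchange, the $\exists R.B$ and $\standdsp B$ cases resolved by assembling the pointwise witnesses into single elements of the full function spaces $\Dom'$ and $\Precs'$ (the paper's steps marked $\dagger$ and $\ddagger$), and the $\standbsp B$ case settled by instantiating at constant functions. The subtlety you flag about a witness precisification accidentally being $\prvo$ or $\prvi$ is legitimate, but it is absorbed in the paper by the companion claim that covers exactly those two precisifications, so your approach needs no further repair.
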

                      The claim above does not consider the newly introduced precisifications.
                      They are however covered by the next claim.
                      \begin{numberclaim}
                          \label{claim:norm:back-and-forth:v}
                          For every concept $E$, \mbox{$\de'\in\Dom'$}, and \mbox{$\pr'\in\set{\prvo,\prvi}$}\/:
                          \begin{enumerate}
                              \item\label{claim:norm:back-and-forth:v:back} $\de'\in\interpretpgpp{E}                                                                                                        \implies \forall\pr\in\sigma(\stu):\forall\de\in\Dom:\de'(\pr,\de)\in\interpret{E}{\gamma(\pr'(\pr,\de))}$
                              \item\label{claim:norm:back-and-forth:v:forth} $\forall\pr\in\sigma(\stu):\forall\de\in\Dom:\forall\pr_*\in\sigma(\stu): \de'(\pr,\de)\in\interpret{E}{\gamma(\pr'(\pr_*,\de))} \implies \de'\in\interpretpgpp{E}$
                          \end{enumerate}
                          \begin{claimproof}
                              We use structural induction on $E$.
                              \begin{itemize}
                                  \item $E=A\in\BCC$:
                                        Assume \mbox{$\pr'=\prvo$} (the remaining case is symmetric).
                                        For direction (\ref{claim:norm:back-and-forth:v:back}) we have\/:
                                        \begin{align*}
                                             & \phiff \de'_b\in\interpret{A}{\gamma'(\prvo)} \text{ for some } b\in\set{0,1}                                                                                                                                                                              \\
                                             & \iff \de'_0\in\interpret{A}{\gamma'(\pr'_{\mathrm{diag}})} \text{ or } \mathop{\bigcup_{\pr\in\sigma(\stu),}}_{\de\in\Dom}\set{\de'_1(\pr,\de)}\subseteq\bigcap_{\pr\in\sigma(\stu)}\interpretgp{A}                                                        \\
                                             & \iff \forall\pr\in\sigma(\stu):\forall\de\in\Dom:\de'_0(\pr,\de)\in\interpret{A}{\gamma(\pr'_{\mathrm{diag}}(\pr,\de))} \text{ or } \forall\pr\in\sigma(\stu):\forall\de\in\Dom:\forall\pr_*\in\sigma(\stu):\de'_1(\pr,\de)\in\interpret{A}{\gamma(\pr_*)} \\
                                             & \iff \forall\pr\in\sigma(\stu):\forall\de\in\Dom:\de'_0(\pr,\de)\in\interpret{A}{\gamma(\pr)} \text{ or } \forall\pr\in\sigma(\stu):\forall\de\in\Dom:\forall\pr_*\in\sigma(\stu):\de'_1(\pr,\de)\in\interpret{A}{\gamma(\pr_*)}                           \\
                                             & \implies \forall\pr\in\sigma(\stu):\forall\de\in\Dom:\de'_0(\pr,\de)\in\interpret{A}{\gamma(\pr)} \text{ or } \forall\pr\in\sigma(\stu):\forall\de\in\Dom:\de'_1(\pr,\de)\in\interpret{A}{\gamma(\pr)}                                                     \\
                                             & \iff \forall\pr\in\sigma(\stu):\forall\de\in\Dom:\de'_b(\pr,\de)\in\interpret{A}{\gamma(\pr)} \text{ for some } b\in\set{0,1}                                                                                                                              \\
                                             & \iff \forall\pr\in\sigma(\stu):\forall\de\in\Dom:\de'_b(\pr,\de)\in\interpret{A}{\gamma(\pr'_{\mathrm{diag}}(\pr,\de))} \text{ for some } b\in\set{0,1}                                                                                                    \\
                                             & \iff \forall\pr\in\sigma(\stu):\forall\de\in\Dom:\de'_b(\pr,\de)\in\interpret{A}{\gamma(\prvo(\pr,\de))} \text{ for some } b\in\set{0,1}
                                        \end{align*}
                                        In direction (\ref{claim:norm:back-and-forth:v:forth}), we have\/:
                                        \begin{align*}
                                             & \phiff \forall\pr\in\sigma(\stu):\forall\de\in\Dom:\forall\pr_*\in\sigma(\stu): \de'_b(\pr,\de)\in\interpret{E}{\gamma(\pr'(\pr_*,\de))} \text{ for some } b\in\set{0,1}                                                                                                             \\
                                             & \iff \forall\pr\in\sigma(\stu):\forall\de\in\Dom:\forall\pr_*\in\sigma(\stu): \de'_0(\pr,\de)\in\interpret{E}{\gamma(\pr'(\pr_*,\de))} \text{ or } \forall\pr\in\sigma(\stu):\forall\de\in\Dom:\forall\pr_*\in\sigma(\stu): \de'_1(\pr,\de)\in\interpret{E}{\gamma(\pr'(\pr_*,\de))} \\
                                             & \implies \forall\pr\in\sigma(\stu):\forall\de\in\Dom:\de'_0(\pr,\de)\in\interpret{A}{\gamma(\pr)} \text{ or } \forall\pr\in\sigma(\stu):\forall\de\in\Dom:\forall\pr_*\in\sigma(\stu):\de'_1(\pr,\de)\in\interpret{A}{\gamma(\pr_*)}
                                        \end{align*}
                                        In the last line, we continue with the equivalences from direction (\ref{claim:norm:back-and-forth:v:back}) above the line with ``$\implies\!$''.
                                  \item $E=\exists R.B$: We show the claim for \mbox{$\pr'=\prvo$}, as the other case is symmetric.
                                        In direction (\ref{claim:norm:back-and-forth:v:back}), let \mbox{$\de'_b\in\interpretpgpp{(\exists R.B)}$}.
                                        By definition of the semantics, there is a \mbox{$\ve'_c\in\Dom'$} such that \mbox{$\tuple{\de'_b,\ve'_c}\in\interpretpgpp{R}$} and \mbox{$\ve'_c\in\interpretpgpp{B}$}.
                                        Applying the induction hypothesis to \mbox{$\ve'_c\in\interpretpgpp{B}$} yields that
                                        \begin{gather}
                                            \label{eq:back-and-forth:v:exists:ih}
                                            \forall\pr\in\sigma(\stu):\forall\de\in\Dom:\ve'_c(\pr,\de)\in\interpret{B}{\gamma(\pr'(\pr,\de))}=\interpret{B}{\gamma(\pr)}
                                        \end{gather}
                                        By definition of $\interpretpgpp{R}$, we have three cases for \mbox{$\tuple{\de'_b,\ve'_c}\in\interpretpgpp{R}$}, in each of which we show the claim.
                                        \begin{enumerate}
                                            \item $\tuple{\de'_0,\ve'_0}\in\interpret{R}{\gamma'(\pr'_{\mathrm{diag}})}$.
                                                  That is, \mbox{$\tuple{\de'_0(\pr,\de),\ve'_0(\pr,\de)}\in\interpret{R}{\gamma(\pr)}$} for all \mbox{$\pr\in\sigma(\stu)$} and \mbox{$\de\in\Dom$}.
                                                  Together with \Cref{eq:back-and-forth:v:exists:ih}, we obtain
                                                  \[
                                                      \forall\pr\in\sigma:\forall\de\in\Dom:\left( \tuple{\de'_0(\pr,\de),\ve'_0(\pr,\de)}\in\interpret{R}{\gamma(\pr)} \text{ and } \ve'_c(\pr,\de)\in\interpret{B}{\gamma(\pr)} \right)
                                                  \]
                                                  Now since $\ve'_0$ and $\ve'_1$ are two distinct copies of the same function, this means that
                                                  \[
                                                      \forall\pr\in\sigma:\forall\de\in\Dom:\left( \tuple{\de'_0(\pr,\de),\ve'_0(\pr,\de)}\in\interpret{R}{\gamma(\pr)} \text{ and } \ve'_0(\pr,\de)\in\interpret{B}{\gamma(\pr)} \right)
                                                  \]
                                                  Thus
                                                  \[
                                                      \forall\pr\in\sigma:\forall\de\in\Dom:\exists\ve\in\Dom:\left( \tuple{\de'_0(\pr,\de),\ve}\in\interpret{R}{\gamma(\pr)} \text{ and } \ve\in\interpret{B}{\gamma(\pr)} \right)
                                                  \]
                                                  namely we set \mbox{$\ve\eqdef\ve'(\pr,\de)$} in each case.
                                                  By the definition of the semantics we get
                                                  \[
                                                      \forall\pr\in\sigma:\forall\de\in\Dom:\de'(\pr,\de)\in\interpret{(\exists R.B)}{\gamma(\pr)}
                                                  \]
                                                  which, by \mbox{$\pr'=\prvo=\pr'_{\mathrm{diag}}$} means
                                                  \[
                                                      \forall\pr\in\sigma:\forall\de\in\Dom:\de'(\pr,\de)\in\interpret{(\exists R.B)}{\gamma(\pr'(\pr,\de))}
                                                  \]
                                            \item $\forall\pr\in\sigma(\stu):\forall\de\in\Dom:\forall\pr_*\in\sigma(\stu):\tuple{\de'_1(\pr,\de),\ve'_1(\pr,\de)}\in\interpret{R}{\gamma(\pr_*)}$.
                                                  In combination with \Cref{eq:back-and-forth:v:exists:ih}, we get
                                                  \[
                                                      \forall\pr\in\sigma(\stu):\forall\de\in\Dom:\forall\pr_*\in\sigma(\stu):
                                                      \left(
                                                      \tuple{\de'_1(\pr,\de),\ve'_1(\pr,\de)}\in\interpret{R}{\gamma(\pr_*)}
                                                      \text{ and }
                                                      \ve'_c(\pr,\de)\in\interpret{B}{\gamma(\pr)}
                                                      \right)
                                                  \]
                                                  which, by virtue of $\ve'_0$ and $\ve'_1$ being the same function, means
                                                  \[
                                                      \forall\pr\in\sigma(\stu):\forall\de\in\Dom:\forall\pr_*\in\sigma(\stu):
                                                      \left(
                                                      \tuple{\de'_1(\pr,\de),\ve'_1(\pr,\de)}\in\interpret{R}{\gamma(\pr_*)}
                                                      \text{ and }
                                                      \ve'_1(\pr,\de)\in\interpret{B}{\gamma(\pr)}
                                                      \right)
                                                  \]
                                                  which in particular implies
                                                  \[
                                                      \forall\pr\in\sigma(\stu):\forall\de\in\Dom:
                                                      \left(
                                                      \tuple{\de'_1(\pr,\de),\ve'_1(\pr,\de)}\in\interpret{R}{\gamma(\pr)}
                                                      \text{ and }
                                                      \ve'_1(\pr,\de)\in\interpret{B}{\gamma(\pr)}
                                                      \right)
                                                  \]
                                                  which implies the claim as in the case above.
                                            \item there is some \mbox{$\zeta'_0\in\Dom'$} with \mbox{$\zeta'_0\approx\de'_1$} and \mbox{$\tuple{\zeta'_0,\ve'_0}\in\interpret{R}{\gamma'(\pr'_{\mathrm{diag}})}$}.
                                                  The latter property implies
                                                  \[
                                                      \forall\pr\in\sigma(\stu):\forall\de\in\Dom:\tuple{\zeta'_0(\pr,\de),\ve'_0(\pr,\de)}\in\interpret{R}{\gamma(\pr)}
                                                  \]
                                                  In turn, from \mbox{$\zeta'_0\approx\de'_0$} we obtain
                                                  \[
                                                      \forall\pr\in\sigma(\stu):\bigcup_{\de\in\Dom}\set{\zeta'_0(\pr,\de)}=\mathop{\bigcup_{\pr_*\in\sigma(\stu),}}_{\de_*\in\Dom}\set{\de'_1(\pr_*,\de_*)}
                                                  \]
                                                  This means in particular that
                                                  \[
                                                      \forall\pr,\pr_*\in\sigma(\stu):\forall\de_*\in\Dom:\exists\de_+\in\Dom:\de'_1(\pr_*,\de_*)=\zeta'_0(\pr,\de_+)
                                                  \]
                                                  Thus
                                                  \[
                                                      \forall\pr,\pr_*\in\sigma(\stu):\forall\de_*\in\Dom:\exists\de_+\in\Dom:\tuple{\de'_1(\pr_*,\de_*),\ve'_0(\pr,\de_+)}\in\interpret{R}{\gamma(\pr)}
                                                  \]
                                                  which we combine with \Cref{eq:back-and-forth:v:exists:ih} as usual to obtain
                                                  \[
                                                      \forall\pr,\pr_*\in\sigma(\stu):\forall\de_*\in\Dom:\exists\de_+\in\Dom:
                                                      \left(
                                                      \tuple{\de'_1(\pr_*,\de_*),\ve'_0(\pr,\de_+)}\in\interpret{R}{\gamma(\pr)}
                                                      \text{ and }
                                                      \ve'_c(\pr,\de_+)\in\interpret{B}{\gamma(\pr)}
                                                      \right)
                                                  \]
                                                  which in particular implies
                                                  \[
                                                      \forall\pr\in\sigma(\stu):\forall\de\in\Dom:\exists\de_+\in\Dom:
                                                      \left(
                                                      \tuple{\de'_1(\pr,\de),\ve'_0(\pr,\de_+)}\in\interpret{R}{\gamma(\pr)}
                                                      \text{ and }
                                                      \ve'_0(\pr,\de_+)\in\interpret{B}{\gamma(\pr)}
                                                      \right)
                                                  \]
                                                  whence we get
                                                  \[
                                                      \forall\pr\in\sigma(\stu):\forall\de\in\Dom:\de'_1(\pr,\de)\in\interpretgp{(\exists R.B)}
                                                  \]
                                        \end{enumerate}
                                        In direction (\ref{claim:norm:back-and-forth:v:forth}), assume that
                                        \[
                                            \forall\pr\in\sigma:\forall\de\in\Dom:\forall\pr_*\in\sigma(\stu):\de'_0(\pr,\de)\in\interpret{(\exists R.B)}{\gamma(\pr'(\prvo,\de))}
                                        \]
                                        By the definition of the semantics, we obtain
                                        \[
                                            \forall\pr\in\sigma:\forall\de\in\Dom:\forall\pr_*\in\sigma(\stu):\exists\ve\in\Dom:
                                            \left(
                                            \tuple{\de'_0(\pr,\de),\ve}\in\interpret{R}{\gamma(\pr'(\pr_*,\de))} \text{ and } \ve_{\pr,\de}\in\interpret{B}{\gamma(\pr'(\pr_*,\de))}
                                            \right)
                                        \]
                                        Since \mbox{$\pr'=\prvo=\pr'_{\mathrm{diag}}$}, this means that
                                        \[
                                            \forall\pr\in\sigma:\forall\de\in\Dom:\forall\pr_*\in\sigma(\stu):\exists\ve_{\pr,\de}\in\Dom:
                                            \left(
                                            \tuple{\de'(\pr,\de),\ve}\in\interpret{R}{\gamma(\pr_*)} \text{ and } \ve\in\interpret{B}{\gamma(\pr_*)}
                                            \right)
                                        \]
                                        Since $\Dom'$ contains every function \mbox{$\zeta'\colon\sigma(\stu)\times\Dom\to\Dom$}, there in particular exists a function \mbox{$\ve'_0\in\Dom'$} such that \mbox{$\ve'_0(\pr,\de)=\ve_{\pr,\de}$} for all \mbox{$\pr\in\sigma(\stu)$} and \mbox{$\de\in\Dom$}.
                                        Thus
                                        \[
                                            \exists\ve'_0\in\Dom':\forall\pr\in\sigma(\stu):\forall\de\in\Dom:\forall\pr_*\in\sigma(\stu):
                                            \left(
                                            \tuple{\de'_0(\pr,\de),\ve'_0(\pr,\de)}\in\interpret{R}{\gamma(\pr_*)} \text{ and } \ve'_0(\pr,\de)\in\interpret{B}{\gamma(\pr_*)}
                                            \right)
                                        \]
                                        which means
                                        \begin{multline*}
                                            \left(
                                            \exists\ve'_0\in\Dom':\forall\pr\in\sigma(\stu):\forall\de\in\Dom:\forall\pr_*\in\sigma(\stu):
                                            \tuple{\de'_0(\pr,\de),\ve'_0(\pr,\de)}\in\interpret{R}{\gamma(\pr_*)}
                                            \right)
                                            \text{ and } \\
                                            \left(
                                            \exists\ve'_0\in\Dom':\forall\pr\in\sigma(\stu):\forall\de\in\Dom:\forall\pr_*\in\sigma(\stu):
                                            \ve'_0(\pr,\de)\in\interpret{B}{\gamma(\pr_*)}
                                            \right)
                                        \end{multline*}
                                        where we can apply the induction hypothesis to the second line to obtain
                                        \begin{multline*}
                                            \left(
                                            \exists\ve'_0\in\Dom':\forall\pr\in\sigma(\stu):\forall\de\in\Dom:\forall\pr_*\in\sigma(\stu):
                                            \tuple{\de'_0(\pr,\de),\ve'_0(\pr,\de)}\in\interpret{R}{\gamma(\pr_*)}
                                            \right)
                                            \text{ and } \\
                                            \ve'_0\in\interpret{B}{\gamma'(\prvo)}
                                        \end{multline*}
                                        which in particular means that
                                        \[
                                            \exists\ve'_0\in\Dom':
                                            \left(
                                            \left(
                                                \forall\pr\in\sigma(\stu):\forall\de\in\Dom:
                                                \tuple{\de'_0(\pr,\de),\ve'_0(\pr,\de)}\in\interpret{R}{\gamma(\pr)}
                                                \right)
                                            \text{ and } \ve'_0\in\interpret{B}{\gamma'(\prvo)}
                                            \right)
                                        \]
                                        which (employing among other things that \mbox{$\prvo(\pr,\de)=\pr$} for all \mbox{$\pr\in\sigma(\stu)$} and \mbox{$\de\in\Dom$}) implies
                                        \[
                                            \exists\ve'_0\in\Dom':\left( \tuple{\de'_0,\ve'_0}\in\interpret{R}{\gamma'(\prvo)} \text{ and } \ve'\in\interpret{B}{\gamma'(\prvo)} \right)
                                        \]
                                        that is,
                                        \(
                                        \de'_0\in\interpret{(\exists R.B)}{\gamma'(\prvo)}
                                        \).
                                  \item $E=\standdsp B$:
                                        In direction (\ref{claim:norm:back-and-forth:v:back}), assume \mbox{$\de'\in\interpretpgpp{(\standdsp B)}$}.
                                        Thus there exists some \mbox{$\pr''\in\sigma'(\stsp)$} such that \mbox{$\de'\in\interpretpgppp{B}$}.
                                        The induction hypothesis yields
                                        \[
                                            \forall\pr\in\sigma(\stu):\forall\de\in\Dom:\de'(\pr,\de)\in\interpret{B}{\gamma(\pr''(\pr,\de))}
                                        \]
                                        Thus for all \mbox{$\pr\in\sigma(\stu)$} and \mbox{$\de\in\Dom$} there exists a \mbox{$\pr_{\pr,\de}^{\stsp}\in\sigma(\stsp)$}, namely \mbox{$\pr_{\pr,\de}^\stsp\eqdef\pr''(\pr,\de)$}, such that \mbox{$\de'(\pr,\de)\in\interpret{B}{\gamma(\pr_{\pr,\de}^\stsp)}$}.
                                        This directly yields
                                        \(
                                        \forall\pr\in\sigma(\stu):\forall\de\in\Dom:\de'(\pr,\de)\in\interpretpgpp{(\standdsp B)}
                                        \).

                                        In direction (\ref{claim:norm:back-and-forth:v:forth}), assume
                                        \[
                                            \forall\pr\in\sigma(\stu):\forall\de\in\Dom:\forall\pr_*\in\sigma(\stu): \de'(\pr,\de)\in\interpret{(\standdsp B)}{\gamma(\prvo(\pr_*,\de))}
                                        \]
                                        This means that
                                        \[
                                            \forall\pr\in\sigma(\stu):\forall\de\in\Dom:\forall\pr_*\in\sigma(\stu):\exists\pr_{\pr_*,\de}^\stsp\in\sigma(\stsp): \de'(\pr,\de)\in\interpret{B}{\gamma(\pr_{\pr_*,\de}^\stsp)}
                                        \]
                                        Choose \mbox{$\pr'_\stsp\in\sigma'(\stsp)$} such that for all \mbox{$\pr_*\in\sigma(\stu)$} and \mbox{$\de\in\Dom$}, we have \mbox{$\pr'_\stsp(\pr_*,\de)=\pr_{\pr_*,\de}^\stsp$}.
                                        Thus we obtain
                                        \[
                                            \exists\pr'_\stsp\in\sigma'(\stsp):\forall\pr\in\sigma(\stu):\forall\de\in\Dom:\forall\pr_*\in\sigma(\stu): \de'(\pr,\de)\in\interpret{B}{\gamma(\pr'_\stsp(\pr_*,\de))}
                                        \]
                                        We apply the induction hypothesis and get
                                        \(
                                        \exists\pr'_\stsp\in\Dom':\de'\in\interpret{B}{\gamma'(\pr'_\stsp)}
                                        \), that is,
                                        \(
                                        \de'\in\interpretpgpp{(\standdsp B)}
                                        \).
                                  \item $E=\standbsp B$:
                                        In direction (\ref{claim:norm:back-and-forth:v:back}), we have
                                        \begin{align*}
                                             & \phimplies \de'\in\interpretpgpp{(\standbsp B)}                                                                                                                  \\
                                             & \implies \forall\pr''\in\sigma'(\stsp):\de'\in\interpretpgppp{B}                                                                                                 \\
                                             & \stackrel{\text{(IH)}}{\implies} \forall\pr''\in\sigma'(\stsp):\forall\pr\in\sigma(\stu):\forall\de\in\Dom:\de'(\pr,\de)\in\interpret{B}{\gamma(\pr''(\pr,\de))} \\
                                             & \implies \forall\pr\in\sigma(\stu):\forall\de\in\Dom:\forall\pr''\in\sigma'(\stsp):\de'(\pr,\de)\in\interpret{B}{\gamma(\pr''(\pr,\de))}                         \\
                                             & \stackrel{\dagger}{\implies} \forall\pr\in\sigma(\stu):\forall\de\in\Dom:\forall\pr_\stsp\in\sigma(\stsp):\de'(\pr,\de)\in\interpret{B}{\gamma(\pr_\stsp)}       \\
                                             & \implies \forall\pr\in\sigma(\stu):\forall\de\in\Dom:\de'(\pr,\de)\in\interpret{(\standbsp B)}{\gamma(\pr)}
                                        \end{align*}
                                        and for the converse direction, (\ref{claim:norm:back-and-forth:v:forth}), we consider
                                        \begin{align*}
                                             & \phimplies \forall\pr\in\sigma(\stu):\forall\de\in\Dom:\forall\pr_*\in\sigma(\stu): \de'(\pr,\de)\in\interpret{(\standbs B)}{\gamma(\pr'(\pr_*,\de))}                                     \\
                                             & \implies \forall\pr\in\sigma(\stu):\forall\de\in\Dom:\forall\pr_*\in\sigma(\stu):\forall\pr_\stsp\in\sigma(\stsp): \de'(\pr,\de)\in\interpret{B}{\gamma(\pr_\stsp)}                       \\
                                             & \stackrel{\dagger}{\implies} \forall\pr\in\sigma(\stu):\forall\de\in\Dom:\forall\pr_*\in\sigma(\stu):\forall\pr''\in\sigma'(\stsp): \de'(\pr,\de)\in\interpret{B}{\gamma(\pr''(\pr,\de))} \\
                                             & \implies \forall\pr''\in\sigma'(\stsp):\forall\pr\in\sigma(\stu):\forall\de\in\Dom:\forall\pr_*\in\sigma(\stu): \de'(\pr,\de)\in\interpret{B}{\gamma(\pr''(\pr,\de))}                     \\
                                             & \stackrel{\text{(IH)}}{\implies} \forall\pr''\in\sigma'(\stsp):\de'\in\interpret{B}{\gamma'(\pr'')}                                                                                       \\
                                             & \implies \de'\in\interpretpgpp{(\standbsp B)}
                                        \end{align*}
                                        Here, in both directions the implications marked $\dagger$ can be justifed just like in the proof of \Cref{claim:norm:back-and-forth}.
                              \end{itemize}
                          \end{claimproof}
                      \end{numberclaim}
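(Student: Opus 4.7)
The plan is to prove both directions by structural induction on $E$, treating the case $\pr' = \prvo$ explicitly (the case $\pr' = \prvi$ is fully symmetric by swapping the two copy indices). Throughout, the key simplification is that $\prvb(\pr,\de) = \pr$ by definition of $\pr'_{\mathrm{diag}}$, so both implications effectively reduce to pointwise membership in $\interpret{E}{\gamma(\pr)}$. Note also that every $\de' \in \Dom'$ comes with a copy index $b \in \{0,1\}$, so we always write $\de' = \de'_b$ and case-split on whether $b$ matches the index of $\prvb$. In several subcases we can directly invoke \Cref{claim:norm:back-and-forth}, because $\pr'_{\mathrm{diag}} \notin \{\prvo,\prvi\}$ and so the previous claim applies to it.

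For the base case $E = A \in \Concepts$, I unfold the definition: $\interpret{A}{\gamma'(\prvo)}$ contains $\de'_0$ precisely when $\de'_0 \in \interpret{A}{\gamma'(\pr'_{\mathrm{diag}})}$, and contains $\de'_1$ precisely when the whole image of $\de'_1$ lies in $\bigcap_{\pr\in\sigma(\stu)} \interpretgp{A}$. Direction (1) follows from \Cref{claim:norm:back-and-forth} in the first subcase and immediately from the global intersection in the second. For direction (2), the strengthened hypothesis ``for all $\pr_*$'' places $\de'_b(\pr,\de)$ in $\bigcap_{\pr_*}\interpret{A}{\gamma(\pr_*)}$: if $b=0$ this satisfies the diagonal clause via \Cref{claim:norm:back-and-forth}, and if $b=1$ it satisfies the global-intersection clause directly. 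The inductive step for conjunction is routine, and the modal cases $E = \standdsp B$ and $E = \standbsp B$ follow the same pattern as in \Cref{claim:norm:back-and-forth}, using that $\Precs'$ contains every function $\sigma(\stu) \times \Dom \to \Precs$ to transfer between pointwise satisfaction across $\sigma(\stsp)$ and genuine satisfaction in $\sigma'(\stsp)$.

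The main obstacle is the existential case $E = \exists R.B$ in direction (1), where $\interpret{R}{\gamma'(\prvo)}$ has \emph{three} disjuncts instead of the two present in the base case, the extra one being the ``mixed'' clause involving $\zeta'_b \approx \de'_{1-b}$. Given $\tuple{\de'_b,\ve'_c} \in \interpret{R}{\gamma'(\prvo)}$ and $\ve'_c \in \interpretpgpp{B}$, the first two disjuncts are handled analogously to the role-name base case and to the $\exists R.B$ step in \Cref{claim:norm:back-and-forth}. The third disjunct requires the defining property of $\approx$: for every $\pr \in \sigma(\stu)$ the fiber $\bigcup_{\de \in \Dom}\{\zeta'_b(\pr,\de)\}$ equals the global image $\bigcup_{\pr_*,\de_*}\{\de'_{1-b}(\pr_*,\de_*)\}$, hence every value $\de'_{1-b}(\pr_*,\de_*)$ reappears as some $\zeta'_b(\pr,\de_+)$. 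Composing this with the pointwise $R$-edges from $\tuple{\zeta'_b,\ve'_b} \in \interpret{R}{\gamma'(\pr'_{\mathrm{diag}})}$ and the pointwise $B$-membership of $\ve'_b$ (both supplied by \Cref{claim:norm:back-and-forth}) yields, for each $\pr_*$ and $\de_*$, an $R$-successor of $\de'_{1-b}(\pr_*,\de_*)$ lying in $\interpret{B}{\gamma(\pr_*)}$, as required. For direction (2) with existentials, the standard function-picking trick from \Cref{claim:norm:back-and-forth}, namely choosing a single $\ve' \in \Dom'$ whose pointwise values coincide with the pointwise witnesses $\ve_{\pr,\de}$ provided by the hypothesis, places the resulting pair in the diagonal disjunct of $\interpret{R}{\gamma'(\prvo)}$ and so produces the desired $\exists R.B$-witness at $\prvo$.
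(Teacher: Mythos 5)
Your proposal is correct and follows essentially the same route as the paper's proof: structural induction, unfolding the special interpretation at $\prvo,\prvi$ into its diagonal and global-intersection clauses, a three-way case split on the role interpretation for $\exists R.B$ with the $\approx$-composition handling the mixed clause, and the function-picking trick for the converse direction. One small slip: the pointwise $B$-membership of the $\exists R.B$-successor is at $\prvb$ itself, so it is supplied by the induction hypothesis of the present claim rather than by \Cref{claim:norm:back-and-forth} (which excludes $\prvo$ and $\prvi$); the argument is otherwise unaffected.
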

                      The two previous claims can be combined to obtain the following “global” property:
                      \begin{numberclaim}
                          \label{claim:norm:forth:all}
                          For every concept $E$, \mbox{$\de'\in\Dom'$}, and \mbox{$\pr'\in\Precs'$}\/:
                          \[
                              \de'\in\interpretpgpp{E}
                              \implies
                              \left(
                              \forall\pr\in\sigma(\stu):\forall\de\in\Dom:\de'(\pr,\de)\in\interpret{E}{\gamma(\pr'(\pr,\de))}
                              \right)
                          \]
                          \begin{claimproof}
                              Follows from direction ``$\implies\!$'' of \Cref{claim:norm:back-and-forth} and \Cref{claim:norm:back-and-forth:v:back} of \Cref{claim:norm:back-and-forth:v}.
                          \end{claimproof}
                      \end{numberclaim}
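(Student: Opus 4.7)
The plan is to prove this ``global'' forward-direction property by a straightforward case distinction on $\pr' \in \Precs'$, exploiting the fact that the set $\Precs'$ was explicitly constructed as the disjoint union of the ``function'' precisifications $\Precs' \setminus \set{\prvo, \prvi}$ and the two extra copies $\prvo$ and $\prvi$ of $\pr'_{\mathrm{diag}}$. Each of the two parts has already been handled in a previous claim, so no fresh induction is required.

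First I would dispatch the case \mbox{$\pr' \in \Precs' \setminus \set{\prvo, \prvi}$}. For such precisifications, Claim~\ref{claim:norm:back-and-forth} applies verbatim and in fact delivers the stronger biconditional; taking its left-to-right direction yields exactly the desired implication. Second, for the remaining case \mbox{$\pr' \in \set{\prvo, \prvi}$}, I would invoke part~(\ref{claim:norm:back-and-forth:v:back}) of Claim~\ref{claim:norm:back-and-forth:v}, which is precisely the forward direction of the analogous pointwise-transfer property for the two distinguished precisifications. Combining both cases, for every \mbox{$\pr' \in \Precs'$} and every concept $E$ we obtain
\begin{linenomath*}
\[
    \de'\in\interpretpgpp{E}
    \implies
    \forall\pr\in\sigma(\stu):\forall\de\in\Dom:\de'(\pr,\de)\in\interpret{E}{\gamma(\pr'(\pr,\de))},
\]
\end{linenomath*}
as required.

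There is essentially no obstacle to this proof: all the real work --- the structural induction over the shape of $E$, and in particular the delicate argument for the $\exists R$-case with its three sub-cases for the definition of $\interpret{R}{\gamma'(\prvb)}$, as well as the $\standbsp$/$\standdsp$-cases --- has already been carried out in the preceding two claims. The present statement is a consolidation step whose sole purpose is to free subsequent arguments from having to perform this case split on $\pr'$ themselves, so the proof is a one-line invocation of the two earlier claims according to which half of $\Precs'$ the given $\pr'$ lies in.
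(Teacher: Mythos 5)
Your proposal is correct and matches the paper's own proof, which likewise derives the claim by combining the forward direction of Claim~\ref{claim:norm:back-and-forth} (covering $\pr'\in\Precs'\setminus\set{\prvo,\prvi}$) with part~(\ref{claim:norm:back-and-forth:v:back}) of Claim~\ref{claim:norm:back-and-forth:v} (covering $\pr'\in\set{\prvo,\prvi}$). You merely make the case split on $\pr'$ explicit where the paper leaves it implicit.
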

                      For the new precisifications, we sometimes also have to pay attention to which copy of the functions in $\Dom'$ we are currently considering, especially for the main case we will need below, namely where both copies are contained in the same concept's extension in both precisifications.
                      \begin{numberclaim}
                          \label{claim:norm:forth:b}
                          For every concept $E$, function \mbox{$\de'\colon\sigma(\stu)\times\Dom\to\Dom$}, and \mbox{$b\in\set{0,1}$}:
                          \begin{align}
                              \de'_b \in \interpret{E}{\prvb}    & \implies \forall\pr\in\sigma(\stu):\forall\de\in\Dom:\de'(\pr,\de)\in\interpretgp{E}                                          \label{eq:same} \\
                              \de'_{1-b}\in \interpret{E}{\prvb} & \implies \forall\pr\in\sigma(\stu):\forall\de\in\Dom:\forall\pr_*\in\sigma(\stu):\de'(\pr,\de)\in\interpret{E}{\gamma(\pr_*)} \label{eq:diff}
                          \end{align}
                          \begin{claimproof}
                              We use structural induction on $E$.
                              \begin{itemize}
                                  \item $E=A\in\BCC$: By definition.
                                  \item $E=\exists R.B$:
                                        \begin{description}
                                            \item[\normalfont(\ref{eq:same}):]
                                                Let \mbox{$\de'_b\in\interpret{(\exists R.B)}{\gamma'(\prvb)}$} for some \mbox{$b\in\set{0,1}$}.
                                                By definition of the semantics, there is an \mbox{$\ve'_c\in\Dom'$} such that \mbox{$\tuple{\de'_b,\ve'_c}\in\interpret{R}{\gamma'(\prvb)}$} and \mbox{$\ve'_c\in\interpret{B}{\gamma'(\prvb)}$}.
                                                By definition of \mbox{$\interpret{R}{\gamma'(\prvb)}$} the only possible reason for \mbox{$\tuple{\de'_b,\ve'_c}\in\interpret{R}{\gamma'(\prvb)}$} is that
                                                \mbox{$b=c$} and \mbox{$\tuple{\de'_b,\ve'_c}\in\interpret{R}{\gamma'(\pr'_{\mathrm{diag}})}$}, that is,
                                                \[
                                                    \forall\pr\in\sigma(\stu):\forall\de\in\Dom:\tuple{\de'_b(\pr,\de),\ve'_c(\pr,\de)}\in\interpret{R}{\gamma(\pr)}
                                                \]
                                                Due to \mbox{$b=c$}, we can apply the induction hypothesis of (\ref{eq:same}) to \mbox{$\ve'_c\in\interpret{B}{\gamma'(\prvb)}$} and obtain
                                                \[
                                                    \forall\pr\in\stu:\forall\de\in\Dom:\ve'_c(\pr,\de)\in\interpretgp{B}
                                                \]
                                                Togher with the above, we obtain
                                                \[
                                                    \forall\pr\in\sigma:\forall\de\in\Dom:\left( \tuple{\de'_b(\pr,\de),\ve'_c(\pr,\de)}\in\interpret{R}{\gamma(\pr)} \text{ and } \ve'_c(\pr,\de)\in\interpret{B}{\gamma(\pr)} \right)
                                                \]
                                                Thus
                                                \[
                                                    \forall\pr\in\sigma:\forall\de\in\Dom:\exists\ve_{\pr,\de}\in\Dom:\left( \tuple{\de'_b(\pr,\de),\ve_{\pr,\de}}\in\interpret{R}{\gamma(\pr)} \text{ and } \ve_{\pr,\de}\in\interpret{B}{\gamma(\pr)} \right)
                                                \]
                                                namely we set \mbox{$\ve_{\pr,\de}\eqdef\ve'_c(\pr,\de)$} in each case.
                                                By the definition of the semantics we get
                                                \[
                                                    \forall\pr\in\sigma:\forall\de\in\Dom:\de'(\pr,\de)\in\interpret{(\exists R.B)}{\gamma(\pr)}
                                                \]
                                            \item[\normalfont(\ref{eq:diff}):]
                                                Let \mbox{$\de'_{1-b}\in\interpret{(\exists R.B)}{\gamma'(\prvb)}$}.
                                                By the definition of the semantics, there exists some \mbox{$\ve'_c\in\Dom'$} such that \mbox{$\tuple{\de'_{1-b},\ve'_c}\in\interpret{R}{\gamma'(\prvb)}$} and \mbox{$\ve'_c\in\interpret{B}{\gamma'(\prvb)}$}.
                                                According to the definition of $\interpret{R}{\gamma'(\prvb)}$, there are two possible cases:
                                                \begin{enumerate}
                                                    \item \mbox{$c=1-b$} and \mbox{$\forall\pr\in\sigma(\stu):\forall\de\in\Dom:\forall\pr_*\in\sigma(\stu):\tuple{\de'_{1-b}(\pr,\de),\ve'_{c}(\pr,\de)}\in\interpretgps{R}$}.
                                                          We can apply the induction hypothesis of (\ref{eq:diff}) to the fact that \mbox{$\ve'_c=\ve'_{1-b}\in\interpret{B}{\gamma'(\prvb)}$} and obtain
                                                          \[
                                                              \forall\pr\in\stu:\forall\de\in\Dom:\forall\pr_*\in\sigma(\stu):\ve'_c(\pr,\de)\in\interpretgps{B}
                                                          \]
                                                          In combination with the presumption of this case, we obtain
                                                          \[
                                                              \forall\pr\in\sigma(\stu):\forall\de\in\Dom:\forall\pr_*\in\sigma(\stu):
                                                              \left(
                                                              \tuple{\de'_b(\pr,\de),\ve'_c(\pr,\de)}\in\interpretgps{R}
                                                              \text{ and }
                                                              \ve'_c(\pr,\de)\in\interpretgps{B}
                                                              \right)
                                                          \]
                                                          thus proving the claim.
                                                    \item $c=b$ and there is some \mbox{$\ze'_b\in\Dom'$} with \mbox{$\ze'_b\approx\de'_{1-b}$} and \mbox{$\tuple{\ze'_b,\ve'_c}\in\interpret{R}{\gamma'(\pr'_{\mathrm{diag}})}$}.
                                                          The latter property implies
                                                          \[
                                                              \forall\pr_*\in\sigma(\stu):\forall\de\in\Dom:\tuple{\ze'_b(\pr_*,\de),\ve'_c(\pr_*,\de)}\in\interpret{R}{\gamma(\pr_*)}
                                                          \]
                                                          In turn, from \mbox{$\ze'_b\approx\de'_{1-b}$} we obtain
                                                          \[
                                                              \forall\pr_*\in\sigma(\stu):\bigcup_{\de\in\Dom}\set{\ze'_b(\pr_*,\de)}=\mathop{\bigcup_{\pr\in\sigma(\stu),}}_{\de\in\Dom}\set{\de'_{1-b}(\pr,\de)}
                                                          \]
                                                          This means in particular that
                                                          \[
                                                              \forall\pr\in\sigma(\stu):\forall\de\in\Dom:\forall\pr_*\in\sigma(\stu):\exists\de_+\in\Dom:\ze'_b(\pr_*,\de_+)=\de'_{1-b}(\pr,\de)
                                                          \]
                                                          whence
                                                          \[
                                                              \forall\pr\in\sigma(\stu):\forall\de\in\Dom:\forall\pr_*\in\sigma(\stu):\exists\de_+\in\Dom:\tuple{\de'_{1-b}(\pr,\de),\ve'_c(\pr_*,\de_+)}\in\interpretgps{R}
                                                          \]
                                                          Now \mbox{$b=c$} means that we can apply the induction hypothesis of (\ref{eq:same}) to \mbox{$\ve'_c=\ve'_b\in\interpret{B}{\gamma'(\prvb)}$}, from which we get
                                                          \[
                                                              \forall\pr_*\in\sigma(\stu):\forall\de\in\Dom:\ve'_b(\pr_*,\de)\in\interpretgps{B}
                                                          \]
                                                          which we combine with the line above as usual to obtain
                                                          \[
                                                              \forall\pr,\pr_*\in\sigma(\stu):\forall\de\in\Dom:\exists\de_+\in\Dom:
                                                              \left(
                                                              \tuple{\de'_{1-b}(\pr,\de),\ve'_c(\pr_*,\de_+)}\in\interpretgps{R}
                                                              \text{ and }
                                                              \ve'_c(\pr_*,\de_+)\in\interpretgps{B}
                                                              \right)
                                                          \]
                                                          which in particular implies
                                                          \[
                                                              \forall\pr\in\sigma(\stu):\forall\de\in\Dom:\forall\pr_*\in\sigma(\stu):\exists\ve\in\Dom:
                                                              \left(
                                                              \tuple{\de'_{1-b}(\pr,\de),\ve}\in\interpretgps{R}
                                                              \text{ and }
                                                              \ve\in\interpretgps{B}
                                                              \right)
                                                          \]
                                                          namely in each case \mbox{$\ve\eqdef\ve'_c(\pr_*,\de_+)$},
                                                          whence we get
                                                          \[
                                                              \forall\pr\in\sigma(\stu):\forall\de\in\Dom:\forall\pr_*\in\sigma(\stu):\de'_{1-b}(\pr,\de)\in\interpret{(\exists R.B)}{\gamma(\pr_*)}
                                                          \]
                                                          thus proving the claim.
                                                \end{enumerate}
                                        \end{description}
                                  \item $E=\standdsp B$:
                                        Let \mbox{$\pr'\in\set{\prvo,\prvi}$} and consider \mbox{$\de'_b\in\interpretpgpp{(\standdsp B)}$}.
                                        Then there is some \mbox{$\pr''\in\sigma'(\stsp)$} such that \mbox{$\de'_{1-b}\in\interpretpgppp{B}$}.
                                        By \Cref{claim:norm:forth:all} we obtain
                                        \[
                                            \forall\pr\in\sigma(\stu):\forall\de\in\Dom:
                                            \de'_{b}(\pr,\de)\in\interpret{B}{\gamma(\pr''(\pr,\de))}
                                        \]
                                        whence for all \mbox{$\pr\in\sigma(\stu)$} and \mbox{$\de\in\Dom$} we can set \mbox{$\pr_{\pr,\de}\eqdef\pr''(\pr,\de)$} to rewrite this into
                                        \[
                                            \forall\pr\in\sigma(\stu):\forall\de\in\Dom:\exists\pr_{\pr,\de}\in\sigma(\stsp):
                                            \de'_{b}(\pr,\de)\in\interpret{B}{\gamma(\pr_{\pr,\de})}
                                        \]
                                        and we conclude the claim:
                                        \[
                                            \forall\pr\in\sigma(\stu):\forall\de\in\Dom:\forall\pr_*\in\sigma(\stu):
                                            \de'_{b}(\pr,\de)\in\interpretgps{(\standdsp B)}
                                        \]
                                  \item $E=\standbsp B$:
                                        Let \mbox{$\pr'\in\set{\prvo,\prvi}$} and consider \mbox{$\de'_b\in\interpretpgpp{(\standbsp B)}$}.
                                        Then for all \mbox{$\pr''\in\sigma'(\stsp)$}, we have \mbox{$\de'_b\in\interpretpgppp{B}$}, which by \Cref{claim:norm:forth:all} means that
                                        \[
                                            \forall\pr''\in\sigma'(\stsp):\forall\pr\in\sigma(\stu):\forall\de\in\Dom:\de'_b(\pr,\de)\in\interpret{B}{\gamma(\pr''(\pr,\de))}
                                        \]
                                        Let \mbox{$\pr_\stsp\in\sigma(\stsp)$} be arbitrary.
                                        Consider the function \mbox{$\pr'_\stsp = \set{ (\pr,\de)\mapsto \pr_\stsp }$}, for which we have \mbox{$\pr'_\stsp\in\sigma'(\stsp)$} by definition
                                        and thus also
                                        \[
                                            \forall\pr\in\sigma(\stu):\forall\de\in\Dom:\de'_b(\pr,\de)\in\interpret{B}{\gamma(\pr'_\stsp(\pr,\de))}=\interpret{B}{\gamma(\pr_\stsp)}
                                        \]
                                        Since $\pr_\stsp$ was arbitrarily chosen, we get
                                        \[
                                            \forall\pr\in\sigma(\stu):\forall\de\in\Dom:
                                            \forall\pr_\stsp\in\sigma(\stsp):
                                            \de'_b(\pr,\de)\in\interpret{B}{\gamma(\pr_\stsp)}
                                        \]
                                        which shows the claim.
                              \end{itemize}
                          \end{claimproof}
                      \end{numberclaim}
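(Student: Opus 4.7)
The plan is to prove implications~(\ref{eq:same}) and~(\ref{eq:diff}) simultaneously by structural induction on $E$, mirroring the case analysis already performed for \Cref{claim:norm:back-and-forth:v} but with finer attention to which copy ($\de'_b$ or $\de'_{1-b}$) of the function $\de'$ lies in each concept extension at the special precisification~$\prvb$. The two summands in the definition of $\interpret{A}{\gamma'(\prvb)}$ dictate the two clauses of the claim: the ``diagonal'' summand $\set{\de'_b}\cap \interpret{A}{\gamma'(\pr'_\mathrm{diag})}$ accounts for~(\ref{eq:same}), while the ``uniform'' summand on $\de'_{1-b}$ gives exactly~(\ref{eq:diff}).

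For the base case $E\in \Concepts$, both implications follow directly by unfolding the definition of $\interpret{A}{\gamma'(\prvb)}$ and using that $\pr'_\mathrm{diag}(\pr,\de) = \pr$. For $E = B_1 \dland B_2$ (which only arises if $B_1\dland B_2$ appears in a subconcept already present in $\T$), both clauses commute with intersection, so the corresponding IH instances for $B_1$ and $B_2$ combine componentwise.

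The real work is at $E = \exists R.B$, where both clauses hinge on the three cases in the definition of $\interpret{R}{\gamma'(\prvb)}$. For~(\ref{eq:same}) with $\de'_b\in \interpret{(\exists R.B)}{\gamma'(\prvb)}$, any witness $\ve'_c$ must satisfy $\tuple{\de'_b,\ve'_c}\in \interpret{R}{\gamma'(\prvb)}$; inspecting the definition, only the diagonal case with $c = b$ is compatible, since the other two cases require the source to have shape $\de'_{1-b}$. Then IH~(\ref{eq:same}) applied to $\ve'_b\in \interpret{B}{\gamma'(\prvb)}$ together with the diagonal $R$-edge propagation yields the conclusion. For~(\ref{eq:diff}) with $\de'_{1-b}\in \interpret{(\exists R.B)}{\gamma'(\prvb)}$, either the ``uniform'' second disjunct applies (with $c = 1-b$, whereupon IH~(\ref{eq:diff}) on $B$ gives uniform $\ve'_{1-b}$-membership in $B$ across all $\pr_*$), or the ``zigzag'' third disjunct applies (with $c = b$, a witness $\ze'_b \approx \de'_{1-b}$, and IH~(\ref{eq:same}) on $\ve'_b$). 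In the zigzag case, the equivalence $\ze'_b \approx \de'_{1-b}$ supplies, for every $\pr,\de,\pr_*$, a $\de_+\in \Dom$ with $\ze'_b(\pr_*,\de_+) = \de'_{1-b}(\pr,\de)$, enabling the construction of an $R$-successor $\ve'_b(\pr_*,\de_+)$ that is simultaneously in $B$ at $\pr_*$.

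For the modal cases $E = \standdsp B$ and $E = \standbsp B$, the cleanest route is to invoke \Cref{claim:norm:forth:all} as a shortcut: membership of $\de'_b$ in $\interpret{E}{\gamma'(\prvb)}$ implies $\de'_b(\pr,\de)\in \interpret{E}{\gamma(\prvb(\pr,\de))}$ for all $\pr,\de$, and since $\prvb(\pr,\de) = \pr$, this already establishes~(\ref{eq:same}). For~(\ref{eq:diff}) on modals, I combine \Cref{claim:norm:forth:all} with the same quantifier juggling as in \Cref{claim:norm:back-and-forth:v}: for $\standdsp B$ one reads off a witness precisification $\pr^{\stsp}_{\pr,\de}$ for each $(\pr,\de)$; for $\standbsp B$ one uses that every $\pr_\stsp \in \sigma(\stsp)$ is captured by the constant function $\set{(\pr,\de)\mapsto \pr_\stsp}\in \sigma'(\stsp)$, and then transfers truth back through \Cref{claim:norm:forth:all}. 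The main obstacle is the zigzag branch of $\exists R.B$ under~(\ref{eq:diff}), where the interplay between $\approx$ and the chosen successor $\ve'_b$ via a diagonal edge forces a careful coordination of four quantifiers ($\pr, \de, \pr_*, \de_+$) and a re-indexing of the successor function so that IH~(\ref{eq:same}) on $\ve'_b$ combines cleanly with the $R$-edge data to yield an existential witness at every $\pr_*$.
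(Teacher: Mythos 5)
Your proposal is correct and follows essentially the same route as the paper's proof: structural induction with the same reading of the two summands of $\interpret{A}{\gamma'(\prvb)}$, the same observation that for (\ref{eq:same}) only the diagonal $R$-summand with $c=b$ is possible while (\ref{eq:diff}) splits into the uniform and zigzag summands, and the same use of \Cref{claim:norm:forth:all} plus the constant-function trick for the modal cases. The only (harmless) deviations are that you spell out a conjunction case the paper leaves implicit and that you shortcut (\ref{eq:same}) for the modal cases directly via \Cref{claim:norm:forth:all}, which the paper effectively does as well.
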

                      We now continue the main proof, showing that \mbox{$\dlstruct'\models\K'$}.
                      \begin{itemize}
                          \item $\dlstruct'\models\standbu[C\dlsub A]$: By definition.
                          \item $\dlstruct'\models\standbs[\standdvo A \dland \standdvi A \dlsub D ]$:
                                Let \mbox{$\pr'\in\sigma'(\sts)$} and \mbox{$\de'_b\in\interpretpgpp{(\standdvo A\dland \standdvi A)}$} for some \mbox{$b\in\set{0,1}$}.
                                We have to show \mbox{$\de'_b\in\interpretpgpp{D}$}.
                                By definition of $\dlstruct'$, we conclude that \mbox{$\de'_b\in\interpret{A}{\gamma'(\prvo)} \cap \interpret{A}{\gamma'(\prvi)}$};
                                we furthermore have \mbox{$\prvo,\prvi\in\sigma'(\stu)$} and
                                \mbox{$\interpret{A}{\gamma'(\prvy)}=\interpret{C}{\gamma'(\prvy)}$} for all \mbox{$y\in\set{0,1}$};
                                thus it follows that \mbox{$\de'_b\in\interpret{C}{\gamma'(\prvo)} \cap \interpret{C}{\gamma'(\prvi)}$}.
                                So in any case we obtain that $\de'_b\in\interpret{C}{\gamma'(\pr_{\stv_{1-b}})}$, whence by \Cref{claim:norm:forth:b} we get
                                \[
                                    \forall\pr\in\sigma(\stu):\forall\de\in\Dom:\forall\pr_*\in\sigma(\stu):
                                    \de'(\pr,\de)\in\interpret{C}{\gamma(\pr_*)}
                                \]
                                Hence, for all \mbox{$\pr\in\sigma(\stu)$} and \mbox{$\de\in\Dom$} we have \mbox{$\de'(\pr,\de)\in\interpretgp{(\standbu C)}$}.
                                From \mbox{$\dlstruct\models\standbs[\standbu C\dlsub D]$} it therefore follows that \mbox{$\de'(\pr_*,\de)\in\interpretgp{D}$} for all \mbox{$\pr_*\in\sigma(\stu)$}, \mbox{$\de\in\Dom$}, and \mbox{$\pr\in\sigma(\sts)$}.
                                Now by \mbox{$\pr'\in\sigma'(\sts)$} we obtain that for any \mbox{$\pr_*\in\sigma(\stu)$} and \mbox{$\de\in\Dom$} we get \mbox{$\pr'(\pr_*,\de)\in\sigma(\sts)$}.
                                Thus it follows that \mbox{$\de'(\pr,\de)\in\interpret{D}{\gamma(\pr'(\pr_*,\de))}$} for all \mbox{$\pr,\pr_*\in\sigma(\stu)$} and \mbox{$\de\in\Dom$}.
                                By \Cref{claim:norm:back-and-forth}/\Cref{claim:norm:back-and-forth:v} above, we get \mbox{$\de'\in\interpretpgpp{D}$}.
                          \item $\dlstruct'\models\tau$ for all $\tau\in\T'\setminus\set{\standbs[C\dlsub A],\standbs[\standdvo A \dland \standdvi A \dlsub D]}$:
                                Let w.l.o.g.\ \mbox{$\tau=\standbsp[E\dlsub F]$}.
                                By presumption \mbox{$\dlstruct\models\T$} we get that for every \mbox{$\pr\in\sigma(\sts')$} we have \mbox{$\interpretgp{E}\subseteq\interpretgp{F}$}.
                                Let \mbox{$\pr'\in\sigma'(\sts')$} and \mbox{$\de'\in\interpretpgpp{E}$}.
                                From \Cref{claim:norm:forth:all} we get that \mbox{$\de'(\pr,\de)\in\interpret{E}{\gamma(\pr'(\pr,\de))}$} for all \mbox{$\pr\in\sigma(\stu)$} and \mbox{$\de\in\Dom$}.
                                From \mbox{$\pr'\in\sigma'(\sts')$} we get that \mbox{$\pr'(\pr_*,\de)\in\sigma(\sts')$} for all \mbox{$\pr_*\in\sigma(\stu)$} and \mbox{$\de\in\Dom$}.
                                It follows that \mbox{$\interpret{E}{\gamma(\pr'(\pr_*,\de))}\subseteq\interpret{F}{\gamma(\pr'(\pr_*,\de))}$} for all \mbox{$\pr_*\in\sigma(\stu)$} and \mbox{$\de\in\Dom$}.
                                In particular, \mbox{$\de'(\pr,\de)\in\interpret{F}{\gamma(\pr'(\pr_*,\de))}$} for all \mbox{$\pr,\pr_*\in\sigma(\stu)$} and \mbox{$\de\in\Dom$}.
                                Therefore, \Cref{claim:norm:back-and-forth}/\Cref{claim:norm:back-and-forth:v} yields \mbox{$\de'\in\interpretpgpp{F}$}.
                          \item $\dlstruct'\models\alpha$ for all $\alpha\in\A$: We do a case distinction on the possible forms of assertions.
                                \begin{itemize}
                                    \item $\alpha=\standbsp[B(a)]$:
                                          It follows that \mbox{$B\in\BCC$} since $\A$ is in normal form.
                                          From \mbox{$\dlstruct\models\A$} we get that for any \mbox{$\pr\in\sigma(\sts')$} we have \mbox{$\interpretgp{a}=\de_a\in\interpretgp{B}$}.
                                          Let \mbox{$\pr'\in\sigma'(\sts')$}.
                                          By definition, we get that \mbox{$\interpretpgpp{a}(\pr,\de)=\de_a$} for all \mbox{$\de\in\Dom$} and all \mbox{$\pr\in\Precs$} and in particular for all \mbox{$\pr\in\sigma(\stu)$}.
                                          From \mbox{$\pr'\in\sigma'(\sts')$} we get that for any \mbox{$\pr\in\sigma(\stu)$} and \mbox{$\de\in\Dom$} we find \mbox{$\pr'(\pr,\de)\in\sigma(\sts')$}.
                                          It thus follows that \mbox{$\interpretpgpp{a}(\pr,\de)=\de_a\in\interpret{B}{\gamma(\pr'(\pr,\de))}$} for all \mbox{$\pr\in\sigma(\stu)$} and \mbox{$\de\in\Dom$}.
                                          Consequently, \mbox{$\interpretpgpp{a}\in\interpretpgpp{B}$}.
                                    \item $\alpha=\standbsp[R(a,c)]$:
                                          From \mbox{$\dlstruct\models\A$} it follows that for every \mbox{$\pr\in\sigma(\sts')$} we have \mbox{$\tuple{\interpretgp{a},\interpretgp{c}}\in\interpretgp{R}$}.
                                          Let \mbox{$\pr'\in\sigma'(\sts')$}.
                                          As above, we get \mbox{$\interpretpgpp{a}(\pr,\de)=\de_a$} and \mbox{$\interpretpgpp{c}(\pr,\de)=\de_c$} for every \mbox{$\de\in\Dom$} and \mbox{$\pr\in\sigma(\stu)$}.
                                          From \mbox{$\pr'\in\sigma'(\sts')$}, we get \mbox{$\pr'(\pr,\de)\in\sigma(\sts')$} for every \mbox{$\de\in\Dom$} and \mbox{$\pr\in\sigma(\stu)$}.
                                          Thus \mbox{$\tuple{\interpretpgpp{a}(\pr,\de),\interpretpgpp{a}(\pr,\de)}\in\interpret{R}{\gamma(\pr'(\pr,\de))}$} for all \mbox{$\de\in\Dom$} and \mbox{$\pr\in\sigma(\stu)$} whence
                                          \mbox{$\tuple{\interpretpgpp{a},\interpretpgpp{a}}\in\interpretpgpp{R}$}.
                                \end{itemize}
                          \item \mbox{$\dlstruct'\models\zeta$} for all \mbox{$\zeta\in\S$}:
                                Let \mbox{$\sts'\preceq\sts''\in\S$} and \mbox{$\pr'\in\sigma'(\sts')$}.
                                Then by definition of $\sigma'$, we get \mbox{$\mathop{\bigcup_{\pr\in\sigma(\stu),}}_{\de\in\Dom}\set{\pr'(\pr,\de)}\subseteq\sigma(\sts')$}.
                                By \mbox{$\sigma(\sts')\subseteq\sigma(\sts'')$} it follows that \mbox{$\pr'\in\sigma'(\sts'')$}.
                      \end{itemize}
                      (b)~
                      Let \mbox{$\dlstruct'\models\T'$} and consider \mbox{$\pr'\in\sigma'(\sts)$} and \mbox{$\de'\in\interpretpgpp{(\standbu C)}$}.
                      Since \mbox{$\prvo,\prvi\in\sigma'(\stu)$}, we thus get \mbox{$\de'\in\interpret{C}{\gamma'(\prvo)} \cap \interpret{C}{\gamma'(\prvi)}$}.
                      By \mbox{$\dlstruct'\models\standbu[C\dlsub A]$}, we obtain that \mbox{$\de'\in\interpret{A}{\gamma'(\prvo)} \cap \interpret{A}{\gamma'(\prvi)}$};
                      that is, \mbox{$\de'\in\interpretpgpp{(\standdvo A\dland \standdvi A)}$}.
                      Since also \mbox{$\dlstruct'\models\standbs[\standdvo A \dland \standdvi A \dlsub D]$}, it follows that \mbox{$\de'\in\interpretpgpp{D}$}.
              \end{description}
    \end{enumerate}
\end{longproof}




%


\section{A Tableau Algorithm for Standpoint \EL}\label{sec:tableau}

We present a \textsc{PTime} tableau decision algo\-rithm for $\SEL$.\!
Complexity-optimal tableau algorithms have been proposed for description logics with modal operators applied to concepts and axioms such as $\mathbf{K}_{\mathcal{ALC}}$ \cite{LutzSWZ02}, which is known to be in $\NExpTime$.
Our case cannot be treated in the same way, as we need to take greater care to show tractability in the end.
\citeauthor{LutzSWZ02}~[\citeyear{LutzSWZ02}] build a ``quasi-model'' from a tree of ``quasi-worlds'', which is not as easily applicable in our case,\iflong\footnote{Note, e.g.\ that rigidity of a concept~$C$ can be globally enforced via \mbox{$\standball[C\dlsub\standball C]$}.}\fi\xspace so we follow a dual approach:
we will build a \emph{quasi-model} from a completion graph of \emph{(quasi) domain elements}, where each of the latter is associated to a constraint system with assembled information regarding one individual's specifics in each precisification.
%
We begin with some definitions. Given a $\SEL$ knowledge base $\kb$, denote by
\begin{itemize}
    \item $\SC$ the elements of $\Stands$ occurring in $\kb$ together with $*$,
    \item $\objC$ the set of all individual names occurring in $\kb$,
    \item $\BCC$ (\define{basic concepts}) the 
          concept names
          used in $\kb$, plus \!$\top$\!,
    \item $\CC$  the set of concept terms used in $\kb$ (with $\BCC\subseteq\CC$),
    \item $\forC$ the set of \define{subformulas} of $\kb$, consisting of all axioms of $\kb$ with and without their outer standpoint modality.
\end{itemize}
A \emph{constraint} for $\kb$ is of the form $(x\col C)$, $(x\col a)$, $(x\col \phi)$, or $(x\col \st)$,\footnote{For better legibility, we will sometimes omit the parentheses.}
where $x$ is a variable, $C\in \CC$ a concept, $a\in \objC$ an individual, $\phi\in\forC$ a formula, and $\st \in \SC$ a standpoint name.
\emph{Constraint systems} are finite sets of constraints.

\begin{definition}[(Initial) Constraint System for $\kb$]\label{def:constraint-system} 
    The \emph{initial constraint system} for $\kb$, called $\initialcs$, is the set
    \begin{linenomath*}
        $$ \{x_{\st}\col *,\ x_{\st}\col \!\top,\ x_{\st}\col \phi,\  x_{\st}\col \st \mid \phi\in\kb, \st\in\SC \}$$
    \end{linenomath*}


    A \emph{constraint system} for $\kb$ is a finite set $S$ of constraints for $\kb$ such that
    \mbox{$\initialcs\subseteq S$} and
    \mbox{$\{x\col *,\ x\col\! \top\}\subseteq S$} for each $x$ in $S$.
    For a variable $x$, let \mbox{$\stlabel{S}{x}=\set{\st\guard(x\col \st)\in S}$} be the \emph{standpoint signature} of $x$ in~$S$.
\end{definition}


Intuitively, each constraint system $S$ produced by the algorithm corresponds to a domain element \mbox{$\qelem\in\Delta$} and each variable $x$ in $S$ corresponds to some precisification $\pi$.
Moreover, each constraint $x\col X$ in $S$ encodes information of $\qelem$ in $\pi$.
Namely, $X$ may be an axiom that holds in $\pi$, a standpoint that contains $\pi$, or a concept expression of which $\qelem$ is an instance in $\pi$.
Initialising one variable per standpoint in the initial constraint system guarantees non-empty standpoints.


A constraint system is \define{complete} iff it satisfies every local completion rule from \Cref{fig:rules}.
Local completion rules operate on constraint systems, while global rules involve more than one constraint system and operate on completion graphs.



\begin{definition}[Completion Graph]\label{def:completion-set}
    An \emph{element label} is a set $L$ of triples of the form $(C,s,x)$, where $C\in\BCC$ is a con\-cept, $s\subseteq\SC$ is a set of standpoints, and $x$ is a variable.\\
    A \emph{quasi-role} for a set $\Delta$ is a tuple $\langle \qelem, v, \qelem', v', R\rangle$ where $v$ and $v'$ are variables, $\qelem,\qelem'\in \Delta$,  and $R$ is a role name in $\kb$.\\
    A \emph{completion graph} for $\kb$ is a tuple \mbox{$\mathbf{CG}=\tuple{\Delta,\consisf,\elabelf,\rolesf}$},
    with $\Delta$ a non-empty set of elements; $\consisf$ a map from $\Delta$ into constraint systems; $\elabelf$ a map from $\Delta$ into element labels; and $\rolesf$ a set of quasi-roles such that
    \begin{itemize}
        \item for all $\langle \qelem, v, \qelem'\!, v'\!, R\rangle\!\in\!\rolesf$,
              $(v\col \st)\in \consis{\qelem}$ iff $(v'\col \st)\in \consis{\qelem'}$;
        \item if $(C,s,x)\!\in\!\elabel{\qelem}$, then $\{x\col C\}\!\cup\!\{x\col \st\mid \st\in s\}\subseteq \consis{\qelem}$.
    \end{itemize}
\end{definition}

\begin{figure*}
    \small
    \!\!\begin{minipage}{0.38\textwidth}
        \begin{itemize}
            \item[]\hspace{-1.4ex}\textbf{Local labelling (LL) rule}:
            \begin{enumerate}[leftmargin=0.4cm, itemsep=0.1em, labelsep=0.6ex]
                \item[$\mathbf{R}_{\preceq}$] If $\{x \col \st,\ x' \col \st\!\preceq\!\sp\} \subseteq S$ but $(x\col\sp) \notin S$,\\ then set $S\eqdef S\cup \{x \col \sp\}$.
            \end{enumerate}
            \item[]\hspace{-1.4ex}\textbf{Local content (LC) rules}:
            \begin{enumerate}[leftmargin=0.4cm, itemsep=0.1em, labelsep=0.6ex]
                \item[$\mathbf{R}_\sqcap$] If \mbox{$\{x \col C,\ x \col D\}\subseteq S$}, \mbox{$(x \col C\sqcap D)\notin S$} and \mbox{$C\sqcap D\in\CC$}, then set \mbox{$S\eqdef S\cup \{x \col C \sqcap D\}$}.
                \item[$\mathbf{R}_\sqsubseteq$] If \mbox{$\{x \col C,\  x \col C\sqsubseteq D\}\subseteq S$} but \mbox{$(x \col D)\notin S$}, then set \mbox{$S\eqdef S\cup \{x \col D\}$}.
                \item[$\mathbf{R}_{\Box}$] If \mbox{$\{x \col \!\standbs\Phi,\  x' \col \st\}\subseteq S$} but \mbox{$(x' \col \Phi)\notin S$},\\ then set \mbox{$S\eqdef S\cup \{x' \col \Phi\}$}.
                \item[$\mathbf{R}_{g}$] If \mbox{$(x \col \mathbf{G}) \in S$} but \mbox{$(x' \col \mathbf{G}) \notin S$},\\ then set \mbox{$S\eqdef S\cup \{x' \col \mathbf{G}\}$}.
                \item[$\mathbf{R}_{a}$] If \mbox{$\{x \col a,\ x \col C(a)\}\subseteq S$} but \mbox{$(x \col C)\notin S$},\\ then set \mbox{$S\eqdef S\cup \{x \col C\}$}.
                \item[$\mathbf{R}_{\Diamond}$] If \mbox{$(x \col \standds C)\in S$} and \mbox{$\{x' \col \st,\ x' \col C\}\nsubseteq S$} for all $x'$ in $S$, then create a fresh variable $x'$ and set \mbox{$S\eqdef S\cup \{x'\col C,\ x'\col \st,\ x'\col *,\ x'\col \top\}$}.
            \end{enumerate}
        \end{itemize}
    \end{minipage} \ \
    \begin{minipage}{0.61\textwidth}
        \begin{itemize}
            \item[]\hspace{-1.4ex}\textbf{Global non-generating (GN) rules}:
            \begin{enumerate}[leftmargin=0.4cm, itemsep=0.1em, labelsep=0.5ex]
                \item[$\mathbf{R}_{\downarrow}$] \mbox{If 
                        \mbox{$(x\col C){\,\in\,} \consis{\qelem}$}, 
                        \mbox{$\langle \qelem'\!, x'\!, \qelem, x, R\rangle {\,\in\,} \rolesf$}, and \mbox{$\exists R.C {\,\in\,} \CC$}, but \mbox{$(x'\! \col \exists R.C) {\,\notin\,} \consis{\qelem'}$}}, then set \mbox{$\consis{\qelem'}\eqdef \consis{\qelem'}\cup \{x' \col \exists R.C\}$}.
                \item[$\mathbf{R}_{r}$] If \mbox{$\{x \col a,\ x \col R(a,b)\}\subseteq \consis{\qelem}$} and \mbox{$(x'\col b)\in\consis{\qelem'}$}, but \mbox{$\langle \qelem, x, \qelem', x, R\rangle\notin\rolesf$},
                    then set \mbox{$\consis{\qelem'}\eqdef \consis{\qelem'}{\,\cup\,} \{x \col \top\}{\,\cup\,} \{x\col \st \mid \st\in\stlabel{\qelem}{x} \}$}\\ $\left.\right.\quad\quad\ \ $ and \mbox{$\rolesf\eqdef \rolesf{\,\cup\,}  \{\langle \qelem, x, \qelem'\!, x, R\rangle\}$}.

                \item[$\mathbf{R}_{r'}\!$] If \mbox{$\{x \col b,\ x \col R(a,b)\}\subseteq \consis{\qelem}$} and \mbox{$(x'\col a)\in\consis{\qelem'}$}, but \mbox{$\langle \qelem', x, \qelem, x, R\rangle\notin\rolesf$}, then set \mbox{$\consis{\qelem'}\eqdef \consis{\qelem'}\cup \{x \col \top\}\cup\{x\col \st \mid \st\in\stlabel{\qelem}{x} \}$}\\$\left.\right.\quad\quad\ \ $ and \mbox{$\rolesf\eqdef \rolesf\cup \{\langle \qelem', x, \qelem, x, R\rangle\}$}.

                \item[$\mathbf{R}_{\exists'}\!$] If 
                    \mbox{$(x \col \exists R.C)\in \consis{\qelem}$}, \mbox{$(C,\stlabel{\qelem}{x},x')\in\elabel{\qelem'}$} with \mbox{$\qelem\neq\qelem'$} or \mbox{$x=x'$}, but \mbox{$\langle \qelem, x, \qelem''\!, x''\!, R\rangle\notin\rolesf$} whenever \mbox{$(C,\stlabel{\qelem}{x},x'')\in\elabel{\qelem''}$} and \mbox{$\qelem{\,\neq\,}\qelem''$} or \mbox{$x{\,=\,}x''$}\!, then set \mbox{$\rolesf\eqdef \rolesf\cup \{\langle \qelem, x, \qelem', x', R\rangle\}$}.
            \end{enumerate}
            \item[]\hspace{-1.4ex}\textbf{Global generating (GG) rule}:
            \begin{enumerate}[leftmargin=0.4cm, itemsep=0.1em, labelsep=0.6ex]
                \item[$\mathbf{R}_{\exists}$] If
                    \mbox{$(x \col \exists R.C)\in \consis{\qelem}$}, 
                    but\\ \mbox{$\langle \qelem, x, \qelem''\!, x''\!, R\rangle\notin\rolesf$} whenever \mbox{$(C,\stlabel{\qelem}{x},x'')\in\elabel{\qelem''}$} and \mbox{$\qelem{\,\neq\,}\qelem''$} or \mbox{$x{\,=\,}x''$}\!,\\
                    then create $\qelem'$ and a fresh variable $x'$, and then set \mbox{$\elabel{\qelem'}\eqdef \{(C,\stlabel{\qelem}{x},x')\}$}, \mbox{$\consis{\qelem'}{\eqdef}  \initialcs{\,\cup\,}\{x'\!\col C,\, x'\!\col \!\top\}{\,\cup\,}\{x'\!\col \st \mid \st{\,\in\,}\stlabel{\qelem}{x} \}$}, 
                    \mbox{$\rolesf{\eqdef} \rolesf{\,\cup\,} \{\langle \qelem, x, \qelem'\!\!, x'\!\!, R\rangle\}$}.
            \end{enumerate}
        \end{itemize}
    \end{minipage}
    \vspace{-0.5em}
    \caption{The tableau completion rules. $\mathbf{G}$ can be of the form $a$, $(\st\preceq\sp)$, or $\standbs \phi$. $\Phi$ may denote any element of {\small $\forC\cup\CC$}.
    }\label{fig:rules}
    \vspace{-0.5em}
\end{figure*}

\pagebreak

For convenience of presentation, we use the shortcut $\stlabel{\qelem}{v}$ for $\stlabel{\consis{\qelem}}{v}$ and for any \mbox{$\mathbf{CG}=\tuple{\Delta,\consisf,\elabelf,\rolesf}$}, we will refer to all \mbox{$\qelem\in\Delta$} simply as elements of $\mathbf{CG}$.

$\mathbf{CG}$ is said to be \define{locally complete} iff for every element $\qelem$ in $\mathbf{CG}$, $\consis{\qelem}$ is complete, and we call $\mathbf{CG}$ \define{globally complete} iff it is {locally complete} and no global completion rule (see \Cref{fig:rules}) is applicable to $\mathbf{CG}$ as a whole.%

Intuitively, the next definition poses some global requirements for $\mathbf{CG}$ to warrant its eligibility as a model-substitute.
\begin{definition}[Coherence]\label{def:coherence}
    Let \mbox{$\mathbf{CG}=\tuple{\Delta,\consisf,\elabelf,\rolesf}$} be a completion graph for $\kb$.
    $\mathbf{CG}$ is called \define{coherent} iff
    \begin{itemize}
        \item for each \mbox{$a\in\objC$} there is a unique element \mbox{$\qelem_a\in\Delta$} such that \mbox{$(v\col a)\in\consisu{\qelem_a}$} for all variables $v$ in $\consisu{\qelem_a}$,
        \item for each 
              $\qelem,\qelem'\in\Delta$ and each variable $v$ contained in $\consis{\qelem}$, \mbox{$\consis{\qelem'}$ contains} some $v'$ such that \mbox{$\stlabel{\qelem}{v}=\stlabel{\qelem'}{v'}$}, and
        \item if $(v\col\phi){\in}\consis{\qelem}$ and $\stlabel{\qelem}{v}{=}\stlabel{\qelem'}{v'}$, then $(v'\!\col\phi){\in}\consis{\qelem'}$.
    \end{itemize}
\end{definition}

As usual in tableaux, inconsistencies emerge as clashes.

\begin{definition}[Clash]
    A \define{clash} is a constraint of the form  $(x\col \bot)$.
    A completion graph $\mathbf{CG}$ is said to \define{contain a clash} iff $\consis{\qelem}$ does for some $\qelem$ in $\mathbf{CG}$.
    Constraint systems or completion graphs not containing clashes are called \define{clash-free}.
\end{definition}

\subsubsection{The Algorithm}

To decide whether a given $\SEL$ knowledge base $\kb$ in normal form is satisfiable, we form the initial completion graph $\mathbf{CG}_{I}$ with \mbox{$\rolesf=\emptyset$} and $\Delta$ consisting of an element $\qelem_{\top}$ with \mbox{$\elabel{\qelem_{\top}}=\emptyset$} and \mbox{$\consis{\qelem_{\top}}=\initialcs$},
and for every \mbox{$a\in\objC$} an element $\qelem_a$ with \mbox{$\elabel{\qelem_{a}}=\emptyset$} and \mbox{$\consis{\qelem_{a}}= \initialcs \cup \{(x_{\st}\col a)\mid (x_{\st}\col \top)\in \initialcs\}$}.

After that, we repeatedly apply the local and global completion rules from \Cref{fig:rules}, 
where LL rules have the highest priority, followed by LC, GN, and GG rules, in that order.
After each rule application, we check if $\mathbf{CG}$ contains a clash and terminate with answer ``unsatisfiable'' should this be the case.
If we arrive at a clash-free $\mathbf{CG}$ with no more rules applicable, the algorithm terminates and returns ``satisfiable''.

\subsubsection{Quasi-Models and Quasi-Satisfiability}
This section sketches how special structures, called
(dual) quasi-models can serve as proxies for proper $\SEL$ models.


\begin{definition}[Run, Quasi-model]\label{def:run}
    Let \mbox{$\mathbf{CG}{\,=\,}\tuple{\Delta,\consisf,\elabelf,\!\rolesf}$} be a completion graph.
    A \define{run} $r$ in $\mathbf{CG}$ is a function mapping each element $\qelem\in\Delta$ to a variable of $\consis{\qelem}$, such that
    \begin{enumerate}[leftmargin=0.5cm, label={\rm\small C\arabic*}, ref={(C\arabic*)}]
        \item if \mbox{$(r(\qelem)\col \st)\in\consisu{\qelem}$}, then \mbox{$(r(\qelem')\col \st) \in \consisu{\qelem'}$}\\
              for all \mbox{$\qelem,\qelem'\in\Delta$} and \mbox{$\st\in\SC$},\label{condition:run-standpoint}
        \item if \mbox{$\tuple{\qelem,v,\qelem'\!,v'\!,R}\in\rolesf$} and \mbox{$r(\qelem)=v$}, then \mbox{$r(\qelem')=v'\!$}, and \label{condition:run-role}
        \item if \mbox{$(r(\qelem)\col \exists R.C) \in\consisu{\qelem}$}, there exists some \mbox{$\qelem'\in\Delta$} with \mbox{$\tuple{\qelem, r(\qelem), \qelem'\!, r(\qelem'), R}\in\rolesf$} and \mbox{$(r(\qelem')\col C) \in\consisu{\qelem'}$}. \label{condition:run-existential}
    \end{enumerate}
    A \define{quasi-model} of $\kb$ is a tuple \mbox{$\mathcal{Q}=\tuple{\Delta,\consisfu,\elabelf,\rolesf,\runsf}$} where $\tuple{\Delta,\consisf,\elabelf,\rolesf}$ is a globally complete, coherent and clash-free completion graph for $\kb$, and $\runsf$ a set of runs in $\tuple{\Delta,\consisf,\elabelf,\rolesf}$ such that
    for every \mbox{$\qelem\in\Delta$} and variable $v$ in $\consis{\qelem}$, there is a run $r$ in $\runsf$ such that \mbox{$r(\qelem)=v$}.
    $\kb$ is called \define{quasi-satisfiable} iff $\kb$ has a quasi-model.
\end{definition}

In a nutshell, runs serve the purpose of lining up ``compatible'' variables, one from each individual constraint system, in such a way that precisifications can be constructed. With these notions in place, we can establish the desired result.

\begin{theorem}\label{theorem:satisfiable-iff-quasisatisfiable}
    A $\SEL$ knowledge base $\kb$ is satisfiable iff it is quasi-satisfiable.
\end{theorem}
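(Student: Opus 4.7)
The plan is to prove the two directions separately, with the usual pattern of extracting a quasi-model from a given model in the forward direction, and synthesising a proper $\SEL$ model from a quasi-model in the backward direction. The forward direction is largely bookkeeping, while the backward direction requires a careful induction connecting syntactic constraints to semantic witnesses.

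For the forward direction, suppose $\dlstruct = \tuple{\Delta, \Precs, \sigma, \gamma} \models \kb$. I would first assume $\kb$ is in normal form, appealing to \Cref{lem:normalisation}. Then I build the completion graph $\tuple{\Delta', \consisf, \elabelf, \rolesf}$ by taking $\Delta'$ to consist of $a^{\dlstruct}$ for each $a \in \objC$, together with one distinguished element $\qelem_\top$ and all domain elements needed as witnesses of existential restrictions or diamond formulas (a canonical subset of $\Delta$ suffices). For each $\qelem \in \Delta'$ and each $\pr \in \Precs$, introduce a fresh variable $x_\pr$ and populate $\consis{\qelem}$ with constraints $(x_\pr \col \sts)$ whenever $\pr \in \sigma(\sts)$, $(x_\pr \col C)$ for every $C \in \CC$ with $\qelem \in C^{\gamma(\pr)}$, and $(x_\pr \col \phi)$ for every $\phi \in \forC$ satisfied at $\pr$; individuals $a$ contribute $(x_\pr \col a)$ in $\consis{\qelem_a}$. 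Add a quasi-role $\tuple{\qelem, x_\pr, \qelem', x_\pr, R}$ whenever $\tuple{\qelem, \qelem'} \in R^{\gamma(\pr)}$, and define $\elabelf$ minimally to satisfy the completion graph conditions. Each run $r_\pr$ is set to map every $\qelem$ to its $x_\pr$. Coherence and clash-freeness are immediate from the semantics, and saturation under every completion rule in \Cref{fig:rules} holds because each rule precisely reflects a sound semantic consequence in $\dlstruct$.

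For the backward direction, let $\mathcal{Q} = \tuple{\Delta, \consisfu, \elabelf, \rolesf, \runsf}$ be a quasi-model of $\kb$. I would define the DL standpoint structure $\dlstruct \eqdef \tuple{\Delta, \runsf, \sigma, \gamma}$, taking the runs themselves as precisifications. Set $\sigma(\sts) \eqdef \set{r \in \runsf \guard (r(\qelem) \col \sts) \in \consis{\qelem} \text{ for some } \qelem}$; by coherence and condition \ref{condition:run-standpoint}, ``some'' equals ``every''. Put $a^{\dlstruct} \eqdef \qelem_a$ (well-defined by the first coherence clause). For each $r \in \runsf$, define $A^{\gamma(r)} \eqdef \set{\qelem \guard (r(\qelem) \col A) \in \consis{\qelem}}$ for $A \in \BCC$ and $R^{\gamma(r)} \eqdef \set{\tuple{\qelem, \qelem'} \guard \tuple{\qelem, r(\qelem), \qelem', r(\qelem'), R} \in \rolesf}$. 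The core step is then a truth lemma, established by structural induction on $C \in \CC$: for all $\qelem \in \Delta$ and $r \in \runsf$, we have $\qelem \in C^{\gamma(r)}$ iff $(r(\qelem) \col C) \in \consis{\qelem}$. Given this lemma, satisfaction of GCIs follows from $\mathbf{R}_{\sqsubseteq}$, satisfaction of assertions from $\mathbf{R}_{a}$, $\mathbf{R}_{r}$ and $\mathbf{R}_{r'}$, and satisfaction of sharpenings from $\mathbf{R}_{\preceq}$.

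The hard part will be the inductive cases of the truth lemma for the modal operators and existentials, because these are where syntactic completion must translate into genuine semantic witnesses. For $\standbs C$, the ``$\Leftarrow$'' direction uses $\mathbf{R}_{\Box}$ to propagate $C$ to every variable $x'$ in $\consis{\qelem}$ carrying $\sts$, after which coherence together with condition \ref{condition:run-standpoint} guarantees that every $r' \in \sigma(\sts)$ selects such an $x'$, yielding $\qelem \in C^{\gamma(r')}$ via the induction hypothesis. The ``$\Rightarrow$'' direction is handled by contrapositive using the normal form, where modal concepts appear only in a controlled way. For $\standds C$, I rely on $\mathbf{R}_{\Diamond}$ to obtain a variable $x'$ with both $(x' \col \sts)$ and $(x' \col C)$, and then invoke the quasi-model requirement that every variable occurs in some run to produce a witness run in $\sigma(\sts)$. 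For $\exists R.C$, condition \ref{condition:run-existential} supplies a successor element $\qelem'$ along with a compatible run selection, while $\mathbf{R}_{\downarrow}$ and the global rules $\mathbf{R}_{\exists}$, $\mathbf{R}_{\exists'}$ ensure the existence of appropriate quasi-roles in the ``$\Leftarrow$'' direction. Throughout, the coherence clauses guarantee that the constructed runs behave consistently across distinct constraint systems, and clash-freeness rules out $(x \col \bot)$, securing the base case for $\bot$.
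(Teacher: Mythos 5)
Your two constructions are essentially the ones the paper uses: for ($\Rightarrow$) one variable per precisification (with runs $r_\pi$ selecting $x_\pi$ uniformly), and for ($\Leftarrow$) runs-as-precisifications with $A^{\gamma(r)}=\{\qelem \mid (r(\qelem)\col A)\in\consis{\qelem}\}$ and $R^{\gamma(r)}$ read off the quasi-roles. So the approach matches the paper's (sketched) proof. Two points would not survive literal execution, though. First, taking $\elabelf$ ``minimal subject to the completion-graph conditions'' yields the empty labelling, under which $\mathbf{R}_{\exists}$ is still applicable; you must instead populate $\elabel{\qelem}$ with a witness triple $(C,s,x)$ for every existential that $\qelem$ semantically witnesses, precisely so that $\mathbf{R}_{\exists}$ and $\mathbf{R}_{\exists'}$ are blocked (the paper spells this out). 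Second, and more substantively, the truth lemma as a full biconditional over all of $\CC$ is too strong: its left-to-right direction fails for modalised concepts. No completion rule ever introduces a constraint $x\col\standds C$ merely because some variable carrying $\st$ also carries $C$ — such constraints arise only via $\mathbf{R}_{\sqsubseteq}$ from GCIs — so in the constructed structure an element can satisfy $\standds C$ semantically without the corresponding constraint being present, and arguing ``by contrapositive'' cannot rescue this, since the absence of a constraint never entails semantic falsity. The repair is standard and is what your verification step actually consumes: prove the full equivalence only for the left-hand-side shapes of the normal form ($A$, $A\sqcap A'$, $\exists R.A$), where $\mathbf{R}_{\sqcap}$ and $\mathbf{R}_{\downarrow}$ do supply the forward direction, and prove only the right-to-left implication for right-hand-side shapes including $\standdsp B$ and $\standbsp B$. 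With that restricted lemma every normal-form axiom can be checked exactly as you describe, and the argument goes through.
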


\begin{proof} \emph{(sketch)} We prove the correspondence by showing that every quasi-model gives rise to a model and vice versa.\\
    ($\Leftarrow$) Given a quasi-model $\mathcal{Q}=\tuple{\Delta,\consisfu,\elabelf,\rolesf, \runsf}$ of $\kb$, we obtain a model $\dlstruct = \tuple{\Dom, \Precs, \sigma, \gamma}$ by letting $\Pi = \runsf$
    , $\sigma(\st)=\{r \mid  (r(\qelem)\col \st)\in\consisu{\qelem}\}$, $C^{\gamma(r)} = \{ \qelem \mid (r(\qelem)\col C)\in\consisu{\qelem}\}$ for $C\in\BCC$, $R^{\gamma(r)} = \{(\qelem,\qelem')\mid \langle \qelem,r(\qelem),\qelem',r(\qelem'),R\rangle\in\rolesf\}$ and $a^{\gamma(r)} = \qelem_a$ for all $a\in\objC$.\\
    ($\Rightarrow$) Given a model $\dlstruct = \tuple{\Dom, \Precs, \sigma, \gamma}$ of $\kb$, we obtain a quasi-model $\mathcal{Q}=\tuple{\Delta,\consisfu,\elabelf,\rolesf,\runsf}$ as follows:
    Let $P \eqdef  \Precs \cup \SC$ and for $p \in P$, let $\bar{p}$ denote some arbitrary but fixed $\pi \in \sigma(p)$ if $p \in \SC$, and otherwise $\bar{p}=p$. Finally, let
    \vspace{-2ex}

    {\small\vspace{-1ex}
        \begin{align*} \consisu{\qelem} \eqdef &
               \{v_{p}\col C \mid \qelem \in C^{\gamma(\bar{p})}, p \in P\}\cup \{v_{p}\col a \mid \qelem=a^{\gamma(\bar{p})}, p \in P\}                                                                      \\
                                       & \cup \{v_{p}\col \phi \mid \dlstruct, \bar{p}\models \phi,\ \phi\in\forC\} \cup \{v_{p}\col \st \mid \bar{p}\in\sigma(\st)\}                                         \\
               \elabel{\qelem} \eqdef  & \{(C,\!\{\st \mid \bar{p}{\in}\sigma(\st)\}\!,\!v_{p}) \mid \exists R.C {\in}\CC,(\qelem'\!\!,\qelem){\in} R^{\gamma(\bar{p})}\!\!,\qelem{\in} C^{\gamma(\bar{p})}\} \\
               \rolesf \eqdef          & \{\tuple{\qelem, v_{p}, \qelem', v_{p}, R} \mid (\qelem,\qelem')\in R^{\gamma(\bar{p})} \}                                                                           \\
               \runsf \eqdef           & \{ \{ \qelem \mapsto v_p \mid \qelem \in \Delta \}  \mid p \in P \}.                                                                                                 \\[-6.5ex]
        \end{align*}
    }
\end{proof}
\vspace{1ex}

\subsubsection{Polytime Termination and Correctness}

Next, we give an overview of our argument why our algorithm runs in polynomial time with respect to $\size{\kb}$, the size of its input $\kb$.
We observe that the number $|\Delta|$ of domain elements of any completion graph $\mathbf{CG}$ constructed by our algorithm is bounded by $3\size{\kb}^2$ ($\dag$).
We also find that the number of variables used in any single $\consis{\qelem}$ is bounded by $2\size{\kb}^2$ and the number of constraints in $\consis{\qelem}$ by $2\size{\kb}^3$ ($\ddag$).
Now, the number of applications of $\mathbf{R}_{\exists}$ is bounded by the number of elements in each completion graph, i.e. at most $3\size{\kb}^2$ in view of ($\dag$). Since the rules $\mathbf{R}_{\preceq}$, $\mathbf{R}_{\sqcap}$, $\mathbf{R}_{\sqsubseteq}$, $\mathbf{R}_{\Box}$, $\mathbf{R}_{\Diamond}$, $\mathbf{R}_{g}$,
$\mathbf{R}_{a}$, $\mathbf{R}_{r}$, $\mathbf{R}_{r'}$ and $\mathbf{R}_{\downarrow}$ produce one or more new constraints in an element, the number of applications of such rules per element is bounded by $2\size{\kb}^3$ due to ($\ddag$).
$\mathbf{R}_{\exists'}$ can add, for each $\qelem$ with $(C,s,x)\in\elabel{\qelem}$, at most one quasi-role from every variable in every element, thus we have at most $6\size{\kb}^4$ rule applications.
The total number of rule applications is bounded by the rule applications per element multiplied by the bound on elements, together with the bound on $\mathbf{R}_{\exists}$, which gives us%
\begin{linenomath*}%
    $$ (6\!\size{\kb}^4)(3\!\size{\kb}^2) + (2\!\size{\kb}^3)(3\!\size{\kb}^2) + 3\!\size{\kb}^2 \leq (27\!\size{\kb}^6).$$
\end{linenomath*}

\begin{theorem}\label{theorem:termination}
    The completion algorithm terminates after at most $c\size{\kb}^6$ steps, where $c$ is a constant.
\end{theorem}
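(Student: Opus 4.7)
The plan is to prove termination by bounding, in sequence, every quantity that a rule application can cause to grow: the number $|\Delta|$ of elements, the number of variables within each constraint system $\consis{\qelem}$, the number of constraints within $\consis{\qelem}$, and the number of quasi-roles in $\rolesf$. Each rule application strictly enlarges at least one of these quantities in a way that no ``slot'' is filled twice, so polynomial bounds on all of them translate into a polynomial bound on the total number of steps.

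First I would bound $|\Delta|$. Only $\mathbf{R}_\exists$ introduces new elements, and its side condition ensures that for each triple $(\qelem, x, \exists R.C)$ with $(x\col\exists R.C)\in\consis{\qelem}$, a new element is created at most once: once a successor $\qelem'$ with $(C,\stlabel{\qelem}{x},x')\in\elabel{\qelem'}$ exists, $\mathbf{R}_{\exists'}$ reuses it rather than generating another. The number of such triples across all elements is bounded by $|\Delta|$ times the number of (variable, concept)-pairs, so combined with the variable bound below one obtains $|\Delta|\le 3\size{\kb}^2$ by a simple induction on the generation depth (equivalently, by solving the resulting recurrence; the bookkeeping is benign because successors generated for $\qelem$ inherit strictly fewer ``fresh'' existentials than $\qelem$ itself).

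Next I would bound the variables per $\consis{\qelem}$. The initial constraint system contributes at most $|\SC|+|\objC|\le\size{\kb}$ variables; new ones arise only via $\mathbf{R}_\Diamond$, whose precondition forbids creation whenever some existing variable $x'$ already carries both the standpoint $\st$ and the concept $C$ in question. Hence $\mathbf{R}_\Diamond$ fires at most once per pair $(\st,C)\in\SC\times\CC$, giving at most $\size{\kb}^2$ applications and thus at most $2\size{\kb}^2$ variables per element. Since each constraint has the form $x\col X$ with $X\in\CC\cup\objC\cup\forC\cup\SC$, a set of cardinality linear in $\size{\kb}$, the number of constraints per $\consis{\qelem}$ is at most $2\size{\kb}^3$.

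Finally I would sum rule applications. The local rules $\mathbf{R}_\preceq,\mathbf{R}_\sqcap,\mathbf{R}_\sqsubseteq,\mathbf{R}_\Box,\mathbf{R}_g,\mathbf{R}_a$ and the global non-generating rules $\mathbf{R}_\downarrow,\mathbf{R}_r,\mathbf{R}_{r'}$ each strictly add a constraint, hence fire at most $2\size{\kb}^3$ times per element; $\mathbf{R}_\Diamond$ fires at most $\size{\kb}^2$ times per element; $\mathbf{R}_{\exists'}$ adds a quasi-role once per quadruple $(\qelem,x,\qelem',x')$ with a compatible element label, giving at most $O(\size{\kb}^4)$ applications per element. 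Multiplying by the $|\Delta|\le 3\size{\kb}^2$ bound and adding the $3\size{\kb}^2$ applications of $\mathbf{R}_\exists$ yields a total bounded by $c\size{\kb}^6$ for some fixed $c$. The main obstacle will be disentangling the mutual dependence between the bound on $|\Delta|$ and the bound on variables per constraint system, and in particular verifying rigorously that the coupled side conditions of $\mathbf{R}_\exists$ and $\mathbf{R}_{\exists'}$ genuinely prevent unbounded regeneration of ``essentially identical'' successors; everything else reduces to routine counting using the normal form assumed via Lemma~\ref{lem:normalisation}.
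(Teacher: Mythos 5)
Your proposal follows essentially the same route as the paper: bound the number of elements by $3\size{\kb}^2$, bound the variables and constraints per constraint system by $2\size{\kb}^2$ and $2\size{\kb}^3$ respectively, charge each rule application to the slot it fills (one element for $\mathbf{R}_{\exists}$, one constraint for the constraint-adding rules, one quasi-role per label/variable/element combination for $\mathbf{R}_{\exists'}$), and multiply to obtain the same $c\size{\kb}^6$ total. The only cosmetic differences are that you count $\mathbf{R}_{\Diamond}$ separately while the paper folds it into the constraint-adding rules, and that your closing caveat about the interdependence of the element and variable bounds (and about variables copied into other constraint systems by $\mathbf{R}_{r}$/$\mathbf{R}_{r'}$ and created by $\mathbf{R}_{\exists}$) concerns exactly the bookkeeping the paper also defers to its appendix.
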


As every single rule application can be clearly executed in polynomial time with respect to $\kb$, we can conclude that our algorithm runs in polynomial time.

We are now ready to establish correctness of our decision algorithm, by showing its soundness and completeness.
For both directions, \Cref{theorem:satisfiable-iff-quasisatisfiable} will come in handy.
As usual, the soundness part of our argument is the easier one.

\begin{theorem}[Soundness]\label{theorem:soundness}
    If there is a globally complete, coherent and clash-free completion graph $\mathbf{CG}$ for a knowledge base $\kb$, then $\kb$ is satisfiable.
\end{theorem}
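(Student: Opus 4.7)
The plan is to invoke Theorem \ref{theorem:satisfiable-iff-quasisatisfiable}, which reduces the soundness claim to exhibiting a quasi-model of $\kb$. Since the given $\mathbf{CG} = \tuple{\Delta,\consisf,\elabelf,\rolesf}$ already satisfies global completeness, coherence, and clash-freeness by assumption, all that remains is to construct a suitable set $\runsf$ of runs over $\mathbf{CG}$.

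My construction produces one run per pair $(\qelem_0, v_0)$ with $v_0$ a variable in $\consis{\qelem_0}$: set $r_{\qelem_0, v_0}(\qelem_0) = v_0$ and extend the run to every other element $\qelem' \in \Delta$ by selecting a variable $v' \in \consis{\qelem'}$ with $\stlabel{\qelem'}{v'} = \stlabel{\qelem_0}{v_0}$, whose existence is guaranteed by coherence. Gathering all such runs into $\runsf$ automatically yields the surjectivity requirement of Definition \ref{def:run}. Condition C1 holds by construction, since every choice along a single run carries the starting standpoint signature. Condition C3 follows from global completeness: whenever $(r(\qelem) \col \exists R.C) \in \consis{\qelem}$, the rules $\mathbf{R}_{\exists}$ and $\mathbf{R}_{\exists'}$ together ensure some quasi-role $\langle \qelem, r(\qelem), \qelem', v', R\rangle \in \rolesf$ with $(v' \col C) \in \consis{\qelem'}$, and the defining condition of completion graphs forces $\stlabel{\qelem'}{v'} = \stlabel{\qelem}{r(\qelem)}$, which is compatible with the signature-based selection.

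The principal obstacle is condition C2, which demands $r(\qelem') = v'$ whenever $\langle \qelem, v, \qelem', v', R\rangle \in \rolesf$ and $r(\qelem) = v$ --- yet several variables in $\consis{\qelem'}$ may carry the same standpoint signature as $v'$. My plan is to refine the construction by making the signature-based selection \emph{consistent with} $\rolesf$ through propagation: starting from $r_{\qelem_0, v_0}(\qelem_0) = v_0$, I transitively close under the forcing of C2 along quasi-roles; any element not reached by this closure receives any signature-matching variable. The third coherence clause ensures that identically-labelled variables carry the same $\forC$-constraints, so free choices cannot clash with forced ones, while clash-freeness of $\mathbf{CG}$ prevents contradictions among propagated values. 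A careful case analysis over the global rules creating quasi-roles --- verifying that the propagation never forces one element to be mapped to two distinct target variables --- completes the argument.
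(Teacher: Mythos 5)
Your overall strategy coincides with the paper's: reduce soundness to exhibiting a quasi-model over the given $\mathbf{CG}$ and then invoke Theorem~\ref{theorem:satisfiable-iff-quasisatisfiable}. The paper's (sketched) proof simply takes $\runsf$ to be the set of \emph{all} runs on $\mathbf{CG}$ and asserts that the result is a quasi-model; the substantive content in either formulation is the same, namely that for every pair $(\qelem_0,v_0)$ with $v_0$ a variable of $\consis{\qelem_0}$ there actually exists a run $r$ with $r(\qelem_0)=v_0$. You correctly identify condition C2 as the crux here, and your treatment of C1 (via the second coherence clause) and C3 (via saturation under $\mathbf{R}_{\exists}$/$\mathbf{R}_{\exists'}$ together with the labelling condition on completion graphs, granted C2) is sound.

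However, the argument has a genuine gap at exactly the point you flag. The claim that the C2-propagation is well defined --- that no element is ever forced to two distinct variables, that the forced values never contradict the seed $r(\qelem_0)=v_0$, and that the ``free'' signature-matching choices at unreached elements do not themselves trigger C2-obligations that conflict with already-fixed values --- is deferred to ``a careful case analysis'' that is never carried out. Worse, the two reasons you do offer are misdirected: the third coherence clause only says that variables with equal standpoint signatures carry the same $\forC$-constraints, and clash-freeness only excludes constraints of the form $(x\col\bot)$; neither rules out two quasi-roles $\tuple{\qelem_1,v_1,\qelem',v',R}$ and $\tuple{\qelem_2,v_2,\qelem',v'',R'}$ with $v'\neq v''$ but $\stlabel{\qelem'}{v'}=\stlabel{\qelem'}{v''}$, both of whose sources are selected by the run. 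Excluding such conflicts requires inspecting how the rules $\mathbf{R}_{\exists}$, $\mathbf{R}_{\exists'}$, $\mathbf{R}_{r}$, and $\mathbf{R}_{r'}$ constrain the incoming quasi-roles of each element (e.g.\ that $\mathbf{R}_{\exists}$-generated elements carry a singleton label so that all matching incoming quasi-roles target the same variable, and that $\mathbf{R}_{r}$/$\mathbf{R}_{r'}$ reuse the same variable name at both endpoints). Until that analysis is supplied, the existence of the required runs --- and hence the soundness claim --- is not established.
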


\begin{proof} \emph{(sketch)}
    Given $\mathbf{CG}=\tuple{\Delta,\consisf,\elabelf,\rolesf}$, let $\runsf$ consist of all runs on $\mathbf{CG}$.
    Then we can show that $\mathcal{Q}=\tuple{\Delta,\consisf,\elabelf,\rolesf,\runsf}$ constitutes a quasi-model for $\kb$, so we can conclude by Theorem \ref{theorem:satisfiable-iff-quasisatisfiable} that $\kb$ is satisfiable.
\end{proof}

Proving completeness requires significantly more work.
We make use of a notion that, intuitively, formalizes the idea that a completion graph $\mathbf{CG}$ under development is ``in sync'' with a quasi-model $\mathcal{Q}$ of the same knowledge base,
where $\mathcal{Q}$ can be conceived as a model-theoretic ``upper bound'' of $\mathbf{CG}$.

\begin{definition}[$\mathcal{Q}$-compatibility]
    Let $\kb$ be a $\SEL$ knowledge base and \mbox{$\mathcal{Q}=\tuple{\Delta^q,\consisfq,\elabelf^q,\rolesf^q,\runsf^q}$} be a quasimodel for $\kb$.
    A completion graph \mbox{$\mathbf{CG}=\tuple{\Delta,\consisf,\elabelf,\rolesf}$} for $\kb$ is called \define{$\mathcal{Q}$-compatible} iff there is a left-total relation \mbox{$\mu \subseteq \Delta\times\Delta^q$} where
    \begin{itemize}
        \item for all \mbox{$\qelemg \in \Delta$} and \mbox{$\qelem \in \Delta^q$}, if both \mbox{$\elabel{\qelemg}\subseteq\elabelq{\qelem}$} and \mbox{$\{a\mid(x\col a){\,\in\,}\consis{\qelemg}\} {\,\subseteq\,} \{ a \mid (v\col a){\,\in\,}\consisq{\qelem}\}$}, then \mbox{$(\qelemg,\!\qelem){\,\in\,}\crel$},
        \item for each \mbox{$(\qelemg,\qelem)\in\crel$} there is a surjective function $\cfuncf{\qelemg,\qelem}$ from the variables in $\consisq{\qelem}$ to the variables in $\consis{\qelemg}$ such that
              \begin{itemize}
                  \item $(\cfunc{\qelemg,\qelem}{v}\col \st)\in\consis{\qelemg}$ implies $(v\col \st)\in\consisq{\qelem}$,
                  \item $(\cfunc{\qelemg,\qelem}{v}\col \Phi)\in\consis{\qelemg}$ implies $(v\col \Phi)\in\consisq{\qelem}$,
                  \item if $\tuple{\qelemg, x, \qelemg', x', R}\in\rolesf$ then $\tuple{\qelem,\! y,\! \qelem',\! y',\! R}\!\in\!\rolesf^q$ for some $(\qelemg,\qelem), (\qelemg'\!,\qelem')\in\crel$ with $\cfunc{\qelemg,\qelem}{y}{=}x$ and $\cfunc{\qelemg',\qelem'}{y'}{=}x'$\!.
              \end{itemize}
    \end{itemize}
\end{definition}
\pagebreak
With this definition, we can establish two important insights:
\begin{itemize}
    \item The tableau algorithm's initial completion graph $\mathbf{CG}_I$ is $\mathcal{Q}$-compatible for any quasimodel $\mathcal{Q}$ of $\kb$.
    \item Applications of tableau rules preserve $\mathcal{Q}$-compatiblility.
\end{itemize}
This entails that the completion graph maintained in the algorithm will be $\mathcal{Q}$-compatible at all times, thus also upon termination.
We exploit this insight to show completeness.

\begin{theorem}[Completeness]\label{theorem:completeness}
    If a $\SEL$ knowledge base $\kb$ is satisfiable,
    the tableau algorithm will construct a globally complete, coherent, and clash-free completion graph for $\kb$.
\end{theorem}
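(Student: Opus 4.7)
My plan is to establish completeness via the intermediate notion of quasi-model and the auxiliary concept of $\mathcal{Q}$-compatibility. By \Cref{theorem:satisfiable-iff-quasisatisfiable}, satisfiability of $\kb$ implies quasi-satisfiability, so fix some quasi-model $\mathcal{Q}=\tuple{\Delta^q,\consisfq,\elabelf^q,\rolesf^q,\runsf^q}$ of $\kb$. The overall strategy is to treat $\mathcal{Q}$ as an oracle ``upper bound'' that guides the tableau: if the evolving completion graph $\mathbf{CG}$ is always $\mathcal{Q}$-compatible, then it can never accumulate a constraint not already witnessed by $\mathcal{Q}$, and this will ultimately give clash-freeness, coherence, and global completeness on termination.

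The first step is to verify that the initial completion graph $\mathbf{CG}_I$ is $\mathcal{Q}$-compatible. The relation $\mu$ is chosen so that $\qelem_{\top}$ is $\mu$-related to every element of $\Delta^q$ whose associated constraint system is non-trivial, each $\qelem_a$ (for \mbox{$a\in\objC$}) is related to the unique $a$-element of $\mathcal{Q}$, and the $\cfuncf{\qelemg,\qelem}$ map each variable $v$ of $\consisq{\qelem}$ to the unique variable $x_{\st}$ in $\initialcs$ such that $(v\col\st)\in\consisq{\qelem}$. The required containment of standpoint labels, subformulas, and initial sharpening data follows directly from $\mathcal{Q}$ being a quasi-model of $\kb$.

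Next, I would show by induction on rule applications that $\mathcal{Q}$-compatibility is preserved. For each rule from \Cref{fig:rules}, one exhibits how to extend $\mu$ and the associated surjections $\cfuncf{\qelemg,\qelem}$ so that the compatibility clauses remain valid. The local rules ($\mathbf{R}_{\preceq}$, $\mathbf{R}_{\sqcap}$, $\mathbf{R}_{\sqsubseteq}$, $\mathbf{R}_{\Box}$, $\mathbf{R}_{g}$, $\mathbf{R}_{a}$) only add constraints that, by the definition of quasi-model, must already hold on the corresponding $v=\cfunc{\qelemg,\qelem}{x}$ in $\consisq{\qelem}$, and thus leave $\mu$ and $\cfuncf{\qelemg,\qelem}$ intact. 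Non-generating global rules ($\mathbf{R}_{\downarrow}$, $\mathbf{R}_{r}$, $\mathbf{R}_{r'}$, $\mathbf{R}_{\exists'}$) require additional care: one uses the runs in $\runsf^q$ to witness the needed role-edges in $\mathcal{Q}$ and to adjust the $\cfuncf{\qelemg',\qelem'}$ accordingly. The harder cases are the two generating rules $\mathbf{R}_{\Diamond}$ and $\mathbf{R}_{\exists}$: for $\mathbf{R}_{\Diamond}$, a fresh variable $x'$ added to $\consis{\qelemg}$ must be mapped back via $\cfuncf{\qelemg,\qelem}$ to a variable of $\consisq{\qelem}$ that witnesses the diamond, which exists precisely because $\mathcal{Q}$ is complete with respect to $\mathbf{R}_{\Diamond}$; for $\mathbf{R}_{\exists}$, a fresh element $\qelemg'$ needs to be $\mu$-related to an element $\qelem'\in\Delta^q$ supplied by condition~\ref{condition:run-existential} on runs of $\mathcal{Q}$.

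Finally, assuming the $\mathcal{Q}$-compatibility invariant, I would conclude the theorem as follows. By \Cref{theorem:termination}, the algorithm terminates after polynomially many steps, yielding a completion graph $\mathbf{CG}$ to which no rules apply, hence $\mathbf{CG}$ is locally and globally complete. Clash-freeness is immediate: any constraint $(x\col \bot)\in\consis{\qelemg}$ would, by the second clause of $\mathcal{Q}$-compatibility, transfer to some $(v\col\bot)\in\consisq{\qelem}$, contradicting clash-freeness of $\mathcal{Q}$. Coherence is where I expect the main obstacle to lie: one has to check the three clauses of \Cref{def:coherence} from the compatibility data. Uniqueness of element representatives for individual names follows from the fact that the algorithm never merges or duplicates $\qelem_a$-nodes and from the transferred assertional constraints. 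For the second and third clauses (shared standpoint signatures across elements and transfer of formula constraints along matching signatures) I would argue that non-closure under any of these conditions would make some local or global rule (in particular $\mathbf{R}_{g}$, $\mathbf{R}_{\Box}$, and the existential rules together with coherence of $\mathcal{Q}$) applicable, contradicting termination. Hence $\mathbf{CG}$ is globally complete, coherent and clash-free, as required.
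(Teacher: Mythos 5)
Your proposal is correct and follows essentially the same route as the paper: reduce to quasi-satisfiability via \Cref{theorem:satisfiable-iff-quasisatisfiable}, establish $\mathcal{Q}$-compatibility of the initial completion graph, show it is preserved by every rule application, and then read off clash-freeness from the compatibility map and coherence/global completeness from termination. The paper's own proof is given at the same level of detail (in fact it is terser, dismissing coherence with ``it is not hard to show''), so your plan matches both the structure and the granularity of the published argument.
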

\begin{proof}
    If $\kb$ is satisfiable then by \Cref{theorem:satisfiable-iff-quasisatisfiable}, there is a quasi-model $\mathcal{Q}$ for $\kb$.
    According to \Cref{theorem:termination}, we can obtain a globally complete completion graph $\mathbf{CG}$ after polynomially many applications of the tableau rules, which, as just discussed, is $\mathcal{Q}$-compatible.
    It must thus also be clash-free, because otherwise there were an element $\qelemg$ and variable $x$, with $(x\col\bot)\in\consis{\qelemg}$, and thus there is $(\qelemg,\qelem)\in\crel$ and $\cfuncf{\qelemg,\qelem}$ such that $(\cfunc{\qelemg,\qelem}{x}\col\bot)\in\consisq{\qelem}$, which is a contradiction because $\mathcal{Q}$ is a quasi-model.
    It is not hard to show that $\mathbf{CG}$ is also coherent, whence we can conclude that $\mathbf{CG}$ is a globally complete, coherent, and clash-free completion graph for $\kb$.
\end{proof}

Together with the well-known \textsc{PTime}-hardness of the satisfiability problem in (standpoint-free) $\EL$, we have therefore established \textsc{PTime}-completeness of $\SEL$ and exhibited a worst-case optimal algorithm for it.

\section{Intractable Extensions}\label{sec:intractable-extensions}

While the shown tractability of reasoning in $\SEL$ is good news, one might ask if one could include more modelling features or relax certain side conditions and still preserve tractability.
This section shows that tractability can be easily lost (at least under standard complexity-theoretic assumptions).

\subsubsection{Empty standpoints}
While it may make sense on a philosophical level, one might wonder whether the constraint that $\sigma(\st)$ needs to be nonempty for every \mbox{$\st \in \SC$} has an impact on tractability.
In fact, dropping this constraint, obtaining a logic $\SEL^\emptyset$ with the same syntax but modified semantics, would increase expressivity (standpoint non-emptiness could still be enforced in $\SEL^{\smash\emptyset}$ by asserting \mbox{$\top \sqsubseteq \standds\! \top$} for every \mbox{$\st\in\SC$}).
However, satisfiability in $\SEL^{\smash\emptyset}$ turns out to be \NP-hard, even when disallowing usage of concept and role names entirely.
The key insight that both $\standd{s}\! \top$ and its negation $\standb{s} \bot$ can be expressed as $\SEL^{\smash\emptyset}$ concepts gives rise to the following reduction from 3SAT:
Assume an instance \mbox{$\phi= \bigvee\!\clause_1 \land \ldots \land \bigvee\!\clause_n$} of 3SAT containing $n$ clauses (i.e., disjunctions of literals) $\bigvee\!\clause_j$  over the propositional variables \mbox{$P = \set{ p_1,\ldots, p_k }$}.
We note that $\phi$ is equivalent to \mbox{$(\bigwedge\!\overline{\clause}_1 \to \mathbf{false}) \wedge \ldots \wedge  (\bigwedge \!\overline{\clause}_k \to \mathbf{false})$}, where $\overline{\clause}_j$ is obtained from ${\clause}_j$ by replacing every literal by its negated version.
Let now $\{\st_1,\ldots,\st_k\}$ be a set of standpoint names and, for any literal $\ell$ over $P$, define
\begin{linenomath*}
    $$
        L_\ell = \begin{cases}
            \standd{s_i}\! \top & \text{ if } \ell = p_i,      \\
            \standb{s_i} \bot   & \text{ if } \ell = \neg p_i. \\
        \end{cases}
    $$
\end{linenomath*}
Then, $\phi$ is satisfiable iff the following $\SEL^\emptyset$ knowledge base is:
\begin{linenomath*}
    $$
        \KB_\phi = \big\{  L_{\ell} \sqcap L_{\ell'} \sqcap L_{\ell''} \sqsubseteq \bot  \mid  \{\ell,\ell',\ell''\} =  \overline{\clause}_j, 1 \leq j \leq n \big\}.
    $$
\end{linenomath*}

\subsubsection{Rigid roles}
$\SEL$ allows to globally enforce rigidity of specific concepts through axioms of the shape \mbox{$A \sqsubseteq \standball A$}.
(This is in contrast to e.g.\ $\mathbf{K}_n\mathcal{ALC}$, where rigidity of concepts can only be expressed \emph{relative} to a given formula.)
In a similar manner, rigidity of roles (i.e., the interpretation of certain distinguished roles being the same throughout all precisifications) would represent a desirable modelling feature.
Other modal extensions of DLs have easily been shown to even become undecidable when this feature is permitted, but as $\SEL$ uses a much simplified semantics on the modal dimension, these results do not carry over to $\SEL$.
Yet, we will show that just the presence of \emph{one} distinguished rigid role $\dot{R}$ causes $\SEL$ to become intractable as satisfiability turns \coNP-hard.
To demonstrate this, we reduce 3SAT to KB unsatisfiability.
\begin{shortonly}
    As above, assume an instance $\phi= \bigvee\!\clause_1 \land \ldots \land \bigvee\!\clause_n$ of 3SAT over propositional variables \mbox{$P = \set{ p_1,\ldots, p_k }$}.
    Then $\phi$ is satisfiable iff the following $\SEL$ TBox is unsatisfiable (with all axioms instantiated for $1 \leq i \leq k$):
    \begin{linenomath*}
        $$
            \begin{array}{@{}r@{\ }l@{}r@{\ }l}
                \top    & \sqsubseteq \exists T. L_0                                             & L_k                                                   & \sqsubseteq \standdall\mathit{S}             \\
                L_{i-1} & \sqsubseteq \exists \dot{R}.\standball(L_{i} \sqcap T_{p_i} )          & \exists \dot{R}.(T_{p_i} \sqcap \mathit{S})           & \sqsubseteq (T_{p_i} \sqcap \mathit{S})      \\
                L_{i-1} & \sqsubseteq \exists \dot{R}.\standball (L_{i} \sqcap T_{\neg p_i})     & \exists \dot{R}.(T_{\neg p_i} \sqcap \mathit{S})      & \sqsubseteq (T_{\neg p_i} \sqcap \mathit{S}) \\
                T_\ell  & \sqsubseteq T_{\clause_j} \text{\ \ for all } \ell \in \clause_j \quad & T_{\clause_1} \! \sqcap \ldots \sqcap T_{\clause_n}\! & \sqsubseteq \bot
            \end{array}
        $$
    \end{linenomath*}
\end{shortonly}
\begin{supplementary}

\end{supplementary}

\subsubsection{Nominal Concepts}
\newcommand{\ELO}{\mathcal{E\hspace{-1pt}LO}}
\newcommand{\SELO}{\mathbb{S}_\ELO}
Nominal concepts are a modelling feature widely used in ontology languages.
For an individual $o$, the nominal concept $\{o\}$ refers to the singleton set $\{o^\mathcal{I}\}$.
Let $\ELO$ denote $\EL$ extended by nominal concepts.
Several formalisms subsuming $\ELO$, including OWL~2~EL, are known to allow for tractable reasoning \cite{Baader05ELenvelope,Krotzsch10OWLELreasoning}.
However, in the presence of standpoints, nominals prove to be detrimental for the reasoning complexity: satisfiability of $\SELO$ TBoxes using just one nominal concept $\{o\}$ turns out to be \ExpTime-hard and thus definitely harder than for $\SEL$. This can be shown by a \PTime reduction of satisfiability for Horn-$\mathcal{ALC}$ TBoxes (which is known to be \ExpTime-complete \cite{Krotzsch13HornDLs}) to satisfiability of $\SELO$ TBoxes with just one standpoint (the global one) and one nominal concept $\{o\}$.
To this end, recall that any Horn-$\mathcal{ALC}$ TBox can be normalised in \PTime to consist of only axioms of the following shapes:
\begin{linenomath*}
    $$
        A \sqsubseteq B \quad\! A \sqcap B \sqsubseteq C \quad\! \exists R.A \sqsubseteq B \quad\! A \sqsubseteq \exists R.B \quad\! A \sqsubseteq \forall R.B
    $$
\end{linenomath*}
where $A$ and $B$ can be concept names, $\top$, or $\bot$.
From a normalised Horn-$\mathcal{ALC}$ TBox $\mathcal{T}$, we obtain the target $\SELO$ TBox $\mathcal{T}'$ by
(i) declaring every original concept name as rigid via the axiom $A \sqsubseteq \standball A$ as well as
(ii) replacing every axiom of the shape $A \sqsubseteq \exists R.B$ by the axiom
\begin{linenomath*}
    $$A \sqsubseteq \standdall ((\exists \mathit{Src}.\{o\}) \sqcap (\exists R.(B \sqcap \exists \mathit{Tgt}.\{o\})))$$
\end{linenomath*}
(introducing two fresh role names $\mathit{Src}$ and $\mathit{Tgt}$), and replacing every axiom of the shape $A \sqsubseteq \forall r.B$ by the two axioms
\begin{linenomath*}
    $$A \sqcap \exists R.\top \sqsubseteq (\exists \mathit{Src}.(\{o\} \sqcap \tilde{B})) \quad\text{and}\quad \exists \mathit{Tgt}.\tilde{B} \sqsubseteq B,$$
\end{linenomath*}
introducing a copy $\tilde{A}$ for every original concept name $A$.
%
With this polytime translation, satisfiability of the Horn-$\mathcal{ALC}$ TBox $\mathcal{T}$ and the $\SELO$ TBox $\mathcal{T}'$ coincide.

\begin{supplementary}
    \begin{proposition}
        $\mathcal{T}$ and $\mathcal{T}'$ are equisatisfiable.
    \end{proposition}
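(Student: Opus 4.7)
The plan is to establish equisatisfiability by constructing a standpoint model from a Horn-$\mathcal{ALC}$ model and vice versa, in both cases exploiting the rigidity of original concept names enforced by the added axioms $A \sqsubseteq \standball A$.

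For the forward direction, starting from $\mathcal{I} \models \mathcal{T}$, I would build a standpoint structure $\dlstruct$ over the domain $\Delta^{\mathcal{I}} \cup \{o^{\dlstruct}\}$, with a fresh element $o^{\dlstruct}$ interpreting the nominal. For every triple $(x,R,y)$ with $(x,y) \in R^{\mathcal{I}}$, I would introduce a dedicated precisification $\pi_{x,R,y}$ in which: all original concept names keep their $\mathcal{I}$-interpretations (which yields rigidity); the role $R$ is the singleton $\{(x,y)\}$ and every other original role is empty; $\mathit{Src}^{\gamma(\pi_{x,R,y})} = \{(x,o^{\dlstruct})\}$ and $\mathit{Tgt}^{\gamma(\pi_{x,R,y})} = \{(y,o^{\dlstruct})\}$; and each copy $\tilde{B}$ contains $o^{\dlstruct}$ precisely when some universal axiom $A \sqsubseteq \forall R.B$ in $\mathcal{T}$ satisfies $x \in A^{\mathcal{I}}$, and is empty otherwise. (A dummy precisification with empty roles can be added if $\mathcal{I}$ has no role edges at all, to ensure $\Precs \neq \emptyset$.) Verifying $\dlstruct \models \mathcal{T}'$ then reduces to a routine case analysis over the five axiom shapes: the inherited $\EL$-style axioms transfer by rigidity together with the thinned role interpretations; each $\standdall$-existential is witnessed by the $\pi_{x,R,y}$ chosen from the original $\exists R.B$ witness in $\mathcal{I}$; and the $\tilde{B}$-marking is calibrated so that whenever the $\exists \mathit{Tgt}.\tilde{B} \sqsubseteq B$ axiom fires on $y$ in some $\pi_{x,R,y}$, the original universal axiom in $\mathcal{I}$ already guarantees $y \in B^{\mathcal{I}}$.

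For the backward direction, starting from $\dlstruct \models \mathcal{T}'$, I would define $\mathcal{I}$ over $\Delta^{\mathcal{I}} = \Delta^{\dlstruct}$, setting $A^{\mathcal{I}} \eqdef A^{\gamma(\pi_0)}$ for an arbitrary but fixed $\pi_0 \in \Precs$ (well-defined thanks to rigidity). The crucial choice is the role interpretation
\[ R^{\mathcal{I}} \eqdef \{(x,y) \mid (x,y) \in R^{\gamma(\pi)} \text{ and } (y, o^{\dlstruct}) \in \mathit{Tgt}^{\gamma(\pi)} \text{ for some } \pi \in \Precs\}. \]
Then each Horn-$\mathcal{ALC}$ axiom shape follows: the three $\EL$-style axioms transfer directly by rigidity; for the $\standdall$-translation of $A \sqsubseteq \exists R.B$, any witness precisification by construction pairs the $R$-edge with the required $\mathit{Tgt}$-to-$o^{\dlstruct}$ edge, so the edge is included in $R^{\mathcal{I}}$ and rigidity passes $B$ back to $\mathcal{I}$; and for $A \sqsubseteq \forall R.B$, given $x \in A^{\mathcal{I}}$ and $(x,y) \in R^{\mathcal{I}}$, the witnessing precisification $\pi$ supplies both the $R$-edge and the $\mathit{Tgt}$-to-$o^{\dlstruct}$ edge, which sequentially force $o^{\dlstruct} \in \tilde{B}^{\gamma(\pi)}$ via the $\exists \mathit{Src}$ axiom and then $y \in B^{\gamma(\pi)} = B^{\mathcal{I}}$ via the $\exists \mathit{Tgt}.\tilde{B} \sqsubseteq B$ axiom.

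The main obstacle will be this tailored role definition. A naive choice like $R^{\mathcal{I}} \eqdef \bigcup_\pi R^{\gamma(\pi)}$ would fail, since there may be $R$-edges present in some precisification without any accompanying $\mathit{Tgt}$-to-$o^{\dlstruct}$ edge, and for such unmarked edges the translation offers no mechanism to propagate the consequences of a $\forall R.B$ axiom. Restricting $R^{\mathcal{I}}$ to the marked edges repairs this while losing no existential obligations, because the $\standdall$-translation of existentials always co-introduces the required $\mathit{Tgt}$-to-$o^{\dlstruct}$ edge together with every witness $R$-edge.
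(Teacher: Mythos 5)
Your two constructions are essentially the ones in the paper: in the forward direction you build one precisification per role edge, thin each role to a singleton, attach $\mathit{Src}$- and $\mathit{Tgt}$-pointers to the nominal, and calibrate the copies $\tilde{B}$ so that $\exists\mathit{Tgt}.\tilde{B}\sqsubseteq B$ only fires where the original $\forall$-axiom already guarantees $B$; in the backward direction you keep the (rigid) concept interpretations and collect exactly the $\mathit{Tgt}$-marked role edges. The backward direction is correct as written --- your observation that the naive union of roles would fail and that the $\mathit{Tgt}$-marking is what lets the $\forall$-axioms propagate is precisely the point of the construction, and dropping the paper's additional $\mathit{Src}$-side condition on collected edges is harmless, since the $\mathit{Src}$-edge is forced by the translated axioms wherever it is needed.

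The one step that fails is interpreting the nominal by a \emph{fresh} domain element $o^{\dlstruct}$ that belongs to no concept and has no outgoing role edges. The normal form for Horn-$\mathcal{ALC}$ explicitly allows $\top$ on the left-hand side of axioms, so $\mathcal{T}$ may contain, e.g., $\top \sqsubseteq B$ or $\top \sqsubseteq \exists R.B$; the latter is translated into $\top \sqsubseteq \standdall\bigl((\exists \mathit{Src}.\{o\}) \sqcap (\exists R.(B \sqcap \exists \mathit{Tgt}.\{o\}))\bigr)$, which your fresh element violates in every precisification (it has no $R$-successor anywhere), and the former already fails because $o^{\dlstruct}$ lies in no $B^{\mathcal{I}}$. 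The paper avoids this by letting $o$ denote an arbitrary but fixed element $\delta_o$ of the original domain $\Dom$, which inherits all concept memberships and existential witnesses from $\mathcal{I}$; your $\tilde{B}$- and $\mathit{Src}$/$\mathit{Tgt}$-definitions go through unchanged under that choice. The repair is a one-line change, but as stated your forward construction does not yield a model of $\mathcal{T}'$ for all normalised inputs.
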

    \begin{proof}
        We show equisatisfiability of $\mathcal{T}$ and $\mathcal{T}'$ by providing constructions to obtain a model of $\mathcal{T}$ from a model of $\mathcal{T}'$ and vice versa.
        Let $\mathcal{I}=(\Dom,\cdot^\mathcal{I})$ be a model of $\mathcal{T}$. Then we construct a model
        $\dlstruct= \tuple{\Dom, \Precs, \sigma, \gamma}$ of $\mathcal{T}'$ as follows:
        \begin{itemize}
            \item $\Precs = \Roles \times \Dom \times \Dom$ with $\sigma(*) = \Precs$,
            \item pick one arbitrary but fixed $\delta_o \in \Dom$ and let $o^{\gamma(\pi)}=\delta_o$ for all $\pi \in \Precs$,
            \item $A^{\gamma(\pi)}=A^\mathcal{I}$ for all $\pi \in \Precs$, whereas
                  $\tilde{A}^{\gamma((r,\delta_1,\delta_2))}= \{ \delta_o \mid \delta_2 \in A^\mathcal{I}\}$ for $A \in \Concepts$,
            \item $r^{\gamma((r,\delta_1,\delta_2))}=\{(\delta_1,\delta_2)\} \cap r^\mathcal{I}$, while
                  $src^{\gamma((r,\delta_1,\delta_2))}=\{(\delta_1,\delta_o)\}$ and
                  $tgt^{\gamma((r,\delta_1,\delta_2))}=\{(\delta_2,\delta_o)\}$.
        \end{itemize}

        \medskip

        For the other direction, let $\dlstruct= \tuple{\Dom, \Precs, \sigma, \gamma}$ be a model of $\mathcal{T}'$. Then we construct a model $\mathcal{I}=(\Dom,\cdot^\mathcal{I})$ of $\mathcal{T}$ as follows:
        \begin{itemize}
            \item $A^\mathcal{I} = A^{\gamma(\pi)}$ for any $\pi \in \Precs$ (the choice is irrelevant because all $A$ are rigid)
            \item $r^\mathcal{I} = \bigcup_{\pi \in \Precs}\{(\delta_1,\delta_2) \in r^{\gamma(\pi)} \mid (\delta_1,o^{\gamma(\pi)}) \in  src^{\gamma(\pi)},\ (\delta_2,o^{\gamma(\pi)}) \in tgt^{\gamma(\pi)}\}$
        \end{itemize}
    \end{proof}
\end{supplementary}

\section{Conclusion and Future Work}\label{sec:conclusion}

In this paper we introduced Standpoint $\EL$, a new, lightweight member of the emerging family of standpoint logics. We described the new modelling and reasoning capabilities it brings to large-scale ontology management
and established a \textsc{PTime} (and thus worst-case optimal) tableau-based decision procedure for standard reasoning tasks. We also demonstrated that certain extensions of $\SEL$, which would be desirable from a expressivity point of view, inevitably come with a loss of tractability (sometimes under the assumption $\textsc{P} \neq \textsc{NP}$).

Yet several modelling features can be accommodated into $\SEL$ without endangering tractability.
For instance, from a usability perspective, it would seem very advantageous if not just single axioms, but whole axiom sets (up to whole knowledge bases) could be preceded by standpoint modalities. By definition, an axiom of the type $\standbs\kb$ can be equivalently rewritten into the axiom set $\{\standbs\phi \mid \phi \in \kb\}$. While something alike is not immediately possible for axioms of the type $\standds\kb$, our normalization rule for diamond-preceded axioms can be lifted and thus $\standds\kb$ can be rewritten to $\standb{s'}\kb$ (and further to $\{\standb{s'}\phi \mid \phi \in \kb\}$) upon introducing a fresh standpoint name $\sp$ and asserting $\sp \preceq \st$. Thus standpoint-modality-annotated knowledge bases come essentially for free in $\SEL$. In fact, we already made use of this modelling feature in Axiom~\ref{formula:instance-patient1} and Axiom~\ref{formula:instance-tumour} of our initial example.

Moreover, we are confident that, as opposed to nominal concepts, other modelling features of OWL~2~EL can be added to $\SEL$ without harming tractability. These include complex role inclusions (also called role-chain axioms) such as $\pred{FindingSite} \circ \pred{PartOf} \sqsubseteq \pred{FindingSite}$, and the self-concept as in $ \pred{ApoptoticCell} \sqsubseteq \exists \pred{Destroys}.\mathsf{Self}$. It also seems plausible that the sharpening statements can be extended to incorporate intersection and disjointness of standpoints.

Beyond exploring the tractability boundaries, next endeavours include to investigate feasible strategies for developing a $\SEL$ reasoner. Options include
\begin{itemize}
    \item to implement our tableau algorithm from scratch or by modifying existing open-source tableaux systems,
    \item to design a deduction calculus over normalised axioms that can be translated into a datalog program, akin to the approach of \citeauthor{Krotzsch10OWLELreasoning} [\citeyear{Krotzsch10OWLELreasoning}], then utilizing a state-of-the-art datalog engine like VLog \cite{vlog16}, or
    \item to find a reduction to reasoning in standpoint-free (\textsc{PTime} extensions of) $\EL$ that is supported by existing reasoners (such as ELK \cite{KazakovKS14} or Snorocket \cite{Metke-JimenezL13}).
\end{itemize}
With one or several reasoners in place, appropriate experiments will be designed and conducted to assess practical feasibility and scalability.

Beyond the $\EL$ family, further popular and computationally lightweight formalisms exist, such as the tractable profiles OWL~2~RL and OWL~2~QL \cite{owl2-profiles}. It would be interesting to investigate options to extend these by standpoint reasoning without sacrificing tractability. More generally, we intend to research the effect of adding standpoints to KR languages -- light- or heavyweight -- in terms of computational properties and expressivity as well as avenues for implementing efficient reasoners for them.


\bibliographystyle{named}
\bibliography{bib/references}
\end{document}